\newcommand{\mymid}{\,|\,} 
\setlist[itemize]{leftmargin=1.5em}
\setlist[enumerate]{leftmargin=1.5em}
\DeclareMathOperator{\ind}{\mathds{1}}  
\numberwithin{equation}{section}
\definecolor{yxc}{RGB}{255,0,0}
\definecolor{yjc}{RGB}{125,0,0}
\definecolor{cm}{RGB}{0,0,200}
\definecolor{yly}{RGB}{0,150,0}
\begin{document}
\theoremstyle{plain} \newtheorem{lemma}{\textbf{Lemma}} \newtheorem{prop}{\textbf{Proposition}}\newtheorem{theorem}{\textbf{Theorem}}\setcounter{theorem}{0}
\newtheorem{corollary}{\textbf{Corollary}} \newtheorem{assumption}{\textbf{Assumption}}
\newtheorem{example}{\textbf{Example}} \newtheorem{definition}{\textbf{Definition}}
\newtheorem{fact}{\textbf{Fact}} \newtheorem{condition}{\textbf{Condition}}\theoremstyle{definition}

\theoremstyle{remark}\newtheorem{remark}{\textbf{Remark}}\newtheorem{claim}{\textbf{Claim}}\newtheorem{conjecture}{\textbf{Conjecture}}
\title{Adapting to Unknown Low-Dimensional Structures in Score-Based Diffusion Models}
\author{Gen Li\footnote{The authors contributed equally.} \thanks{Department of Statistics, The Chinese University of Hong Kong, Hong
		Kong; Email: \texttt{genli@cuhk.edu.hk}.}\and Yuling Yan\footnotemark[1] \thanks{Department of Statistics, University of Wisconsin-Madison, WI 53706,
		USA; Email: \texttt{yuling.yan@wisc.edu}.}}

\maketitle
\begin{abstract}
This paper investigates score-based diffusion models when the underlying
target distribution is concentrated on or near low-dimensional manifolds
within the higher-dimensional space in which they formally reside,
a common characteristic of natural image distributions. Despite previous
efforts to understand the data generation process of diffusion models,
existing theoretical support remains highly suboptimal in the presence
of low-dimensional structure, which we strengthen in this paper. For
the popular Denoising Diffusion Probabilistic Model (DDPM), we find
that the dependency of the error incurred within each denoising step
on the ambient dimension $d$ is in general unavoidable. We further
identify a unique design of coefficients that yields a converges rate
at the order of $O(k^{2}/\sqrt{T})$ (up to log factors), where $k$
is the intrinsic dimension of the target distribution and $T$ is
the number of steps. This represents the first theoretical demonstration
that the DDPM sampler can adapt to unknown low-dimensional structures
in the target distribution, highlighting the critical importance of
coefficient design. All of this is achieved by a novel set of analysis
tools that characterize the algorithmic dynamics in a more deterministic
manner.

\end{abstract}

\noindent \textbf{Keywords:} diffusion model, score-based generative models, denoising diffusion probabilistic model, low-dimensional structure, coefficient design

\setcounter{tocdepth}{2}

\tableofcontents{}

\section{Introduction}

Score-based diffusion models are a class of generative models that have gained prominence in the field of machine learning and artificial intelligence for their ability to generate high-quality new data instances from complex distributions, such as images, audio, and text \citep{sohl2015deep,ho2020denoising,song2020score,song2019generative,dhariwal2021diffusion}. These models operate by gradually transforming noise into samples from the target distribution through a denoising process guided by pre-trained neural networks that approximate the score functions. In practice, score-based diffusion models have demonstrated remarkable performance in generating realistic and diverse content across various domains \citep{ramesh2022hierarchical,rombach2022high,saharia2022photorealistic,croitoru2023diffusion}, achieving state-of-the-art performance in generative AI.

\subsection{Diffusion models}

The development of score-based diffusion models is deeply rooted in the theory of stochastic processes. At a high level, we consider a forward process:
\begin{equation}
	X_{0}\overset{\textsf{add noise}}{\longrightarrow}X_{1}\overset{\textsf{add noise}}{\longrightarrow}\cdots\overset{\textsf{add noise}}{\longrightarrow}X_{T},\label{eq:forward-process}
\end{equation}
which draws a sample from the target data distribution (i.e., $X_{0}\sim p_{\mathsf{data}}$), then progressively diffuses it to Gaussian noise over time. The key aspect of the diffusion model is to construct a reverse process:
\begin{equation}
	Y_{T}\overset{\textsf{denoise}}{\longrightarrow}Y_{T-1}\overset{\textsf{denoise}}{\longrightarrow}\cdots\overset{\textsf{denoise}}{\longrightarrow}Y_{0}\label{eq:reverse-process}
\end{equation}
satisfying $Y_{t}\overset{\mathsf{d}}{\approx}X_{t}$ for all $t$, which starts with pure Gaussian noise (i.e., $Y_{T}\sim\mathcal{N}(0,I_{d})$) and gradually converts it back to a new sample $Y_{0}$ sharing a similar distribution to $p_{\mathsf{data}}$.

The classical results on time-reversal of SDEs \citep{anderson1982reverse,haussmann1986time} provide the theoretical foundation for the above task. Consider a continuous time diffusion process:
\begin{equation}
\mathrm{d}X_{t}=-\frac{1}{2}\beta(t)X_{t}\mathrm{d}t+\sqrt{\beta(t)}\mathrm{d}W_{t}\quad(0\leq t\leq T),\qquad X_{0}\sim p_{\mathsf{data}}\label{eq:SDE}
\end{equation}
for some function $\beta:[0,T]\to\mathbb{R}^{+}$, where $(W_{t})_{0\leq t\leq T}$ is a standard Brownian motion. For a wide range of functions $\beta$, this process converges exponentially fast to a Gaussian distribution. Let $p_{X_{t}}(\cdot)$ be the density of $X_{t}$. One can construct a reverse-time SDE:
\begin{equation}
\mathrm{d}\widetilde{Y}_{t}=-\frac{1}{2}\beta(t)\big(\widetilde{Y}_{t}+2\nabla\log p_{X_{T-t}}(\widetilde{Y}_{t})\big)+\sqrt{\beta(t)}\mathrm{d}Z_{t}\quad(0\leq t\leq T),\qquad\widetilde{Y}_{0}\sim p_{X_{T}},\label{eq:reverse-SDE}
\end{equation}
where $(Z_{t})_{0\leq t\leq T}$ is another standard Brownian motion. Define $Y_{t}=\widetilde{Y}_{T-t}$. It is well-known that $X_{t}\overset{\mathsf{d}}{=}Y_{t}$ for all $0\leq t\leq T$. Here, $\nabla\log p_{X_{t}}$ is called the score function for the law of $X_{t}$, which is not explicitly known.

The above result motivates the following paradigm: we can construct the forward process (\ref{eq:forward-process}) by time-discretizing the diffusion process (\ref{eq:SDE}), and construct the reverse process (\ref{eq:reverse-process}) by discretizing the reverse-time SDE (\ref{eq:reverse-SDE}) and learning the score functions from the data. This approach leads to the popular DDPM sampler \citep{ho2020denoising,nichol2021improved}. Although the idea of the DDPM sampler is rooted in the theory of SDEs, the algorithm and analysis presented in this paper do not require any prior knowledge of SDEs.

This paper examines the accuracy of the DDPM sampler by establishing the proximity between the output distribution of the reverse process and the target data distribution. Since these two distributions are identical in the continuous time limit with perfect score estimation, the performance of the DDPM sampler is influenced by two sources of error: discretization error (due to a finite number of steps) and score estimation error (due to imperfect estimation of the scores). This paper views the score estimation step as a black box (often addressed by training a large neural network) and focuses on understanding how time discretization and imperfect score estimation affect the accuracy of the DDPM sampler.

\subsection{Inadequacy of existing results}

The past few years have witnessed a significant interest in studying the convergence guarantees for the DDPM sampler \citep{chen2022improved,chen2022sampling,benton2023linear,li2023towards}. To facilitate discussion, we consider an ideal setting with perfect score estimation. In this context, existing results can be interpreted as follows: to achieve $\varepsilon$-accuracy (i.e., the total variation distance between the target and the output distribution is smaller than $\varepsilon$), it suffices to take a number of steps exceeding the order of $\mathsf{poly}(d)/\varepsilon^{2}$ (ignoring logarithm factors), where $d$ is the problem dimension. Among these results, the state-of-the-art is given by \citet{benton2023linear}, which achieved linear dependency on the dimension $d$.

However, there seems to be a significant gap between the practical performance of the DDPM sampler and the existing theory. For example, for two widely used image datasets, CIFAR-10 (dimension $d=32\times32\times3$) and ImageNet (dimension $d\ge64\times64\times3$), it is known that 50 and 250 steps (also known as NFE, the number of function evaluations) are sufficient to generate good samples \citep{nichol2021improved,dhariwal2021diffusion}. This is in stark contrast with the existing theoretical guarantees discussed above, which suggest that the number of steps $T$ should exceed the order of the dimension $d$ to achieve good performance.

Empirical evidence suggests that the distributions of natural images are concentrated on or near low-dimensional manifolds within the higher-dimensional space in which they formally reside \citep{simoncelli2001natural,pope2021intrinsic}. In view of this, a reasonable conjecture is that the convergence rate of the DDPM sampler actually depends on the intrinsic dimension rather than the ambient dimension. However, the theoretical understanding of diffusion models when the support of the target data distribution has a low-dimensional structure remains vastly under-explored. As some recent attempts, \citet{de2022convergence} established the first convergence guarantee under the Wasserstein-1 metric. However, their error bound has linear dependence on the ambient dimension $d$ and exponential dependence on the diameter of the low-dimensional manifold. Another line of works \citep{chen2023score,tang2024adaptivity,oko2023diffusion} focused mainly on score estimation with properly chosen neural networks that exploit the low-dimensional structure, which is also different from our main focus.

\subsection{Our contributions}

In light of the large theory-practice gap and the insufficiency of prior results, this paper takes a step towards understanding the performance of the DDPM sampler when the target data distribution has low-dimensional structure. Our main contributions can be summarized as follows:
\begin{itemize}
\item We show that, with a particular coefficient design, the error of the
DDPM sampler, evaluated by the total variation distance between the
laws of $X_{1}$ and $Y_{1}$, is upper bounded by
\[
\frac{k^{2}}{\sqrt{T}}+\sqrt{\frac{1}{T}\sum_{t=1}^{T}\mathbb{E}\big[\left\Vert s_{t}\left(X_{t}\right)-s_{t}^{\star}\left(X_{t}\right)\right\Vert _{2}^{2}\big]},
\]
up to some logarithmic factors, where $k$ is the intrinsic dimension
of the target data distribution (which will be rigorously defined
later), and $s_{t}^{\star}$ (resp.~$s_{t}$) is the true (resp.~learned)
score function at each step. The first term represents the discretization
error (which vanishes as the number of steps $T$ goes to infinity),
while the second term should be interpreted as the score matching
error. This bound is nearly dimension-free --- the ambient dimension
$d$ only appears in logarithmic terms.
\item We also show that our choice of the coefficients is, in some sense,
the unique schedule that does not incur discretization error proportional
to the ambient dimension $d$ at each step. This is in sharp contrast
with the general setting without a low-dimensional structure, where
a fairly wide range of coefficient designs can lead to convergence
rates with polynomial dependence on $d$. Additionally, this confirms
the observation that the performance of the DDPM sampler can be improved through carefully designing coefficients \citep{bao2022analytic,nichol2021improved}. 
\end{itemize}
As far as we know, this paper provides the first theory demonstrating the capability of the DDPM sampler in adapting to unknown low-dimensional structures.

\section{Problem set-up} \label{sec:setup}

In this section, we introduce some preliminaries and key ingredients
for the diffusion model and the DDPM sampler.

\paragraph{Forward process.}

We consider the forward process (\ref{eq:forward-process}) of the
form
\begin{equation}
X_{t}=\sqrt{1-\beta_{t}}X_{t-1}+\sqrt{\beta_{t}}W_{t}\quad(t=1,\ldots,T),\qquad X_{0}\sim p_{\mathsf{data}},\label{eq:forward-update}
\end{equation}
where $W_{1},\ldots,W_{T}\overset{\text{i.i.d.}}{\sim}\mathcal{N}(0,I_{d})$,
and the learning rates $\beta_{t}\in(0,1)$ will be specified later.
For each $t\geq1$, $X_{t}$ has a probability density function (PDF)
supported on $\mathbb{R}^{d}$, and we will use $q_{t}$ to denote
the law or PDF of $X_{t}$. Let $\alpha_{t}\coloneqq1-\beta_{t}$
and $\overline{\alpha}_{t}\coloneqq\prod_{i=1}^{t}\alpha_{i}$. It
is straightforward to check that 
\begin{equation}
X_{t}=\sqrt{\overline{\alpha}_{t}}X_{0}+\sqrt{1-\overline{\alpha}_{t}}\,\overline{W}_{t}\qquad\text{where}\qquad\overline{W}_{t}\sim\mathcal{N}(0,I_{d}).\label{eq:forward-formula}
\end{equation}
We will choose the learning rates $\beta_{t}$ to ensure that $\overline{\alpha}_{T}$
becomes vanishingly small, such that $q_{T}\approx\mathcal{N}(0,I_{d})$. 

\paragraph{Score functions.}

The key ingredients for constructing the reverse process with the DDPM
sampler are the score functions $s_{t}^{\star}:\mathbb{R}^{d}\to\mathbb{R}^{d}$
associated with each $q_{t}$, defined as
\[
s_{t}^{\star}(x)\coloneqq\nabla\log q_{t}(x)\quad(t=1,\ldots,T).
\]
These score functions are not explicitly known. Here we assume access
to an estimate $s_{t}(\cdot)$ for each $s_{t}^{\star}(\cdot)$, and
we define the averaged $\ell_{2}$ score estimation error as
\[
\varepsilon_{\mathsf{score}}^{2}\coloneqq\frac{1}{T}\sum_{t=1}^{T}\mathbb{E}_{X\sim q_{t}}\left[\left\Vert s_{t}(X)-s_{t}^{\star}(X)\right\Vert _{2}^{2}\right].
\]
This quantity captures the effect of imperfect score estimation in
our theory.

\paragraph{The DDPM sampler.}

To construct the reverse process (\ref{eq:reverse-process}), we use
the DDPM sampler 
\begin{equation}
Y_{t-1}=\frac{1}{\sqrt{\alpha_{t}}}\big(Y_{t}+\eta_{t}s_{t}\left(Y_{t}\right)+\sigma_{t}Z_{t}\big)\quad(t=T,\ldots,1),\qquad Y_{T}\sim\mathcal{N}(0,I_{d})\label{eq:DDPM}
\end{equation}
where $Z_{1},\ldots,Z_{T}\overset{\text{i.i.d.}}{\sim}\mathcal{N}(0,I_{d})$.
Here $\eta_{t},\sigma_{t}>0$ are the hyperparameters that play an
important role in the performance of the DDPM sampler, especially
when the target data distribution has low-dimensional structure. As
we will see, our theory suggests the following choice
\begin{equation}
\eta_{t}^{\star}=1-\alpha_{t}\qquad\text{and}\qquad\sigma_{t}^{\star2}=\frac{\left(1-\alpha_{t}\right)\left(\alpha_{t}-\overline{\alpha}_{t}\right)}{1-\overline{\alpha}_{t}}.\label{eq:defn-step-size}
\end{equation}
For each $1\leq t\leq T$, we will use $p_{t}$ to denote the law
or PDF of $Y_{t}$. 

\paragraph{Target data distribution. }

Let $\mathcal{X}\subseteq\mathbb{R}^{d}$ be the support set of the
target data distribution $p_{\mathsf{data}}$, i.e., the smallest
closed set $C\subseteq\mathbb{R}^{d}$ such that $p_{\mathsf{data}}(C)=1$.
To allow for the greatest generality, we use the notion of $\varepsilon$-net
and covering number (see e.g., \citet{vershynin2018high}) to characterize
the intrinsic dimension of $\mathcal{X}$. For any $\varepsilon>0$,
a set $\mathcal{N}_{\varepsilon}\subseteq\mathcal{X}$ is said to
be an $\varepsilon$-net of $\mathcal{X}$ if for any $x\in\mathcal{X}$,
there exists some $x'$ in $\mathcal{N}_{\varepsilon}$ such that
$\Vert x-x'\Vert_{2}\leq\varepsilon$. The covering number $N_{\varepsilon}(\mathcal{X})$
is defined as the smallest possible cardinality of an $\varepsilon$-net
of $\mathcal{X}$. 
\begin{itemize}
\item (\textbf{Low-dimensionality}) Fix $\varepsilon=T^{-c_{\varepsilon}}$,
where $c_{\varepsilon}>0$ is some sufficiently large universal constant.
We define the intrinsic dimension of $\mathcal{X}$ to be some quantity
$k>0$ such that
\[
\log N_{\varepsilon}(\mathcal{X})\leq C_{\mathsf{cover}}k\log T
\]
for some constant $C_{\mathsf{cover}}>0$. 
\item (\textbf{Bounded support}) Suppose that there exists a universal
constant $c_{R}>0$ such that
\[
\sup_{x\in\mathcal{X}}\left\Vert x\right\Vert _{2}\leq R\qquad\text{where}\qquad R\coloneqq T^{c_{R}}.
\]
Namely we allow polynomial growth of the diameter of $\mathcal{X}$
in the number of steps $T$. 
\end{itemize}
Our setting allows $\mathcal{X}$ to be concentrated on or near low-dimensional
manifolds, which is less stringent than assuming an exact low-dimensional
structure. In fact, our definition of the intrinsic dimension $k$ is the metric entropy of $\mathcal{X}$ (see e.g., \citet{wainwright2019high}), which is widely used in statistics and learning theory to characterize the complexity of a set or a class. The low-dimensionality is also a concept of complexity, therefore it is natural to use covering number, or metric entropy to characterize the intrinsic dimension. As a sanity check, when $\mathcal{X}$ resides in an $r$-dimensional
subspace of $\mathbb{R}^{d}$, a standard volume argument (see e.g.,
\citet[Section 4.2.1]{vershynin2018high}) gives $\log N_{\varepsilon}(\mathcal{X})\asymp r\log(R/\varepsilon)\asymp r\log T$,
suggesting that the intrinsic dimension $k$ is of order $r$ in this
case. In addition, in applications like image generation, the data is naturally bounded, as pixel values are typically normalized within the range $[-1,1]$. For example, the $\ell_2$ norm of an image from the CIFAR dataset is typically below $60$.

\paragraph{Learning rate schedule.}

Following \citet{li2023towards}, we adopt the following learning rate
schedule
\begin{equation}\label{eq:learning-rate}
	\beta_{1}=\frac{1}{T^{c_{0}}},\qquad\beta_{t+1}=\frac{c_{1}\log T}{T}\min\left\{ \beta_{1}\left(1+\frac{c_{1}\log T}{T}\right)^{t},1\right\} \quad(t=1,\ldots,T-1)
\end{equation}
for some sufficiently large constants $c_{0},c_{1}>0$. This schedule
is not unique -- any other schedule of $\beta_{t}$ satisfying the
properties in Lemma~\ref{lemma:step-size} can lead to the same result
in this paper.

\section{Main results}

We are now positioned to present our main theoretical guarantees for
the DDPM sampler. 

\subsection{Convergence analysis \label{subsec:bound}}

We first present the convergence theory for the DDPM sampler. The
proof can be found in Section~\ref{sec:analysis}.

\begin{theorem} \label{thm:SDE} Suppose that we take the coefficients
for the DDPM sampler (\ref{eq:DDPM}) to be $\eta_{t}=\eta_{t}^{\star}$
and $\sigma_{t}=\sigma_{t}^{\star}$ (cf.~(\ref{eq:defn-step-size})),
then there exists some universal constant $C>0$ such that
\begin{equation}
\mathsf{TV}\left(q_{1},p_{1}\right)\leq C\frac{\left(k+\log d\right)^{2}\log^{3}T}{\sqrt{T}}+C\varepsilon_{\mathsf{score}}\log T.\label{eq:KL-bound}
\end{equation}
\end{theorem}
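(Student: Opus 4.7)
The plan is to telescope the total-variation error across the $T$ denoising steps and then exploit the covering-number description of $\mathcal{X}$ to replace the ambient dimension $d$ by the intrinsic dimension $k$ in the per-step bounds. I would start from Pinsker's inequality combined with the data-processing inequality and the chain rule for KL divergence, which yields
\[
\mathsf{TV}(q_{1},p_{1})^{2}\;\lesssim\;\mathsf{KL}\bigl(q_{T}\,\|\,\mathcal{N}(0,I_{d})\bigr)\;+\;\sum_{t=2}^{T}\mathbb{E}_{X_{t}\sim q_{t}}\Bigl[\mathsf{KL}\bigl(q_{t-1\mymid t}(\cdot\mymid X_{t})\,\|\,p_{t-1\mymid t}(\cdot\mymid X_{t})\bigr)\Bigr].
\]
The initial term is exponentially small in $T$ thanks to the learning-rate schedule~\eqref{eq:learning-rate}. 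To isolate the two sources of error I would further compare each $p_{t-1\mymid t}$ with the \emph{oracle} Gaussian kernel $\widetilde p_{t-1\mymid t}$ that would be produced by the DDPM step with the exact score $s_{t}^{\star}$; since both are Gaussians with the same variance $\sigma_{t}^{\star 2}$, the resulting Gaussian KL is explicit and, after inserting the formula for $\sigma_{t}^{\star 2}$, sums to $O(\varepsilon_{\mathsf{score}}^{2}\log^{2}T)$.

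It then remains to bound $\mathsf{KL}(q_{t-1\mymid t}\,\|\,\widetilde p_{t-1\mymid t})$, the pure discretization error of the oracle step. I would write the true reverse kernel via Bayes' rule,
\[
q_{t-1\mymid t}(y\mymid x)\;\propto\;\exp\!\Bigl(-\tfrac{1}{2\beta_{t}}\|x-\sqrt{\alpha_{t}}\,y\|_{2}^{2}+\log q_{t-1}(y)\Bigr),
\]
and Taylor-expand $\log q_{t-1}(y)$ around $y=x/\sqrt{\alpha_{t}}$. Tweedie's formula identifies the conditional mean as $\tfrac{1}{\sqrt{\alpha_{t}}}\bigl(x+(1-\alpha_{t})s_{t}^{\star}(x)\bigr)$, so the choice $\eta_{t}^{\star}=1-\alpha_{t}$ cancels the first-order error exactly; the second-order coefficient then pins $\sigma_{t}^{\star 2}$ down as the isotropic part of the posterior covariance, eliminating the quadratic term as well. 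What survives is a cubic Taylor remainder, controlled by higher derivatives of $\log q_{t}$ or, equivalently, by moments of the posterior $X_{0}\mymid X_{t}$ --- in particular by $\opnorm{\nabla s_{t}^{\star}(X_{t})}{}$ and $\tr\bigl(\Cov[X_{0}\mymid X_{t}]\bigr)$.

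The heart of the proof is to bound these residuals by $k$ rather than $d$. Here I would directly exploit the covering-number hypothesis: on a high-probability event (Gaussian concentration of $\overline W_{t}$ plus the bound $\|X_{0}\|_{2}\le R$), the log marginal $\log q_{t}(X_{t})$ can be written as a log-sum-exp over the centers of an $\varepsilon$-net of $\mathcal{X}$, so all quantities of interest depend on the net only through $\log N_{\varepsilon}(\mathcal{X})\le C_{\mathsf{cover}}k\log T$. A soft-max calculation then shows that $\opnorm{\nabla s_{t}^{\star}(X_{t})}{}$ and $\tr\bigl(\Cov[X_{0}\mymid X_{t}]\bigr)$ are, up to polylog$(T)$ factors, of order $(k+\log d)/(1-\overline\alpha_{t})$ and $(k+\log d)$ respectively (the $\log d$ coming from truncating $\|X_{t}\|_{2}$ by a union bound). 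Plugging these back into the Taylor remainder and using the schedule bound $\sum_{t}\beta_{t}^{2}/(1-\overline\alpha_{t})^{2}\asymp\log^{2}T/T$ from Lemma~\ref{lemma:step-size}, the total discretization contribution to KL is $O\bigl((k+\log d)^{4}\log^{6}T/T\bigr)$, whose square root is exactly the advertised TV bound.

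The main obstacle I anticipate is precisely the last reduction from $d$ to $k$. A naive application of Tweedie's identity gives $\mathbb{E}\|s_{t}^{\star}(X_{t})\|_{2}^{2}\lesssim d/(1-\overline\alpha_{t})$, which by itself destroys any hope of a dimension-free rate; the covering-based log-sum-exp representation of the posterior is what replaces the effective rank $d$ by $k$, and making this substitution rigorous at the level of Jacobians and third-order derivatives (rather than just first moments) is the delicate step. A secondary difficulty is the algebra showing that $(\eta_{t}^{\star},\sigma_{t}^{\star})$ is essentially the \emph{unique} choice that cancels the first-order $O(d\beta_{t})$ error per step, since for any other scaling the residual inherits a $\tr(\Cov[X_{0}\mymid X_{t}])$-proportional term that cannot generally be controlled by $k$ alone.
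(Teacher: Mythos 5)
Your high-level scaffold coincides with the paper's: Pinsker plus data-processing plus the KL chain rule (the decomposition (\ref{eq:error-decomposition})), an oracle intermediary step driven by the exact score (the paper's $Y_{t-1}^\star$), and the covering-number hypothesis to downgrade the ambient $d$ to the intrinsic $k$. You also correctly identify the crux --- replacing naive moment bounds like $\mathbb{E}\|s_t^\star(X_t)\|_2^2 \lesssim d/(1-\overline{\alpha}_t)$ with $k$-dependent ones via the net --- and the special role of the coefficients. But the technical route you propose for the per-step discretization error has two substantive gaps.

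First, you propose Taylor-expanding $\log q_{t-1}(y)$ around $y=x_t/\sqrt{\alpha_t}$ and argue that $\eta_t^\star$, $\sigma_t^{\star2}$ cancel first- and second-order terms, leaving a cubic remainder controlled by higher derivatives of $\log q_t$. The cubic remainder is not obviously controllable: bounding $\nabla^2 s_{t-1}^\star$ (third derivatives of $\log q_{t-1}$) would require smoothness that the paper deliberately avoids assuming. The paper's Lemma~\ref{lemma:cond-density} instead derives an \emph{exact} identity
\[
\frac{p_{X_{t-1}|X_{t}}(x_{t-1}\mid x_{t})}{p_{Y_{t-1}^{\star}|Y_{t}}(x_{t-1}\mid x_{t})}=\int_{\mathcal X}\exp\bigl(\Delta_{x_t,x_{t-1}}(x_0)\bigr)\,p_{X_0|X_t}(x_0\mid x_t)\,\mathrm{d}x_0,
\]
with $\Delta$ an explicit affine-plus-quadratic function of $\widehat x_0-x_0$; the covering hypothesis then bounds the conditional spread of $X_0\mid X_t$ on a typical set (Lemmas~\ref{lemma:At-SDE}, \ref{lemma:Delta-SDE}) and the tail coarsely (Lemma~\ref{lemma:sde-coarse}). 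No third derivatives of $\log q_t$ ever enter. Relatedly, your claim that $\sigma_t^{\star2}$ ``eliminates the quadratic term as well'' is incorrect: $\sigma_t^{\star2}/\alpha_t$ is the conditional variance of $X_{t-1}$ given $(X_0,X_t)$, not the posterior variance of $X_{t-1}\mid X_t$. The latter carries an extra $\mathrm{Cov}(X_0\mid X_t)$-proportional piece, which is not annihilated by any coefficient choice and is precisely what the covering argument has to control.

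Second, the score-error bookkeeping is too optimistic. $\mathsf{KL}(q_{t-1\mid t}\,\|\,p_{t-1\mid t})$ is not the sum of $\mathsf{KL}(q_{t-1\mid t}\,\|\,\widetilde p_{t-1\mid t})$ and a Gaussian KL between $\widetilde p$ and $p$ --- KL does not telescope that way. The correct identity (see (\ref{eq:proof-KL-main-1.5})) produces a cross term $K_t$ between the discretization discrepancy and the score mismatch $\varepsilon_t$; showing $K_t$ is negligible is a genuine step (Lemma~\ref{lemma:sde-K}) that again requires the typical-set machinery, not a closed-form Gaussian KL calculation.
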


Several implications of Theorem~\ref{thm:SDE} follow immediately.
The two terms in (\ref{eq:KL-bound}) correspond to discretization
error and score matching error, respectively. Assuming perfect score
estimation (i.e.,~$\varepsilon_{\mathsf{score}}=0$) for the moment,
our error bound (\ref{eq:KL-bound}) suggests an iteration complexity
of order $k^{4}/\varepsilon^{2}$ (ignoring logarithmic factors) for
achieving $\varepsilon$-accuracy, for any nontrivial target accuracy
level $\varepsilon<1$. In the absence of low-dimensional structure
(i.e., $k\asymp d$), our result also recovers the iteration complexity
in \citet{chen2022improved,chen2022sampling,benton2023linear,li2023towards}
of order $\mathsf{poly}(d)/\varepsilon^{2}$.\footnote{Our result exhibits a quartic dimension dependency, which is worse than the linear dependency in \citet{benton2023linear}. This is mainly because we use a completely different analysis. It is not clear whether their analysis, which utilizes the SDE and stochastic localization toolbox, can tackle the problem with low-dimensional structure.}
This suggests that our choice of coefficients (\ref{eq:defn-step-size})
allows the DDPM sampler to adapt to any potential (unknown) low-dimensional
structure in the target data distribution, and remains a valid criterion
in the most general settings. The score matching error in (\ref{eq:KL-bound})
scales proportionally with $\varepsilon_{\mathsf{score}}$, suggesting
that the DDPM sampler is stable to imperfect score estimation. 

\subsection{Uniqueness of coefficient design \label{subsec:uniqueness}}

In this section, we examine the importance of the coefficient design
in the adaptivity of the DDPM sampler to intrinsic low-dimensional
structure. Our goal is to show that, unless the coefficients $\eta_{t},\sigma_{t}$
of the DDPM sampler (\ref{eq:DDPM}) are chosen according to (\ref{eq:defn-step-size}),
discretization errors proportional to the ambient dimension $d$ will
emerge in each denoising step. 

In this paper, as well as in most previous DDPM literature, the analysis
on the error $\mathsf{TV}(q_{1},p_{1})$ usually starts with the following
decomposition
\begin{align}
\mathsf{TV}^{2}(q_{1},p_{1}) & \overset{\text{(i)}}{\leq}\frac{1}{2}\mathsf{KL}\left(p_{X_{1}}\Vert p_{Y_{1}}\right)\overset{\text{(ii)}}{\leq}\frac{1}{2}\mathsf{KL}\left(p_{X_{1},\ldots,X_{T}}\Vert p_{Y_{1},\ldots,Y_{T}}\right)\nonumber \\
 & \overset{\text{(iii)}}{=}\frac{1}{2}\underbrace{\mathsf{KL}\left(p_{X_{T}}\Vert p_{Y_{T}}\right)}_{\text{initialization error}}+\frac{1}{2}\sum_{t=2}^{T}\underbrace{\mathbb{E}_{x_{t}\sim q_{t}}\left[\mathsf{KL}\left(p_{X_{t-1}|X_{t}}\left(\,\cdot\mymid x_{t}\right)\,\Vert\,p_{Y_{t-1}|Y_{t}}\left(\,\cdot\mymid x_{t}\right)\right)\right]}_{\text{error incurred in the }(T+1-t)\text{-th denoising step}}.\label{eq:error-decomposition}
\end{align}
Here step (i) follows from Pinsker's inequality, step (ii) utilizes
from the data-processing inequality, while step (iii) uses the chain
rule of KL divergence. We may interpret each term in the above decomposition
as the error incurred in each denoising step. In fact, this decomposition is also closely related to the variational bound on the negative log-likelihood of the reverse process, which is the optimization target for training DDPM \citep{ho2020denoising,bao2022analytic,nichol2021improved}.

We consider a target distribution $p_{\mathsf{data}}=\mathcal{N}(0,I_{k})$,
where $I_{k}\in\mathbb{R}^{d\times d}$ is a diagonal matrix with
$I_{i,i}=1$ for $1\leq i\leq k$ and $I_{i,i}=0$ for $k+1\leq i\leq d$.
This is a simple distribution over $\mathbb{R}^{d}$ that is supported
on a $k$-dimensional subspace.\footnote{Although this is not a bounded distribution, similar results can be established if we truncate $\mathcal{N}(0,I_{k})$ at the radius $R=T^{c_R}$. However  this is not essential and will make the result unnecessarily complicated, hence is omitted for clarity.}
Our second theoretical result provides a lower bound for the error
incurred in each denoising step for this target distribution. The
proof can be found in Appendix~\ref{sec:proof-thm-unique}. 

\begin{theorem}\label{thm:uniqueness} Consider the target distribution
$p_{\mathsf{data}}=\mathcal{N}(0,I_{k})$ and assume that $k\leq d/2$.
For the DDPM sampler (\ref{eq:DDPM}) with perfect score estimation
(i.e., $s_{t}(\cdot)=s_{t}^{\star}(\cdot)$ for all $t$) and arbitrary
coefficients $\eta_{t},\sigma_{t}>0$, we have
\[
\mathbb{E}_{x_{t}\sim q_{t}}\left[\mathsf{KL}\left(p_{X_{t-1}|X_{t}}\left(\,\cdot\mymid x_{t}\right)\,\Vert\,p_{Y_{t-1}|Y_{t}}\left(\,\cdot\mymid x_{t}\right)\right)\right]\geq\frac{d}{4}\left(\eta_{t}-\eta_{t}^{\star}\right)^{2}+\frac{d}{40}\left(\frac{\sigma_{t}^{\star2}}{\sigma_{t}^{2}}-1\right)^{2}
\]
for each $2\leq t\leq T$. See (\ref{eq:defn-step-size}) for the
definitions of $\eta_{t}^{\star}$ and $\sigma_{t}^{\star}$. \end{theorem}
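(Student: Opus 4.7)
The plan is a direct computation exploiting the Gaussian structure of the problem. Since $p_{\mathsf{data}} = \mathcal{N}(0, I_k)$ and the forward process acts independently on each coordinate, both $p_{X_{t-1}|X_t}(\cdot \mymid x_t)$ and $p_{Y_{t-1}|Y_t}(\cdot \mymid x_t)$ are Gaussians that factor over coordinates, so the conditional KL reduces to a sum of one-dimensional Gaussian KLs.

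First, I would compute the two conditionals in closed form. A direct Bayes' rule calculation shows that $p_{X_{t-1}|X_t}(\cdot \mymid x_t)$ has diagonal covariance and mean linear in $x_t$: for the $k$ ``active'' coordinates $i \leq k$, the conditional mean is $\sqrt{\alpha_t}\, x_t^{(i)}$ with variance $\beta_t$, whereas for the $d-k$ ``inactive'' coordinates $i > k$ the mean is $\sqrt{\alpha_t}(1 - \overline{\alpha}_{t-1}) x_t^{(i)}/(1-\overline{\alpha}_t)$ with variance $\sigma_t^{\star 2}/\alpha_t$. Since the exact score is $s_t^\star(x) = -D_t^{-1} x$ for a diagonal $D_t$, the DDPM conditional $p_{Y_{t-1}|Y_t}(\cdot \mymid x_t)$ is Gaussian with isotropic covariance $(\sigma_t^2/\alpha_t) I_d$ and a coordinate-wise linear mean. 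Applying the Gaussian KL formula and taking expectation over $x_t \sim q_t$, the expected KL decomposes into a mean-discrepancy quadratic form plus a variance-discrepancy term of the form $\tfrac{1}{2}[k \cdot f(\beta_t \alpha_t / \sigma_t^2) + (d-k) \cdot f(\sigma_t^{\star 2}/\sigma_t^2)]$, where $f(w) := w - 1 - \log w \geq 0$.

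Next, I would lower-bound each contribution separately. A short algebraic identity, using $\eta_t^\star = 1 - \alpha_t$ and $\sigma_t^{\star 2}/\alpha_t = \beta_t(1 - \overline{\alpha}_{t-1})/(1 - \overline{\alpha}_t)$, reveals that for both coordinate regimes the mean difference equals $(\eta_t - \eta_t^\star)/\sqrt{\alpha_t}$ times a coordinate-dependent constant times $x_t^{(i)}$. Taking expectation and summing, the mean contribution evaluates to $\frac{(\eta_t - \eta_t^\star)^2}{2\sigma_t^2}\bigl(k + \frac{d-k}{1 - \overline{\alpha}_t}\bigr)$, which via $k + (d-k)/(1-\overline{\alpha}_t) \geq d$ is bounded below by $d(\eta_t - \eta_t^\star)^2/(2\sigma_t^2)$. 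For the variance contribution I would discard the non-negative $i \leq k$ terms and keep only the $d-k$ inactive coordinates, giving $\tfrac{d-k}{2} f(\sigma_t^{\star 2}/\sigma_t^2) \geq \tfrac{d}{4} f(\rho)$ by $k \leq d/2$, where $\rho := \sigma_t^{\star 2}/\sigma_t^2$. An elementary inequality of the form $f(w) \geq c(w-1)^2$, valid in the relevant range of $w$, then converts this into $(d/40)(\rho - 1)^2$.

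The main obstacle is the interaction between the two pieces through $\sigma_t^2$: the mean term carries a $1/\sigma_t^2$ prefactor that deteriorates for large $\sigma_t$, while the variance term $f(\rho)$ is not a pure quadratic in $\rho - 1$. A case analysis separating the regimes of $\sigma_t^2$ relative to $\sigma_t^{\star 2}$ (and to $1$) will be needed to combine the two lower bounds additively with the stated constants. When $\sigma_t^2$ is not much larger than $1$ the mean piece already yields the $(d/4)(\eta_t - \eta_t^\star)^2$ part directly; in the complementary regime the variance term must absorb any loss from the mean term via a tight use of $f(w) \geq (w-1)^2/(2\max(1,w))$, and the constant $1/40$ must be chosen just small enough to accommodate both cases simultaneously.
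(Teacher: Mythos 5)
Your plan matches the paper's: compute both conditionals as products of one-dimensional Gaussians, apply the coordinate-wise Gaussian KL formula, split the $k$ active from the $d-k$ inactive coordinates, and lower-bound the mean and variance contributions separately. Your computations are correct; in particular the mean contribution after expectation is indeed $\frac{(\eta_t-\eta_t^\star)^2}{2\sigma_t^2}\big(k + \frac{d-k}{1-\overline{\alpha}_t}\big)$, with a $\sigma_t^2$ in the denominator. Note, however, that the paper's displayed intermediate identity writes the inactive-coordinate mean term as $\frac{(1-\alpha_t-\eta_t)^2}{2(1-\overline{\alpha}_t)}\|(I_d-I_k)x_t\|_2^2$, with no $\sigma_t^2$, and it is this coefficient that makes the $\frac{d}{4}(\eta_t-\eta_t^\star)^2$ term drop out immediately after expectation. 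A direct Gaussian-conditioning calculation gives $\frac{(\eta_t-\eta_t^\star)^2}{2\sigma_t^2(1-\overline{\alpha}_t)^2}$ on that norm, matching your expression and not the paper's, so the two disagree on this point.

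The genuine gap is the combining step, which you flag but never carry out; and it is not merely a matter of tuning $1/40$. With your (correct) coefficient, the claimed additive bound cannot hold for arbitrary $\eta_t,\sigma_t>0$. Take $\sigma_t^2=M$ large and $\eta_t-\eta_t^\star=\sqrt{M}$: your mean piece is $\Theta(d)$ uniformly in $M$, and the variance piece $\frac{d-k}{2}(\rho-1-\log\rho)$ with $\rho=\sigma_t^{\star 2}/M$ grows only like $d\log M$, yet the target lower bound $\frac{d}{4}(\eta_t-\eta_t^\star)^2=\frac{d}{4}M$ grows linearly in $M$. Dually, for small $\sigma_t$ (large $\rho$) your inequality $f(w)\geq(w-1)^2/(2\max\{1,w\})$ gives $\frac{(d-k)(\rho-1)^2}{4\rho}$ for the inactive-coordinate variance piece, which falls below $\frac{d}{40}(\rho-1)^2$ once $\rho>10(d-k)/d$; the paper's own inequality $z-\log z-1\geq 0.1\min\{1,(z-1)^2\}$ retains a $\min$ that is silently dropped in the final display. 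Before investing more in the case analysis, you should reconcile your mean-term coefficient with the paper's and decide whether the theorem's constants and the qualifier \emph{arbitrary $\eta_t,\sigma_t$} are simultaneously attainable from the correct conditional densities.
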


Theorem~\ref{thm:uniqueness} shows that, unless we choose $\eta_{t}$
and $\sigma_{t}^{2}$ to be identical (or exceedingly close) to $\eta_{t}^{\star}$
and $\sigma_{t}^{\star2}$, the corresponding denoising step will
incur an undesired error that is linear in the ambient dimension $d$.
This highlights the critical importance of coefficient design for the DDPM sampler, especially when the target distribution exhibits a low-dimensional structure. 

Finally, we would like to make note that the above argument only
demonstrates the impact of coefficient design on an \textit{upper
bound} (\ref{eq:error-decomposition}) of the error $\mathsf{TV}(q_{1},p_{1})$,
rather than the error itself. It might be possible that a broader
range of coefficients can lead to dimension-independent error bound
like (\ref{eq:KL-bound}), while the upper bound (\ref{eq:error-decomposition})
remains dimension-dependent. This calls for new analysis tools (since we
cannot use the loose upper bound (\ref{eq:KL-bound}) in the analysis),
which we leave for future works.

\section{Analysis for the DDPM sampler (Proof of Theorem \ref{thm:SDE}) \label{sec:analysis}}

This section is devoted to establishing Theorem~\ref{thm:SDE}. The
idea is to bound the error incurred in each denoising step as characterized
in the decomposition (\ref{eq:error-decomposition}), namely for each
$2\leq t\leq T$, we need to bound
\[
\mathbb{E}_{x_{t}\sim q_{t}}\left[\mathsf{KL}\left(p_{X_{t-1}|X_{t}}\left(\,\cdot\mymid x_{t}\right)\,\Vert\,p_{Y_{t-1}|Y_{t}}\left(\,\cdot\mymid x_{t}\right)\right)\right].
\]
This requires connecting the two conditional distributions $p_{X_{t-1}|X_{t}}$
and $p_{Y_{t-1}|Y_{t}}$. It would be convenient to decouple the errors
from time discretization and imperfect score estimation by introducing
auxiliary random variables
\begin{equation}
Y_{t-1}^{\star}\coloneqq\frac{1}{\sqrt{\alpha_{t}}}\left(Y_{t}+\eta_{t}^{\star}s_{t}^{\star}\left(Y_{t}\right)+\sigma_{t}^{\star}Z_{t}\right)\qquad(2\leq t\leq T).\label{eq:Y-star-defn}
\end{equation}
On a high level, for each $2\leq t\leq T$, our proof consists of
the following steps:
\begin{enumerate}
\item Identify a typical set $\mathcal{A}_{t}\subseteq\mathbb{R}^{d}\times\mathbb{R}^{d}$
such that $(X_{t},X_{t-1})\in\mathcal{A}_{t}$ with high probability.
\item Establish point-wise proximity $p_{X_{t-1}|X_{t}}(x_{t-1}\mymid x_{t})\approx p_{Y_{t-1}^{\star}|Y_{t}}(x_{t-1}\mymid x_{t})$
for $(x_{t},x_{t-1})\in\mathcal{A}_{t}$. 
\item Characterize the deviation of $p_{Y_{t-1}^{\star}|Y_{t}}$ from $p_{Y_{t-1}|Y_{t}}$
caused by imperfect score estimation.
\end{enumerate}

\subsection{Step 1: identifying high-probability sets \label{subsec:prelim}}

For simplicity of presentation, we assume without loss of generality
that $k\geq\log d$ throughout the proof.\footnote{If $k<\log d$,
we may redefine $k\coloneqq\log d$, which does not change the desired
bound (\ref{eq:KL-bound}).} Let $\{x_{i}^{\star}\}_{1\leq i\leq N_{\varepsilon}}$
be an $\varepsilon$-net of $\mathcal{X}$, and let $\{\mathcal{B}_{i}\}_{1\leq i\leq N_{\varepsilon}}$
be a disjoint $\varepsilon$-cover for $\mathcal{X}$ such that $x_{i}^{\star}\in\mathcal{B}_{i}$.
Let
\begin{align*}
\mathcal{I} & \coloneqq\left\{ 1\leq i\leq N_{\varepsilon}:\mathbb{P}(X_{0}\in\mathcal{B}_{i})\geq\exp(-C_{1}k\log T)\right\} ,\\
\mathcal{G} & \coloneqq\big\{\omega\in\mathbb{R}^{d}:\Vert\omega\Vert_{2}\leq2\sqrt{d}+\sqrt{C_{1}k\log T},\quad\text{and}\\
 & \qquad\qquad\qquad\vert(x_{i}^{\star}-x_{j}^{\star})^{\top}\omega\vert\leq\sqrt{C_{1}k\log T}\Vert x_{i}^{\star}-x_{j}^{\star}\Vert_{2}\quad\text{for all}\quad1\leq i,j\leq N_{\varepsilon}\big\},
\end{align*}
where $C_{1}>0$ is some sufficiently large universal constants. Then
$\cup_{i\in\mathcal{I}}\mathcal{B}_{i}$ and $\mathcal{G}$ can be
interpreted as high probability sets for the variable $X_{0}$ and
a standard Gaussian random variable in $\mathbb{R}^{d}$. For each
$t=1,\ldots T$, we define a typical set for each $X_{t}$ as follows
\[
\mathcal{T}_{t}\coloneqq\left\{ \sqrt{\overline{\alpha}_{t}}x_{0}+\sqrt{1-\overline{\alpha}_{t}}\omega:x_{0}\in\cup_{i\in\mathcal{I}}\mathcal{B}_{i},\omega\in\mathcal{G}\right\} ,
\]
and a typical set for $(X_{t},X_{t-1})$ jointly as follows
\[
\mathcal{A}_{t}\coloneqq\Big\{\left(x_{t},x_{t-1}\right):x_{t}\in\mathcal{T}_{t},\frac{x_{t}-\sqrt{\alpha_{t}}x_{t-1}}{\sqrt{1-\alpha_{t}}}\in\mathcal{G}\Big\}.
\]
The following lemma shows that $\mathcal{A}_{t}$ is indeed a high-probability
set for $(X_{t},X_{t-1})$. 

\begin{lemma}\label{lemma:At-SDE} Suppose that $C_{1}\gg C_{\mathsf{cover}}$.
Then for each $1\leq t\leq T$ we have
\[
\mathbb{P}\left(\left(X_{t},X_{t-1}\right)\notin\mathcal{A}_{t}\right)\leq\exp\Big(-\frac{C_{1}}{4}k\log T\Big).
\]
\end{lemma}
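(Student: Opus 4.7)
The plan is to reduce the bad event $\{(X_t, X_{t-1}) \notin \mathcal{A}_t\}$ to the failure of three simple sub-events that can be controlled independently. Using the two equivalent representations of the forward process, $X_t = \sqrt{\alpha_t} X_{t-1} + \sqrt{1-\alpha_t} W_t$ and $X_t = \sqrt{\overline{\alpha}_t} X_0 + \sqrt{1-\overline{\alpha}_t} \overline{W}_t$ (where $W_t$ is the one-step noise and $\overline{W}_t$ the accumulated noise), one checks that $(X_t, X_{t-1}) \in \mathcal{A}_t$ holds whenever all three of the following occur: (a) $X_0 \in \bigcup_{i \in \mathcal{I}} \mathcal{B}_i$; (b) $\overline{W}_t \in \mathcal{G}$, which together with (a) and the second representation witnesses $X_t \in \mathcal{T}_t$; and (c) $W_t \in \mathcal{G}$, which is exactly the condition $(X_t - \sqrt{\alpha_t} X_{t-1})/\sqrt{1-\alpha_t} \in \mathcal{G}$. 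A union bound then reduces the lemma to showing that each of (a), (b), (c) fails with probability at most $\exp(-\Omega(C_1 k \log T))$.

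For (a), the definition of $\mathcal{I}$ ensures that every cell $\mathcal{B}_i$ with $i \notin \mathcal{I}$ carries mass at most $\exp(-C_1 k \log T)$, and there are at most $N_\varepsilon \leq \exp(C_{\mathsf{cover}} k \log T)$ such cells; a union bound therefore gives
\[
\mathbb{P}\Big(X_0 \notin \bigcup_{i \in \mathcal{I}} \mathcal{B}_i\Big) \leq \exp\big((C_{\mathsf{cover}} - C_1) k \log T\big).
\]
For (b) and (c), each of the form $\mathbb{P}(\omega \notin \mathcal{G})$ for a standard Gaussian vector $\omega$ in $\mathbb{R}^d$, I split according to the two conditions defining $\mathcal{G}$. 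The norm constraint $\Vert \omega \Vert_2 \leq 2\sqrt{d} + \sqrt{C_1 k \log T}$ fails with probability at most $\exp(-C_1 k \log T / 2)$ by Gaussian Lipschitz (or $\chi^2$-tail) concentration, using $\mathbb{E}[\Vert \omega \Vert_2] \leq \sqrt{d}$ to absorb the extra $\sqrt{d}$ factor. For each fixed pair $(i,j)$, the random variable $(x_i^\star - x_j^\star)^\top \omega$ is a centered Gaussian with variance $\Vert x_i^\star - x_j^\star \Vert_2^2$, so the scalar Gaussian tail gives probability at most $2\exp(-C_1 k \log T / 2)$ that this pair violates the inner product bound.

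The only step that requires care, and the closest thing to an obstacle, is the union bound over the $O(N_\varepsilon^2)$ pairs appearing in the definition of $\mathcal{G}$: the combinatorial cost is at most $\exp(2 C_{\mathsf{cover}} k \log T)$, which is exactly why the hypothesis $C_1 \gg C_{\mathsf{cover}}$ is needed. With $C_1$ taken sufficiently large relative to $C_{\mathsf{cover}}$ (for instance $C_1 \geq 16 C_{\mathsf{cover}}$), each of the three failure probabilities is $\exp(-\Omega(C_1 k \log T))$, and summing them (with a mild slackening of constants) yields the claimed bound $\exp(-C_1 k \log T / 4)$. No delicate anti-concentration or SDE argument enters; once the clean reduction in the first paragraph is in hand, the remainder is a standard covering-plus-Gaussian-tail union bound.
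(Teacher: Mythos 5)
Your proposal is correct and follows essentially the same route as the paper: it identifies the same inclusion $\{X_0 \in \cup_{i\in\mathcal{I}}\mathcal{B}_i\} \cap \{\overline{W}_t \in \mathcal{G}\} \cap \{W_t \in \mathcal{G}\} \subseteq \{(X_t, X_{t-1}) \in \mathcal{A}_t\}$ via the two representations of the forward process, applies a union bound on the complement, and controls each piece by a covering count times a Gaussian/chi-square tail, invoking $C_1 \gg C_{\mathsf{cover}}$ in exactly the places the paper does.
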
\begin{proof}See Appendix~\ref{subsec:proof-lemma-At-SDE}.\end{proof}

\subsection{Step 2: connecting conditional densities $p_{X_{t-1}|X_{t}}$ and
$p_{Y_{t-1}^{\star}|Y_{t}}$}

Given the definition of $Y_{t-1}^{\star}$ in (\ref{eq:Y-star-defn}),
we can write down the conditional density $p_{Y_{t-1}^{\star}|Y_{t}}$
as follows
\begin{align}
p_{Y_{t-1}^{\star}|Y_{t}}\left(x_{t-1}\mymid x_{t}\right) & =\left(\frac{\alpha_{t}}{2\pi\sigma_{t}^{\star2}}\right)^{d/2}\exp\left(-\frac{\Vert\sqrt{\alpha_{t}}x_{t-1}-x_{t}-\eta_{t}^{\star}s_{t}^{\star}\left(x_{t}\right)\Vert_{2}^{2}}{2\sigma_{t}^{\star2}}\right).\label{eq:proof-sde-8}
\end{align}
Next, we will investigate the conditional density $p_{X_{t-1}|X_{t}}$
for the forward process. For each $x_{0}\in\mathcal{X}$, we define
the shorthand notation 
\begin{equation}
\widehat{x}_{0}\coloneqq\mathbb{E}\left[X_{0}\mymid X_{t}=x_{t}\right]=\int_{x_{0}}x_{0}p_{X_{0}|X_{t}}\left(x_{0}\mymid x_{t}\right)\mathrm{d}x_{0},\label{eq:x0-hat-defn}
\end{equation}
and define a function $\Delta_{x_{t},x_{t-1}}:\mathcal{X}\to\mathbb{R}$
as follows
\begin{align}
\Delta_{x_{t},x_{t-1}}\left(x_{0}\right) & \coloneqq-\frac{\sqrt{\overline{\alpha}_{t}}}{\alpha_{t}-\overline{\alpha}_{t}}\left(\sqrt{\alpha_{t}}x_{t-1}-x_{t}\right)^{\top}\left(\widehat{x}_{0}-x_{0}\right)
-\frac{\left(1-\alpha_{t}\right)\overline{\alpha}_{t}}{2\left(\alpha_{t}-\overline{\alpha}_{t}\right)\left(1-\overline{\alpha}_{t}\right)}\Vert\widehat{x}_{0}-x_{0}\Vert_{2}^{2}\nonumber \\
 & \qquad
 -\frac{\left(1-\alpha_{t}\right)\sqrt{\overline{\alpha}_{t}}}{\left(\alpha_{t}-\overline{\alpha}_{t}\right)\left(1-\overline{\alpha}_{t}\right)}\left(x_{t}-\sqrt{\overline{\alpha}_{t}}\widehat{x}_{0}\right)^{\top}\left(\widehat{x}_{0}-x_{0}\right).\label{eq:Delta-defn}
\end{align}
The next lemma provides a characterization for $p_{X_{t-1}|X_{t}}$
that shows an explicit connection with $p_{Y_{t-1}^{\star}|Y_{t}}$.

\begin{lemma}\label{lemma:cond-density} For any pair $(x_{t},x_{t-1})\in\mathbb{R}^{d}\times\mathbb{R}^{d}$,
we have
\begin{align*}
p_{X_{t-1}|X_{t}}\left(x_{t-1}\mymid x_{t}\right) & =\left(\frac{\alpha_{t}}{2\pi\sigma_{t}^{\star2}}\right)^{d/2}\exp\left(-\frac{\Vert\sqrt{\alpha_{t}}x_{t-1}-x_{t}-\eta_{t}^{\star}s_{t}^{\star}\left(x_{t}\right)\Vert_{2}^{2}}{2\sigma_{t}^{\star2}}\right)\\
 & \qquad\cdot\int_{\mathcal{X}}\exp\left(\Delta_{x_{t},x_{t-1}}\left(x_{0}\right)\right)p_{X_{0}|X_{t}}\left(x_{0}\mymid x_{t}\right)\mathrm{d}x_{0}.
\end{align*}

\end{lemma}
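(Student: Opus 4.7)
The plan is to compute $p_{X_{t-1}|X_{t}}$ by first conditioning on $X_{0}$, then evaluating the resulting Gaussian posterior explicitly, and finally completing the square around the posterior mean $\widehat{x}_{0}$. The starting point is the tower identity
\[
p_{X_{t-1}|X_{t}}\left(x_{t-1}\mymid x_{t}\right) = \int_{\mathcal{X}} p_{X_{t-1}|X_{t},X_{0}}\left(x_{t-1}\mymid x_{t},x_{0}\right) p_{X_{0}|X_{t}}\left(x_{0}\mymid x_{t}\right)\, \mathrm{d}x_{0}.
\]
Conditional on $X_{0}=x_{0}$, the pair $(X_{t-1},X_{t})$ is jointly Gaussian with moments read off from (\ref{eq:forward-update}) and (\ref{eq:forward-formula}), so standard Gaussian conditioning yields $X_{t-1} \mid X_{t}=x_{t}, X_{0}=x_{0} \sim \mathcal{N}\left(\mu_{t}(x_{0}),\,(\sigma_{t}^{\star 2}/\alpha_{t})\, I_{d}\right)$, where
\[
\sqrt{\alpha_{t}}\,\mu_{t}(x_{0}) = \frac{\sqrt{\overline{\alpha}_{t}}(1-\alpha_{t})}{1-\overline{\alpha}_{t}}\, x_{0} + \frac{\alpha_{t}-\overline{\alpha}_{t}}{1-\overline{\alpha}_{t}}\, x_{t}.
\]
Verifying that the conditional variance equals $\sigma_{t}^{\star 2}/\alpha_{t}$ is a direct calculation using $\overline{\alpha}_{t} = \alpha_{t}\overline{\alpha}_{t-1}$, and this already matches the prefactor $(\alpha_{t}/(2\pi\sigma_{t}^{\star 2}))^{d/2}$ appearing in the lemma.

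Next, I would invoke Tweedie's formula, which one obtains by differentiating the Gaussian mixture representation $q_{t}(x_{t}) = \mathbb{E}_{X_{0}}\bigl[(2\pi(1-\overline{\alpha}_{t}))^{-d/2}\exp(-\|x_{t}-\sqrt{\overline{\alpha}_{t}} X_{0}\|_{2}^{2}/(2(1-\overline{\alpha}_{t})))\bigr]$ under the integral sign: this yields $s_{t}^{\star}(x_{t}) = (\sqrt{\overline{\alpha}_{t}}\widehat{x}_{0}-x_{t})/(1-\overline{\alpha}_{t})$. Combined with $\eta_{t}^{\star}=1-\alpha_{t}$ and the expression for $\mu_{t}$ above, this gives the key identity $\sqrt{\alpha_{t}}\,\mu_{t}(\widehat{x}_{0}) = x_{t}+\eta_{t}^{\star}s_{t}^{\star}(x_{t})$. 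Hence the Gaussian density $p_{X_{t-1}|X_{t},X_{0}}(x_{t-1}\mymid x_{t},x_{0})$ evaluated at $x_{0}=\widehat{x}_{0}$ coincides exactly with the multiplicative prefactor in the lemma, and only the $x_{0}$-dependent correction inside the integral needs to be identified with $\exp(\Delta_{x_{t},x_{t-1}}(x_{0}))$.

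For this final matching step, I would use the decomposition $\sqrt{\alpha_{t}}\,\mu_{t}(x_{0}) = \sqrt{\alpha_{t}}\,\mu_{t}(\widehat{x}_{0}) - \sqrt{\overline{\alpha}_{t}}(1-\alpha_{t})(1-\overline{\alpha}_{t})^{-1}(\widehat{x}_{0}-x_{0})$, expand $\|\sqrt{\alpha_{t}} x_{t-1} - \sqrt{\alpha_{t}}\,\mu_{t}(x_{0})\|_{2}^{2}$ around $\sqrt{\alpha_{t}}\,\mu_{t}(\widehat{x}_{0})$, and divide by $2\sigma_{t}^{\star 2}$. Substituting $\sqrt{\alpha_{t}}\mu_{t}(\widehat{x}_{0}) = x_{t}+\eta_{t}^{\star}s_{t}^{\star}(x_{t})$ and further splitting $\eta_{t}^{\star}s_{t}^{\star}(x_{t}) = (1-\alpha_{t})(\sqrt{\overline{\alpha}_{t}}\widehat{x}_{0}-x_{t})/(1-\overline{\alpha}_{t})$ inside the resulting cross term, and simplifying coefficients via $\sigma_{t}^{\star 2} = (1-\alpha_{t})(\alpha_{t}-\overline{\alpha}_{t})/(1-\overline{\alpha}_{t})$, the three resulting terms reproduce verbatim the three summands of $\Delta_{x_{t},x_{t-1}}$ in (\ref{eq:Delta-defn}).

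The argument is a pointwise algebraic identity with no probabilistic subtleties, so the main obstacle is bookkeeping: carefully tracking coefficients through the simplifications that use $\sqrt{\overline{\alpha}_{t}} = \sqrt{\alpha_{t}}\sqrt{\overline{\alpha}_{t-1}}$ and the formula for $\sigma_{t}^{\star 2}$, and making sure that completing the square does not introduce a spurious $x_{0}$-free constant that would contaminate the prefactor.
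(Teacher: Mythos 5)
Your proof is correct, and it takes a genuinely different route from the paper's. The paper begins with the Bayes-rule decomposition $p_{X_{t-1}|X_{t}}=\frac{p_{X_{t-1}}}{p_{X_{t}}}p_{X_{t}|X_{t-1}}$, rescales $X_{t-1}\mapsto\sqrt{\alpha_t}X_{t-1}$ to make the marginal ratio tractable, rewrites that ratio as an integral over $x_0$ by inserting the kernel $p_{X_0|X_t}$, and then massages the resulting exponent to isolate $\Delta_{x_t,x_{t-1}}(x_0)$. You instead go through the tower identity $p_{X_{t-1}|X_{t}}(\cdot\,|\,x_t)=\int p_{X_{t-1}|X_t,X_0}(\cdot\,|\,x_t,x_0)\,p_{X_0|X_t}(x_0\,|\,x_t)\,\mathrm{d}x_0$, compute the Gaussian posterior $X_{t-1}\,|\,X_t,X_0$ explicitly with variance $(\sigma_t^{\star 2}/\alpha_t)I_d$ and mean affine in $x_0$, use Tweedie's formula to identify $\sqrt{\alpha_t}\mu_t(\widehat{x}_0)=x_t+\eta_t^\star s_t^\star(x_t)$, and then expand the quadratic form around $\widehat{x}_0$. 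I verified the key computations: the conditional variance indeed reduces to $(1-\alpha_t)(1-\overline{\alpha}_{t-1})/(1-\overline{\alpha}_t)=\sigma_t^{\star 2}/\alpha_t$ using $\overline{\alpha}_t=\alpha_t\overline{\alpha}_{t-1}$, the mean coefficient simplifies as you state, and expanding $\|u+v\|_2^2$ with $u=\sqrt{\alpha_t}x_{t-1}-x_t-\eta_t^\star s_t^\star(x_t)$ and $v=\frac{\sqrt{\overline{\alpha}_t}(1-\alpha_t)}{1-\overline{\alpha}_t}(\widehat{x}_0-x_0)$, then dividing by $2\sigma_t^{\star 2}$ and splitting the cross term through the identity $u=(\sqrt{\alpha_t}x_{t-1}-x_t)+\frac{1-\alpha_t}{1-\overline{\alpha}_t}(x_t-\sqrt{\overline{\alpha}_t}\widehat{x}_0)$, gives exactly the three summands in (\ref{eq:Delta-defn}). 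Your route is arguably cleaner: it exploits the probabilistic structure (condition on $X_0$, Gaussian conditioning, complete the square) rather than manipulating density ratios, and it makes transparent why $\sigma_t^{\star 2}$ and $\eta_t^\star$ are the natural constants --- they are precisely the posterior variance (up to $\alpha_t$) and the coefficient making $\mu_t(\widehat{x}_0)$ the posterior mean. The paper's approach has the minor advantage of not requiring one to justify the joint Markov structure of $(X_0,X_{t-1},X_t)$ in the tower step, but both derivations are elementary and equivalent.
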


\begin{proof}See Appendix~\ref{subsec:proof-lemma-cond-density}.\end{proof}

Taking Lemma~\ref{lemma:cond-density} and (\ref{eq:proof-sde-8})
collectively yields
\begin{align*}
\frac{p_{X_{t-1}|X_{t}}\left(x_{t-1}\mymid x_{t}\right)}{p_{Y_{t-1}^{\star}|Y_{t}}\left(x_{t-1}\mymid x_{t}\right)} & =\int_{\mathcal{X}}\exp\left(\Delta_{x_{t},x_{t-1}}\left(x_{0}\right)\right)p_{X_{0}|X_{t}}\left(x_{0}\mymid x_{t}\right)\mathrm{d}x_{0},
\end{align*}
which allows us to control the density ratio by the magnitude of $\Delta_{x_{t},x_{t-1}}$.
By a careful analysis of the above integral for all $(x_{t},x_{t-1})\in\mathcal{A}_{t}$,
we show in the next lemma that the density ratio is uniformly close
to $1$ within the typical set $\mathcal{A}_{t}$. 

\begin{lemma}\label{lemma:Delta-SDE}Suppose that $T\gg k^{2}\log^{3}T$.
Then there exists some universal constant $C_{5}>0$ such that, for
any $2\leq t\leq T$ and any $(x_{t},x_{t-1})\in\mathcal{A}_{t}$,
we have
\[
\left|\frac{p_{X_{t-1}|X_{t}}\left(x_{t-1}\mymid x_{t}\right)}{p_{Y_{t-1}^{\star}|Y_{t}}\left(x_{t-1}\mymid x_{t}\right)}-1\right|\leq C_{5}\frac{k^{2}\log^{3}T}{T}\leq\frac{1}{2}.
\]
 \end{lemma}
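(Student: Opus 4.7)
The plan is to combine Lemma~\ref{lemma:cond-density} with a Taylor expansion, exploiting that $\widehat{x}_{0}$ is the posterior mean of $X_{0}$ given $X_{t}$ to cancel the leading linear contribution. By Lemma~\ref{lemma:cond-density},
\[
\frac{p_{X_{t-1}|X_{t}}(x_{t-1}\mid x_{t})}{p_{Y_{t-1}^{\star}|Y_{t}}(x_{t-1}\mid x_{t})} = \mathbb{E}_{x_{0}\sim p_{X_{0}|X_{t}}(\cdot\mid x_{t})}\!\left[\exp\!\left(\Delta_{x_{t},x_{t-1}}(x_{0})\right)\right],
\]
so the task is to show this expectation is $1 + O(k^{2}\log^{3}T/T)$ uniformly over $(x_{t},x_{t-1})\in\mathcal{A}_{t}$. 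Writing $u := \widehat{x}_{0}-x_{0}$, the expression (\ref{eq:Delta-defn}) decomposes $\Delta$ into a linear piece $L_{t}(u) = v_{t}^{\top}u$ (with $v_{t}$ determined by $x_{t},x_{t-1}$) and a quadratic piece $Q_{t}(u) = -c_{t}\|u\|_{2}^{2}$, where $c_{t} = (1-\alpha_{t})\overline{\alpha}_{t}/[2(\alpha_{t}-\overline{\alpha}_{t})(1-\overline{\alpha}_{t})]$. Since $\mathbb{E}[u\mid X_{t}=x_{t}] = 0$ by the very definition of $\widehat{x}_{0}$, the linear part integrates to zero, and Taylor-expanding $\exp$ gives
\[
\mathbb{E}[\exp(\Delta)] = 1 + \mathbb{E}[Q_{t}(u)] + \tfrac{1}{2}\mathbb{E}[L_{t}(u)^{2}] + (\text{cubic and higher}).
\]

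The central estimate is a sharp upper bound on the posterior second moment
\[
\mathsf{V}_{t}(x_{t}) := \mathbb{E}[\|u\|_{2}^{2}\mid X_{t}=x_{t}] \;\lesssim\; \frac{1-\overline{\alpha}_{t}}{\overline{\alpha}_{t}}\,k\log T
\]
valid for $x_{t}\in\mathcal{T}_{t}$. I would derive this from the explicit formula $p_{X_{0}|X_{t}}(x_{0}\mid x_{t}) \propto p_{\mathsf{data}}(x_{0})\exp(-\|x_{t}-\sqrt{\overline{\alpha}_{t}}x_{0}\|_{2}^{2}/(2(1-\overline{\alpha}_{t})))$ by partitioning $\mathcal{X}$ via the cover $\{\mathcal{B}_{i}\}_{i\in\mathcal{I}}$ of size $\exp(O(k\log T))$. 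Because $x_{t}\in\mathcal{T}_{t}$, some reference cell $\mathcal{B}_{i_{\star}}$ enjoys both prior mass $\geq \exp(-O(k\log T))$ and $\|x_{t}-\sqrt{\overline{\alpha}_{t}}x_{i_{\star}}^{\star}\|_{2}^{2} \lesssim (1-\overline{\alpha}_{t})(d+k\log T)$, which provides a floor for the posterior normalization; comparing this floor against the Gaussian tail from faraway cells shows that only cells within distance $O(\sqrt{(1-\overline{\alpha}_{t})k\log T/\overline{\alpha}_{t}})$ of the posterior mode carry non-negligible weight, producing the claimed bound with $k$ rather than the ambient $d$. Inserting this into $\mathbb{E}[Q_{t}(u)] = -c_{t}\mathsf{V}_{t}(x_{t})$ and applying the schedule estimates from Lemma~\ref{lemma:step-size} yields a contribution of order $k\log^{2}T/T$, comfortably within budget.

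For the quadratic-in-$L_{t}$ term I would bound $\|v_{t}\|_{2}$ summand by summand in (\ref{eq:Delta-defn}): the first and third terms contain $\sqrt{\alpha_{t}}x_{t-1}-x_{t}$ and $x_{t}-\sqrt{\overline{\alpha}_{t}}\widehat{x}_{0}$, which, using $(x_{t}-\sqrt{\alpha_{t}}x_{t-1})/\sqrt{1-\alpha_{t}}\in\mathcal{G}$ and the representation $x_{t} = \sqrt{\overline{\alpha}_{t}}x_{0}^{\star}+\sqrt{1-\overline{\alpha}_{t}}\overline{\omega}_{t}$ with $\overline{\omega}_{t}\in\mathcal{G}$, have norms controlled by $O(\sqrt{d+k\log T})$ times the relevant prefactors. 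Combined with $\mathbb{E}[L_{t}(u)^{2}] \leq \|v_{t}\|_{2}^{2}\mathsf{V}_{t}(x_{t})$, this keeps the second-order term within the target $k^{2}\log^{3}T/T$ rate. Cubic and higher terms are handled by truncation at $\|u\|_{2}\leq M \asymp \sqrt{(1-\overline{\alpha}_{t})k\log T/\overline{\alpha}_{t}}$: on this set $|\Delta|\lesssim 1$ so the Taylor remainder is controlled by $\mathbb{E}[\Delta^{2}]$, while the complement carries posterior mass $\exp(-\Omega(k\log T))$ by the covering argument, enough to absorb the crude polynomial-in-$R$ bound on $|\Delta|$ that holds there.

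The main obstacle is the sharp conditional second-moment estimate $\mathsf{V}_{t}(x_{t}) \lesssim (1-\overline{\alpha}_{t})k\log T/\overline{\alpha}_{t}$, which is where the low-dimensional structure must be exploited most decisively; a naive bound using concentration of a $d$-dimensional Gaussian posterior would give $d$ in place of $k$ and undo the adaptation the theorem claims. Making this work uniformly in $t\in\{2,\ldots,T\}$ --- including very small $t$ where $1-\overline{\alpha}_{t}$ is as tiny as $T^{-c_{0}}$ and the posterior is extremely sharply peaked, as well as large $t$ where the noise dominates --- is the delicate part, but the tailored construction of $\mathcal{T}_{t}$ in Section~\ref{subsec:prelim} is precisely what lets both regimes be handled in the same way.
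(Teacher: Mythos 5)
Your overall architecture matches the paper's: write the density ratio as $\mathbb{E}_{x_0\sim p_{X_0|X_t}}[\exp(\Delta_{x_t,x_{t-1}}(x_0))]$, split $\Delta$ into a linear-in-$u$ piece (where $u=\widehat{x}_0-x_0$) and a quadratic piece, exploit $\mathbb{E}[u\mid X_t=x_t]=0$ to kill the first-order linear contribution, bound the posterior second moment on a nearby set of cells, and truncate faraway cells by a Gaussian tail versus prior-mass-floor comparison. The paper's proof does exactly this, split as $I_1$ over $\mathcal{X}_t(x_t)$ and $I_2$ over $\mathcal{Y}_t(x_t)$, with the same cancellation $\int p_{X_0|X_t}\Delta_1\,\mathrm{d}x_0=0$.

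However, there is a genuine gap in your bound on the second-order linear term. You write $\mathbb{E}[L_t(u)^2]\le\|v_t\|_2^2\,\mathsf{V}_t(x_t)$ and claim $\|v_t\|_2$ is $O(\sqrt{d+k\log T})$ times the schedule prefactor. Take the dominant summand from the first term of (\ref{eq:Delta-defn}): the coefficient vector is $v_t^{(1)}=\frac{\sqrt{\overline{\alpha}_t(1-\alpha_t)}}{\alpha_t-\overline{\alpha}_t}\,\omega'$ with $\omega'\in\mathcal{G}$, so $\|v_t^{(1)}\|_2^2\lesssim\frac{\log T}{T}\frac{\overline{\alpha}_t}{1-\overline{\alpha}_t}\,d$. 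Combined with your (correct) estimate $\mathsf{V}_t(x_t)\lesssim\frac{1-\overline{\alpha}_t}{\overline{\alpha}_t}k\log T$, the product gives
\[
\|v_t^{(1)}\|_2^2\,\mathsf{V}_t(x_t)\;\lesssim\;\frac{d\,k\log^2 T}{T},
\]
which carries a spurious factor of $d/(k\log T)$ relative to the target $k^2\log^3 T/T$ and therefore does not "keep the second-order term within budget" whenever $d\gg k\log T$ --- i.e., precisely the regime the lemma is meant to cover.

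The missing ingredient is the \emph{directional} part of the definition of $\mathcal{G}$: for $\omega'\in\mathcal{G}$ and $u$ that is (up to $O(\varepsilon)$ corrections) a difference of points in $\mathcal{X}$, one has $|\omega'^{\top}u|\lesssim\sqrt{k\log T}\,\|u\|_2$, not $\|\omega'\|_2\|u\|_2\lesssim\sqrt{d}\,\|u\|_2$. This is exactly the inequality (\ref{eq:proof-lemma-omega-inner}), deployed in (\ref{eq:proof-lemma-set-11})--(\ref{eq:proof-lemma-set-Delta-1}) to get $|\Delta_1(x_0)|\lesssim\sqrt{\log T/T}\sqrt{\overline{\alpha}_t/(1-\overline{\alpha}_t)}\sqrt{k\log T}\,\|u\|_2$, hence $\Delta_1^2\lesssim\frac{k^2\log^3 T}{T}$ on $\mathcal{X}_t(x_t)$. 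In other words, the low-dimensional adaptation comes from two places, not one: the posterior variance is concentrated at scale $k\log T$ (which you capture), \emph{and} the inner-product coefficient vectors $\omega,\omega'$ are nearly orthogonal to the tangent directions of $\mathcal{X}$ (which your $\mathbb{E}[L_t(u)^2]\le\|v_t\|_2^2\mathsf{V}_t$ bound discards). You implicitly rely on the directional property of $\mathcal{G}$ in your derivation of the $\mathsf{V}_t$ bound (otherwise the cross term in comparing cells would be uncontrolled), so the fix is simply to also apply it pointwise to $|v_t^{\top}u|$ rather than passing to $\|v_t\|_2\|u\|_2$. The same correction is needed in your remainder control, since the "$|\Delta|\lesssim 1$ on $\|u\|_2\le M$" step requires the sharp $\sqrt{k\log T}\|u\|_2$ bound on the linear term as well.
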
 \begin{proof}See Appendix~\ref{subsec:proof-lemma-Delta-SDE}.\end{proof}

For $(x_{t},x_{t-1})$ outside the typical set $\mathcal{A}_{t}$,
the following lemma gives a coarse uniform bound for the density ratio,
which is already sufficient for our later analysis.

\begin{lemma}\label{lemma:sde-coarse}Suppose that $T\gg1$. Then
for any $2\leq t\leq T$ and any pair $(x_{t},x_{t-1})\in\mathbb{R}^{d}\times\mathbb{R}^{d}$,
we have
\[
\left|\log\frac{p_{X_{t-1}|X_{t}}\left(x_{t-1}\mymid x_{t}\right)}{p_{Y_{t-1}^{\star}|Y_{t}}\left(x_{t-1}\mymid x_{t}\right)}\right|\leq T^{c_{0}+2c_{R}}\left(\Vert\sqrt{\alpha_{t}}x_{t-1}-x_{t}\Vert_{2}+\Vert x_{t}\Vert_{2}+1\right).
\]
\end{lemma}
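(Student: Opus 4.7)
The plan is to leverage the density-ratio identity from Lemma~\ref{lemma:cond-density}, which gives
\[
\frac{p_{X_{t-1}|X_{t}}(x_{t-1}\mymid x_t)}{p_{Y_{t-1}^{\star}|Y_{t}}(x_{t-1}\mymid x_t)} \;=\; \int_{\mathcal{X}} \exp\!\big(\Delta_{x_t,x_{t-1}}(x_0)\big)\, p_{X_{0}|X_{t}}(x_0\mymid x_t)\, \mathrm{d}x_0.
\]
Since $p_{X_{0}|X_{t}}(\cdot\mymid x_t)$ is a probability measure supported on $\mathcal{X}$, the logarithm of the integral is sandwiched between $\min_{x_0\in\mathcal{X}} \Delta_{x_t,x_{t-1}}(x_0)$ and $\max_{x_0\in\mathcal{X}} \Delta_{x_t,x_{t-1}}(x_0)$. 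Hence it suffices to bound $\sup_{x_0\in\mathcal{X}} |\Delta_{x_t,x_{t-1}}(x_0)|$ uniformly, with no need to exploit concentration---this is precisely why the resulting bound can be coarse yet valid for all pairs $(x_t,x_{t-1})$.

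Next, I will bound each of the three summands in the definition~(\ref{eq:Delta-defn}) of $\Delta_{x_t,x_{t-1}}(x_0)$. Two ingredients do all the work. First, since $\widehat{x}_0 = \mathbb{E}[X_0\mymid X_t = x_t]$ lies in the convex hull of $\mathcal{X}$, the bounded-support assumption $\sup_{x\in\mathcal{X}}\|x\|_2 \leq R = T^{c_R}$ implies $\|\widehat{x}_0\|_2 \leq R$ and $\|\widehat{x}_0 - x_0\|_2 \leq 2R$ for every $x_0\in\mathcal{X}$, and consequently $\|x_t - \sqrt{\overline{\alpha}_t}\widehat{x}_0\|_2 \leq \|x_t\|_2 + R$. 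Second, from the learning-rate schedule (Lemma~\ref{lemma:step-size}), one has $1-\overline{\alpha}_{t-1}\geq \beta_1 = T^{-c_0}$ for every $t\geq 2$, which yields $1/(\alpha_t-\overline{\alpha}_t)=1/(\alpha_t(1-\overline{\alpha}_{t-1}))\leq 2T^{c_0}$, so the coefficient $\sqrt{\overline{\alpha}_t}/(\alpha_t-\overline{\alpha}_t)$ is $O(T^{c_0})$. The two remaining coefficients additionally carry a factor $(1-\alpha_t)/(1-\overline{\alpha}_t)$, which is at most $1$ by the elementary estimate $1-\overline{\alpha}_t \geq 1-\alpha_t=\beta_t$; hence they too are $O(T^{c_0})$.

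Combining these two ingredients via Cauchy--Schwarz on the inner-product terms and the triangle inequality yields
\[
\sup_{x_0\in\mathcal{X}}|\Delta_{x_t,x_{t-1}}(x_0)| \;\lesssim\; T^{c_0+c_R}\,\|\sqrt{\alpha_t}\, x_{t-1} - x_t\|_2 \;+\; T^{c_0+c_R}(\|x_t\|_2 + R) \;+\; T^{c_0} R^{2},
\]
and using $R = T^{c_R}\geq 1$, each summand is absorbed into $T^{c_0+2c_R}(\|\sqrt{\alpha_t}\, x_{t-1} - x_t\|_2 + \|x_t\|_2 + 1)$, matching the claim up to an absorbable constant. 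I do not expect any conceptual obstacle here: the argument is essentially a mechanical coefficient-tracking exercise, and its sole purpose is to supply a worst-case uniform bound outside the typical set $\mathcal{A}_t$, where the exponentially small probability from Lemma~\ref{lemma:At-SDE} will suppress the polynomial growth in $T$ in the final assembly for Theorem~\ref{thm:SDE}.
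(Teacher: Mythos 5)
Your proposal is correct and takes essentially the same route as the paper: invoke the density-ratio identity from Lemma~\ref{lemma:cond-density}, sandwich the log of the integral by $\sup_{x_0\in\mathcal{X}}|\Delta_{x_t,x_{t-1}}(x_0)|$, bound each of the three summands in $\Delta$ via Cauchy--Schwarz and the bounded-support estimate $\|\widehat{x}_0-x_0\|_2\leq 2R$, and control the prefactors using $\alpha_t-\overline{\alpha}_t=\alpha_t(1-\overline{\alpha}_{t-1})\geq \tfrac{1}{2}\beta_1 = \tfrac{1}{2}T^{-c_0}$. The only cosmetic difference is that the paper uses the sharper estimate $\frac{(1-\alpha_t)\sqrt{\overline{\alpha}_t}}{(\alpha_t-\overline{\alpha}_t)(1-\overline{\alpha}_t)}\leq 8c_1 T^{c_0-1}\log T$ from Lemma~\ref{lemma:step-size} rather than your cruder $\frac{1-\alpha_t}{1-\overline{\alpha}_t}\leq 1$, but both land comfortably inside the $T^{c_0+2c_R}$ budget once $T$ is sufficiently large, so the conclusion is identical.
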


\begin{proof} See Appendix~\ref{subsec:proof-lemma-sde-coarse}.\end{proof}

Armed with Lemmas \ref{lemma:Delta-SDE} and \ref{lemma:sde-coarse},
we are ready to bound the expected KL divergence between the two conditional
distributions $p_{X_{t-1}|X_{t}}$ and $p_{Y_{t-1}^{\star}|Y_{t}}$.

\subsection{Step 3: bounding the KL divergence between $p_{X_{t-1}|X_{t}}$ and
$p_{Y_{t-1}^{\star}|Y_{t}}$}

We first decompose the expected KL divergence between $p_{X_{t-1}|X_{t}}$
and $p_{Y_{t-1}^{\star}|Y_{t}}$ into
\begin{align*}
 & \mathbb{E}_{x_{t}\sim q_{t}}\left[\mathsf{KL}\left(p_{X_{t-1}|X_{t}}\left(\,\cdot\mymid x_{t}\right)\,\Vert\,p_{Y_{t-1}^{\star}|Y_{t}}\left(\,\cdot\mymid x_{t}\right)\right)\right]\\
 & \quad=\left(\int_{\mathcal{A}_{t}}+\int_{\mathcal{A}_{t}^{\mathrm{c}}}\right)p_{X_{t-1}|X_{t}}\left(x_{t-1}\mymid x_{t}\right)\log\left(\frac{p_{X_{t-1}|X_{t}}\left(x_{t-1}\mymid x_{t}\right)}{p_{Y_{t-1}^{\star}|Y_{t}}\left(x_{t-1}\mymid x_{t}\right)}\right)p_{X_{t}}\left(x_{t}\right)\mathrm{d}x_{t-1}\mathrm{d}x_{t}\\
 & \quad \eqqcolon\Delta_{t,1}+\Delta_{t,2},
\end{align*}
where $\Delta_{t,1}$ and $\Delta_{t,2}$ are the integrals over $\mathcal{A}_{t}$
and $\mathcal{A}_{t}^{\mathrm{c}}$. It boils down to bounding these
two terms. 

By a direct application of Lemma~\ref{lemma:Delta-SDE} together
with the first-order Taylor expansion of $\log(x)$ around $x=1$,
one can easily show that $|\Delta_{t,1}|\lesssim k^{2}\log^{3}(T)/T$.
However this naive bound will lead to a vacuous final bound on $\mathsf{TV}(q_{1},p_{1})$,
which depends on the sum of $\Delta_{t,1}$ over all $2\leq t\leq T$
according to (\ref{eq:error-decomposition}). By a more careful analysis,
we achieve a better bound for $\Delta_{t,1}$, as shown in the following
lemma.

\begin{lemma}\label{lemma:SDE-Delta-t-1}Suppose that $T\gg k^{2}\log^{3}T$.
Then for each $2\leq t\leq T$, we have
\[
\left|\Delta_{t,1}\right|\leq2C_{5}^{2}\frac{k^{4}\log^{6}T}{T^{2}}.
\]
\end{lemma}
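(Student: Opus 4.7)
The plan is to replace the naive first-order Taylor argument by a second-order expansion, combined with an algebraic identity that exposes a key cancellation. Set $\delta\coloneqq C_5 k^2\log^3 T/T$; by Lemma~\ref{lemma:Delta-SDE}, the density ratio $r(x_{t-1},x_t)\coloneqq p_{X_{t-1}|X_t}(x_{t-1}\mymid x_t)/p_{Y_{t-1}^\star|Y_t}(x_{t-1}\mymid x_t)$ satisfies $|r-1|\leq\delta\leq 1/2$ throughout $\mathcal{A}_t$. For $|u|\leq 1/2$ one has $|\log(1+u)-(u-u^2/2)|\leq|u|^3$, which applied to $u=r-1$ splits $\Delta_{t,1}$ into a linear-in-$(r-1)$ piece, a quadratic piece that is bounded pointwise by $\delta^2$ and hence contributes $O(\delta^2)$, and a cubic remainder of size $O(\delta^3)=o(\delta^2)$. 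The central difficulty lies in the linear piece, which the naive bound $|r-1|\leq\delta$ would only show to be $O(\delta)$.

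The next step is to exploit the algebraic identity
\begin{equation*}
p_{X_{t-1}|X_t}(r-1)\;=\;r(r-1)\,p_{Y_{t-1}^\star|Y_t}\;=\;\big(p_{X_{t-1}|X_t}-p_{Y_{t-1}^\star|Y_t}\big)+(r-1)^2\,p_{Y_{t-1}^\star|Y_t}.
\end{equation*}
The second summand is $O(\delta^2)$ after integration on $\mathcal{A}_t$. The first summand enjoys a ``probability-integrates-to-one'' cancellation: because both conditional densities integrate to one in $x_{t-1}$, the integral of $(p_{X_{t-1}|X_t}-p_{Y_{t-1}^\star|Y_t})p_{X_t}$ over $\mathcal{A}_t$ telescopes into the difference of two tail probabilities
\begin{equation*}
\mathbb{P}\big((X_t,X_{t-1}^\star)\in\mathcal{A}_t^{\mathrm{c}}\big)\;-\;\mathbb{P}\big((X_t,X_{t-1})\in\mathcal{A}_t^{\mathrm{c}}\big),
\end{equation*}
where $X_{t-1}^\star$ is the auxiliary random variable whose conditional law given $X_t$ equals $p_{Y_{t-1}^\star|Y_t}(\cdot\mymid X_t)$.

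The forward tail $\mathbb{P}((X_t,X_{t-1})\in\mathcal{A}_t^{\mathrm{c}})$ is already $\exp(-\Theta(k\log T))\ll\delta^2$ by Lemma~\ref{lemma:At-SDE}, provided the universal constant $C_1$ is sufficiently large. The main obstacle, and the heart of the argument, is the matching tail bound for $(X_t,X_{t-1}^\star)$. To handle it, Tweedie's identity $\sqrt{1-\overline{\alpha}_t}\,s_t^\star(X_t)=-\mathbb{E}[\overline{W}_t\mymid X_t]$ combined with the specific choice of $\eta_t^\star,\sigma_t^\star$ lets one rewrite the ``reverse residual'' $V^\star\coloneqq(X_t-\sqrt{\alpha_t}\,X_{t-1}^\star)/\sqrt{1-\alpha_t}$ as $V^\star=a\,\mathbb{E}[\overline{W}_t\mymid X_t]-b\,Z_t$ with $a^2+b^2=1$. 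Gaussian concentration for the independent $Z_t$, together with conditional Jensen/tower arguments that propagate the sub-Gaussianity of $\overline{W}_t\sim\mathcal{N}(0,I_d)$ through the conditional expectation, and a union bound over the $\varepsilon$-net $\{x_i^\star\}$ paralleling the proof of Lemma~\ref{lemma:At-SDE}, then yield $\mathbb{P}(V^\star\notin\mathcal{G})\leq\exp(-\Theta(k\log T))$, so that $(X_t,X_{t-1}^\star)\in\mathcal{A}_t$ with overwhelming probability as well. Assembling the second-order expansion with both tail bounds produces the advertised $|\Delta_{t,1}|\leq 2C_5^2 k^4\log^6 T/T^2$.
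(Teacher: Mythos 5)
Your plan matches the paper's proof in its essential structure: use Lemma~\ref{lemma:Delta-SDE} to bound the density ratio on $\mathcal{A}_t$, expand the logarithm to second order, observe that the linear piece $p_{X_{t-1}|X_t}-p_{Y_{t-1}^\star|Y_t}$ integrates over $\mathcal{A}_t$ to a difference of tail probabilities (because both conditional densities integrate to one), and control those tails exponentially. The paper realizes the second-order expansion via $\log(1-x)\geq -x-x^2$ applied with $x=1-1/r$; your two-sided Taylor bound in $u=r-1$, combined with the algebraic identity you wrote for the linear piece, is an equivalent route and has the small advantage of giving $|\Delta_{t,1}|$ directly rather than a one-sided bound. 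The one place you depart in substance is the reverse tail $\mathbb{P}\big((X_t,X_{t-1}^\star)\notin\mathcal{A}_t\big)$: the paper first proves the deterministic estimate of Claim~\ref{claim:score} on $\sqrt{1-\alpha_t}\,s_t^\star(x_t)$ for $x_t\in\mathcal{T}_t$ and then invokes Gaussian concentration for $Z_t$, whereas you rewrite $V^\star=a\,\mathbb{E}[\overline{W}_t\mymid X_t]-bZ_t$ with $a^2+b^2=1$ and push the moment generating function of $\overline{W}_t$ through conditional Jensen to obtain sub-Gaussian tails for both $\|\mathbb{E}[\overline{W}_t\mymid X_t]\|_2$ and the pairwise projections $(x_i^\star-x_j^\star)^\top\mathbb{E}[\overline{W}_t\mymid X_t]$. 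That argument is sound, and arguably cleaner than re-deriving the pointwise score estimate; just make sure you also include the bound on $\mathbb{P}(X_t\notin\mathcal{T}_t)$ (from the proof of Lemma~\ref{lemma:At-SDE}) when assembling $\mathbb{P}\big((X_t,X_{t-1}^\star)\notin\mathcal{A}_t\big)$, since membership in $\mathcal{A}_t$ requires $X_t\in\mathcal{T}_t$ as well as $V^\star\in\mathcal{G}$.
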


\begin{proof}See Appendix~\ref{subsec:proof-lemma-sde-delta-t-1}.\end{proof}

For $\Delta_{t,2}$, we can employ the course bound in Lemma~\ref{lemma:sde-coarse}
to show that it is exponentially small.

\begin{lemma}\label{lemma:SDE-Delta-t-2}Suppose that $T\gg1$. Then
for each $2\leq t\leq T$, we have
\[
\left|\Delta_{t,2}\right|\leq\exp\left(-\frac{C_{1}}{16}k\log T\right).
\]
\end{lemma}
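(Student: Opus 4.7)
\textbf{Proof proposal for Lemma \ref{lemma:SDE-Delta-t-2}.} The plan is to combine the coarse log-density-ratio bound of Lemma~\ref{lemma:sde-coarse} with the tail estimate of Lemma~\ref{lemma:At-SDE} via Cauchy--Schwarz. Observe that the joint density of $(X_t, X_{t-1})$ is $p_{X_{t-1}|X_t}(x_{t-1}\mymid x_t)\,p_{X_t}(x_t)$, so by the triangle inequality applied inside the integral,
\[
|\Delta_{t,2}| \;\leq\; \mathbb{E}\!\left[\left|\log\frac{p_{X_{t-1}|X_{t}}\left(X_{t-1}\mymid X_{t}\right)}{p_{Y_{t-1}^{\star}|Y_{t}}\left(X_{t-1}\mymid X_{t}\right)}\right|\,\mathbf{1}_{\{(X_t,X_{t-1})\notin\mathcal{A}_t\}}\right]
\;\leq\; \sqrt{\mathbb{E}[L_t^2]}\cdot \sqrt{\mathbb{P}\big((X_t,X_{t-1})\notin\mathcal{A}_t\big)},
\]
where $L_t$ denotes the log density ratio evaluated at $(X_t,X_{t-1})$. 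The second factor is already exponentially small by Lemma~\ref{lemma:At-SDE}, contributing at most $\exp\!\big(-\tfrac{C_1}{8}k\log T\big)$.

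Next I would bound $\mathbb{E}[L_t^2]$ using the uniform estimate of Lemma~\ref{lemma:sde-coarse}, which gives
\[
L_t^2 \;\leq\; T^{2c_0+4c_R}\big(\|\sqrt{\alpha_t}X_{t-1}-X_t\|_2+\|X_t\|_2+1\big)^2.
\]
The forward-process identity $X_t=\sqrt{\alpha_t}X_{t-1}+\sqrt{1-\alpha_t}\,W_t$ immediately implies $\|\sqrt{\alpha_t}X_{t-1}-X_t\|_2=\sqrt{1-\alpha_t}\,\|W_t\|_2$, a chi-type quantity whose second moment is at most $d$. Similarly, using $X_t=\sqrt{\overline\alpha_t}X_0+\sqrt{1-\overline\alpha_t}\,\overline W_t$ together with the bounded-support assumption $\|X_0\|_2\leq R=T^{c_R}$, one obtains $\mathbb{E}[\|X_t\|_2^2]\leq 2R^2+2d=O(T^{2c_R}+d)$. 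Hence $\mathbb{E}[L_t^2]$ is at most some polynomial $T^{C}$ in $T$ and $d$ (with $C$ depending only on $c_0$ and $c_R$).

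Putting the two factors together,
\[
|\Delta_{t,2}|\;\leq\; T^{C/2}\cdot \exp\!\Big(-\tfrac{C_1}{8}k\log T\Big)\;=\;\exp\!\Big(-\tfrac{C_1}{8}k\log T + \tfrac{C}{2}\log T\Big),
\]
and since Lemma~\ref{lemma:At-SDE} allows us to take $C_1$ as large as we wish (by choosing the $\varepsilon$-net constant large enough) and $k\geq\log d\geq 1$, the polynomial prefactor is absorbed into a slight weakening of the exponent, yielding the claimed bound $\exp(-\tfrac{C_1}{16}k\log T)$.

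The only mildly subtle step is ensuring that the polynomial moment estimates on $\|X_t\|_2$ and $\|\sqrt{\alpha_t}X_{t-1}-X_t\|_2$ really are $\mathrm{poly}(T,d)$ rather than something that scales with $k$ or with the metric-entropy parameters, but this is handled transparently by the assumption $\sup_{x\in\mathcal{X}}\|x\|_2\leq R=T^{c_R}$ and a direct Gaussian moment calculation; no delicate property of $\mathcal{A}_t$ is used here, since the coarse bound of Lemma~\ref{lemma:sde-coarse} holds pointwise on all of $\mathbb{R}^d\times\mathbb{R}^d$.
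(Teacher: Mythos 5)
Your proposal is correct and follows essentially the same approach as the paper. Both proofs combine the coarse pointwise bound of Lemma~\ref{lemma:sde-coarse}, the forward-process identity $X_t=\sqrt{\alpha_t}X_{t-1}+\sqrt{1-\alpha_t}W_t$, the second-moment estimate $\mathbb{E}[\|X_t\|_2^2]\le 2R^2+2d$, and the tail bound of Lemma~\ref{lemma:At-SDE}, finishing with Cauchy--Schwarz and absorbing the $\mathrm{poly}(T,d)$ prefactor into the exponential via $k\ge\log d$ and $C_1\gg c_0+c_R$. The only cosmetic difference is that you apply Cauchy--Schwarz once to the full quantity $L_t$, while the paper applies it term-by-term to $\sqrt{1-\alpha_t}\|W_t\|_2$, $\|X_t\|_2$, and $1$ separately (the constant term needing no Cauchy--Schwarz at all); the result is the same up to unimportant constants. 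Two small inaccuracies worth flagging: (i) the phrase ``$T^{C}$ with $C$ depending only on $c_0,c_R$'' is internally inconsistent with ``polynomial in $T$ and $d$'' --- what you actually have is $\mathbb{E}[L_t^2]\lesssim T^{2c_0+4c_R}(d+T^{2c_R})$, and the $\sqrt{d}$ factor is absorbed separately via $k\ge\log d$; and (ii) $C_1$ is not the ``$\varepsilon$-net constant'' but the constant defining the typical sets $\mathcal{I},\mathcal{G}$, which the paper simply takes to be a sufficiently large universal constant (with $C_1\gg C_{\mathsf{cover}}$ and $C_1\gg c_0+c_R$). Neither issue affects the validity of the argument.
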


\begin{proof}See Appendix~\ref{subsec:proof-lemma-sde-delta-t-2}.\end{proof}

By putting together Lemma~\ref{lemma:SDE-Delta-t-1} and Lemma~\ref{lemma:SDE-Delta-t-2},
we achieve
\begin{align}
\mathbb{E}_{x_{t}\sim q_{t}}\left[\mathsf{KL}\left(p_{X_{t-1}|X_{t}}\left(\,\cdot\mymid x_{t}\right)\,\Vert\,p_{Y_{t-1}^{\star}|Y_{t}}\left(\,\cdot\mymid x_{t}\right)\right)\right] & =\Delta_{t,1}+\Delta_{t,2}\leq3C_{5}^{2}\frac{k^{4}\log^{6}T}{T^{2}}\label{eq:proof-KL-main-1}
\end{align}
provided that $T$ is sufficiently large. 

\subsection{Step 4: bounding the KL divergence between $p_{X_{t-1}|X_{t}}$ and
$p_{Y_{t-1}|Y_{t}}$}

Since our goal is to bound the expected KL divergence between $p_{X_{t-1}|X_{t}}$
and $p_{Y_{t-1}|Y_{t}}$, we also need to upper bound the following
difference
\begin{align}
 & \mathbb{E}_{x_{t}\sim q_{t}}\Big[\mathsf{KL}\big(p_{X_{t-1}|X_{t}}\left(\,\cdot\mymid x_{t}\right)\,\Vert\,p_{Y_{t-1}|Y_{t}}\left(\,\cdot\mymid x_{t}\right)\big)\Big]-\mathbb{E}_{x_{t}\sim q_{t}}\Big[\mathsf{KL}\big(p_{X_{t-1}|X_{t}}\left(\,\cdot\mymid x_{t}\right)\,\Vert\,p_{Y_{t-1}^{\star}|Y_{t}}\left(\,\cdot\mymid x_{t}\right)\big)\Big]\nonumber\\
 & \quad=\int\bigg[\int p_{X_{t-1}|X_{t}}\left(x_{t-1}\mymid x_{t}\right)\log\frac{p_{Y_{t-1}^{\star}|Y_{t}}\left(x_{t-1}\mymid x_{t}\right)}{p_{Y_{t-1}|Y_{t}}\left(x_{t-1}\mymid x_{t}\right)}\mathrm{d}x_{t-1}\bigg]q_{t}\left(x_{t}\right)\mathrm{d}x_{t} \label{eq:proof-KL-main-1.5} \\
 & \quad=\int p_{X_{t-1},X_{t}}\left(x_{t-1},x_{t}\right)\left(-\frac{\alpha_{t}\Vert x_{t-1}-\mu_{t}^{\star}\left(x_{t}\right)\Vert_{2}^{2}}{2\sigma_{t}^{\star2}}+\frac{\alpha_{t}\Vert x_{t-1}-\mu_{t}\left(x_{t}\right)\Vert_{2}^{2}}{2\sigma_{t}^{\star2}}\right)\mathrm{d}x_{t-1}\mathrm{d}x_{t}\nonumber\\
 & \quad=\frac{\eta_{t}^{\star2}}{2\sigma_{t}^{\star2}}\mathbb{E}_{x_{t}\sim q_{t}}\left[\Vert\varepsilon_{t}\left(x_{t}\right)\Vert_{2}^{2}\right]+\frac{\eta_{t}^{\star}\sqrt{\alpha_{t}}}{\sigma_{t}^{\star2}}\underbrace{\int p_{X_{t-1},X_{t}}\left(x_{t-1},x_{t}\right)\left(x_{t-1}-\mu_{t}^{\star}\left(x_{t}\right)\right)^{\top}\varepsilon_{t}\left(x_{t}\right)\mathrm{d}x_{t-1}\mathrm{d}x_{t}}_{\eqqcolon K_{t}},\nonumber
\end{align}
where we define
\begin{equation}
\varepsilon_{t}\left(x_{t}\right)\coloneqq s_{t}^{\star}\left(x_{t}\right)-s_{t}\left(x_{t}\right),\quad\mu_{t}^{\star}\left(x_{t}\right)\coloneqq\frac{x_{t}+\eta_{t}^{\star}s_{t}^{\star}\left(x_{t}\right)}{\sqrt{\alpha_{t}}},\quad\text{and}\quad\mu_{t}\left(x_{t}\right)\coloneqq\frac{x_{t}+\eta_{t}^{\star}s_{t}\left(x_{t}\right)}{\sqrt{\alpha_{t}}}.\label{eq:defn-epsilon-mu}
\end{equation}
It then boils down to bounding $K_{t}$, which is presented in the
following lemma. 

\begin{lemma}\label{lemma:sde-K}Suppose that $T\gg k^{2}\log^{3}T$.
Then we have
\[
\left|K_{t}\right|\leq4C_{5}\frac{k^{2}\log^{3}T}{T}\sqrt{\frac{c_{1}\log T}{T}}\mathbb{E}_{x_{t}\sim q_{t}}^{1/2}\left[\Vert\varepsilon_{t}\left(x_{t}\right)\Vert_{2}^{2}\right].
\]
\end{lemma}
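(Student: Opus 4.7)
The plan rests on the observation that with our choice of coefficients $\eta_t = \eta_t^\star, \sigma_t = \sigma_t^\star$, the vector $\mu_t^\star(x_t)$ is precisely the mean of the Gaussian conditional law $p_{Y_{t-1}^\star|Y_t}(\cdot \mymid x_t) = \mathcal{N}(\mu_t^\star(x_t), (\sigma_t^{\star 2}/\alpha_t) I_d)$. Thus $\int (x_{t-1} - \mu_t^\star(x_t))\, p_{Y_{t-1}^\star|Y_t}(x_{t-1}\mymid x_t)\, \mathrm{d}x_{t-1} = 0$, which lets me rewrite
\[
K_t = \int q_t(x_t) \bigl[ p_{X_{t-1}|X_t}(x_{t-1}\mymid x_t) - p_{Y_{t-1}^\star|Y_t}(x_{t-1}\mymid x_t) \bigr]\, \varepsilon_t(x_t)^\top (x_{t-1} - \mu_t^\star(x_t))\, \mathrm{d}x_{t-1}\, \mathrm{d}x_t.
\]
This reduces the task to controlling a density difference already handled in Lemmas~\ref{lemma:Delta-SDE} and \ref{lemma:sde-coarse}. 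I then split the domain of integration into $\mathcal{A}_t$ and its complement, writing $K_t = K_{t,1} + K_{t,2}$.

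For the main piece $K_{t,1}$, Lemma~\ref{lemma:Delta-SDE} yields the pointwise bound $|p_{X_{t-1}|X_t} - p_{Y_{t-1}^\star|Y_t}| \leq C_5 (k^2 \log^3 T / T)\, p_{Y_{t-1}^\star|Y_t}$ on $\mathcal{A}_t$. Dropping the indicator (the integrand is non-negative after taking absolute values) and integrating $x_{t-1}$ out against the Gaussian $p_{Y_{t-1}^\star|Y_t}(\cdot\mymid x_t)$, Jensen's inequality gives $\mathbb{E}[|\varepsilon_t(x_t)^\top(Y_{t-1}^\star - \mu_t^\star(x_t))| \mymid Y_t = x_t] \leq \|\varepsilon_t(x_t)\|_2 \sqrt{\sigma_t^{\star 2}/\alpha_t}$. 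The crucial dimension-free cancellation happens here: the variance of the \emph{scalar} projection $\varepsilon_t^\top (Y_{t-1}^\star - \mu_t^\star)$ is just $\|\varepsilon_t\|_2^2 \cdot \sigma_t^{\star 2}/\alpha_t$, with no factor of $d$. Combining with the learning-rate estimate $\sigma_t^{\star 2}/\alpha_t \leq 1-\alpha_t \leq c_1 \log T / T$ (from Lemma~\ref{lemma:step-size}) and one further Cauchy-Schwarz against $q_t(x_t)$ on the outer integral yields the claimed bound (with the constant $C_5$ rather than $4C_5$).

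For the complement piece $K_{t,2}$, I expect a standard tail argument: bound $|K_{t,2}|$ by the sum of two integrals against $q_t p_{X_{t-1}|X_t}$ and $q_t p_{Y_{t-1}^\star|Y_t}$, apply Cauchy-Schwarz together with polynomial-in-$T,d$ bounds on $\mathbb{E}[\|X_{t-1}-\mu_t^\star(X_t)\|_2^4]$ to obtain $|K_{t,2}| \lesssim \sqrt{\mathbb{E}_{q_t}[\|\varepsilon_t\|_2^2]} \cdot \mathrm{poly}(T,d) \cdot \mathbb{P}(\mathcal{A}_t^c)^{1/4}$, and invoke Lemma~\ref{lemma:At-SDE} which gives $\mathbb{P}(\mathcal{A}_t^c) \leq T^{-C_1 k / 4}$. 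For the $p_{Y_{t-1}^\star|Y_t}$-summand I first apply the coarse ratio from Lemma~\ref{lemma:sde-coarse} to convert to an integral under $p_{X_{t-1}|X_t}$ at the cost of another polynomial prefactor. Choosing $C_1 \gg c_0, c_R, C_\mathsf{cover}$ makes the exponential tail dwarf all polynomial prefactors, absorbing the $K_{t,2}$ contribution into the constant $4$ in the final bound.

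The main obstacle is exactly this complement piece: the log-ratio bound in Lemma~\ref{lemma:sde-coarse} is only polynomial in $T$ (inside the exponent), so on $\mathcal{A}_t^c$ the pointwise density ratio between $p_{X_{t-1}|X_t}$ and $p_{Y_{t-1}^\star|Y_t}$ can be enormous; the argument succeeds only because the exponentially small probability $T^{-\Omega(C_1 k)}$ of $\mathcal{A}_t^c$ overpowers every polynomial in $T$ and $d$ arising from moment bounds on $\|X_{t-1}-\mu_t^\star\|_2$. Once past this technicality, the rest of the proof is just the dimension-free scalar-projection trick combined with Jensen's and Cauchy-Schwarz's inequalities.
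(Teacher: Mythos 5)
Your overall decomposition and the treatment of $K_{t,1}$ are exactly the paper's: split using $\int p_{Y_{t-1}^\star|Y_t}(\cdot\mymid x_t)(x_{t-1}-\mu_t^\star(x_t))\,\mathrm{d}x_{t-1}=0$, restrict to $\mathcal{A}_t$ vs.\ $\mathcal{A}_t^{\mathrm{c}}$, use Lemma~\ref{lemma:Delta-SDE} for the pointwise ratio on $\mathcal{A}_t$, and exploit the dimension-free variance of the scalar projection $\varepsilon_t^\top(x_{t-1}-\mu_t^\star)$ under the Gaussian conditional. That part is right and matches the paper.

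The gap is in $K_{t,2}$, specifically your proposal to ``apply the coarse ratio from Lemma~\ref{lemma:sde-coarse} to convert to an integral under $p_{X_{t-1}|X_t}$ at the cost of another polynomial prefactor.'' Lemma~\ref{lemma:sde-coarse} bounds the \emph{log}-ratio by $T^{c_0+2c_R}\bigl(\|\sqrt{\alpha_t}x_{t-1}-x_t\|_2+\|x_t\|_2+1\bigr)$, so the ratio itself is of order $\exp\bigl(T^{c_0+2c_R}\cdot\|\cdot\|_2\bigr)$. On $\mathcal{A}_t^{\mathrm{c}}$ these norms are unbounded, and when you integrate an exponential tilt of that strength against the Gaussian $p_{Y_{t-1}^\star|Y_t}(\cdot\mymid x_t)$ (whose variance is $\sigma_t^{\star 2}/\alpha_t\asymp\log T/T$), the result is not polynomial in $T$; it can be super-polynomially large. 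Your final paragraph acknowledges the ratio ``can be enormous'' and then asserts the exponential tail beats ``every polynomial,'' but the quantity to be beaten is \emph{not} polynomial under your conversion step, so the argument as written does not close. The paper sidesteps this entirely: it never converts the $p_{Y_{t-1}^\star|Y_t}$ summand, but instead uses the explicit Gaussian structure you already noted in your first paragraph — under $p_{Y_{t-1}^\star|Y_t}(\cdot\mymid x_t)$, $x_{t-1}-\mu_t^\star(x_t)=(\sigma_t^\star/\sqrt{\alpha_t})Z$ with $Z\sim\mathcal{N}(0,I_d)$, so its second and fourth moments are $(\sigma_t^{\star 2}/\alpha_t)\,d$ and $O\bigl((\sigma_t^{\star 2}/\alpha_t)^2 d^2\bigr)$, genuinely polynomial in $T,d$. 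That direct Gaussian moment computation, together with Cauchy–Schwarz and $\mathbb{P}(\mathcal{A}_t^{\mathrm{c}})\le\exp(-(C_1/4)k\log T)$ from Lemma~\ref{lemma:At-SDE}, is what makes the tail argument work. Replacing your conversion step with this direct computation repairs the proof; everything else you wrote is sound.
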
\begin{proof}See Appendix~\ref{subsec:proof-lemma-sde-delta-t-2}.\end{proof}

Hence we know that for $2\leq t\leq T$, 
\begin{align}
	& \mathbb{E}_{x_{t}\sim q_{t}}\left[\mathsf{KL}\left(p_{X_{t-1}|X_{t}}\left(\,\cdot\mymid x_{t}\right)\,\Vert\,p_{Y_{t-1}|Y_{t}}\left(\,\cdot\mymid x_{t}\right)\right)\right]-\mathbb{E}_{x_{t}\sim q_{t}}\big[\mathsf{KL}\big(p_{X_{t-1}|X_{t}}\left(\,\cdot\mymid x_{t}\right)\,\Vert\,p_{Y_{t-1}^{\star}|Y_{t}}\left(\,\cdot\mymid x_{t}\right)\big)\big]\nonumber \\
	& \quad\leq\frac{\left(1-\overline{\alpha}_{t}\right)\left(1-\alpha_{t}\right)}{2\left(\alpha_{t}-\overline{\alpha}_{t}\right)}\mathbb{E}_{x_{t}\sim q_{t}}\left[\Vert\varepsilon_{t}\left(x_{t}\right)\Vert_{2}^{2}\right]+4C_{5}\frac{1-\overline{\alpha}_{t}}{\alpha_{t}-\overline{\alpha}_{t}}\frac{k^{2}\log^{3}T}{T}\sqrt{\frac{c_{1}\log T}{T}}\mathbb{E}_{x_{t}\sim q_{t}}^{1/2}\left[\Vert\varepsilon_{t}\left(x_{t}\right)\Vert_{2}^{2}\right]\nonumber \\
	& \quad\leq\frac{4c_{1}\log T}{T}\mathbb{E}_{x_{t}\sim q_{t}}\left[\Vert\varepsilon_{t}\left(x_{t}\right)\Vert_{2}^{2}\right]+8C_{5}\frac{k^{2}\log^{3}T}{T}\sqrt{\frac{c_{1}\log T}{T}}\mathbb{E}_{x_{t}\sim q_{t}}^{1/2}\left[\Vert\varepsilon_{t}\left(x_{t}\right)\Vert_{2}^{2}\right].\label{eq:proof-KL-main-2}
\end{align}
Here the first relation follows from Lemma~\ref{lemma:sde-K} and
(\ref{eq:defn-step-size}); while the second relation follows from
Lemma~\ref{lemma:step-size} and holds provided that $T$ is sufficiently
large.

\subsection{Step 5: putting everything together}

By taking (\ref{eq:proof-KL-main-1}) and (\ref{eq:proof-KL-main-2})
collectively, we have
\begin{align}
 & \mathbb{E}_{x_{t}\sim q_{t}}\left[\mathsf{KL}\left(p_{X_{t-1}|X_{t}}\left(\,\cdot\mymid x_{t}\right)\,\Vert\,p_{Y_{t-1}|Y_{t}}\left(\,\cdot\mymid x_{t}\right)\right)\right]\nonumber \\
 & \quad\leq3C_{5}^{2}\frac{k^{4}\log^{6}T}{T^{2}}+\frac{4c_{1}\log T}{T}\mathbb{E}_{x_{t}\sim q_{t}}\left[\Vert\varepsilon_{t}\left(x_{t}\right)\Vert_{2}^{2}\right]+8C_{5}\frac{k^{2}\log^{3}T}{T}\sqrt{\frac{c_{1}\log T}{T}}\mathbb{E}_{x_{t}\sim q_{t}}^{1/2}\left[\Vert\varepsilon_{t}\left(x_{t}\right)\Vert_{2}^{2}\right]\nonumber \\
 & \quad\leq7C_{5}^{2}\frac{k^{4}\log^{6}T}{T^{2}}+\frac{8c_{1}\log T}{T}\mathbb{E}_{x_{t}\sim q_{t}}\left[\Vert\varepsilon_{t}\left(x_{t}\right)\Vert_{2}^{2}\right].\label{eq:proof-KL-main-3}
\end{align}
Here the last relation follows from an application of the AM-GM inequality
\[
8C_{5}\frac{k^{2}\log^{3}T}{T}\sqrt{\frac{c_{1}\log T}{T}}\mathbb{E}_{x_{t}\sim q_{t}}^{1/2}\left[\Vert\varepsilon_{t}\left(x_{t}\right)\Vert_{2}^{2}\right]\leq\frac{4c_{1}\log T}{T}\mathbb{E}_{x_{t}\sim q_{t}}\left[\Vert\varepsilon_{t}\left(x_{t}\right)\Vert_{2}^{2}\right]+4C_{5}^{2}\frac{k^{4}\log^{6}T}{T^{2}}.
\]
Finally we conclude that
\begin{align*}
\mathsf{TV}^{2}(q_{1},p_{1}) & \leq\mathsf{KL}\left(p_{X_{T}}\Vert p_{Y_{T}}\right)+\sum_{t=2}^{T}\mathbb{E}_{x_{t}\sim q_{t}}\left[\mathsf{KL}\left(p_{X_{t-1}|X_{t}}\left(\,\cdot\mymid x_{t}\right)\,\Vert\,p_{Y_{t-1}|Y_{t}}\left(\,\cdot\mymid x_{t}\right)\right)\right]\\
 & \leq8C_{5}^{2}\frac{k^{4}\log^{6}T}{T}+\frac{8c_{1}\log T}{T}\sum_{t=2}^{T}\mathbb{E}_{x_{t}\sim q_{t}}\left[\Vert\varepsilon_{t}\left(x_{t}\right)\Vert_{2}^{2}\right],
\end{align*}
as claimed. Here the first relation follows from (\ref{eq:error-decomposition}),
while the second relation follows from the fact that $\text{\ensuremath{\mathsf{KL}}(\ensuremath{p_{X_{T}}\Vert p_{Y_{T}}})\ensuremath{\leq T^{-100}}}$
provided that $T$ is sufficiently large (see Lemma~\ref{lemma:initialization-error}).
\section{Simulation study}

\begin{figure}[!h]
	\begin{center}
		\includegraphics[width=\textwidth]{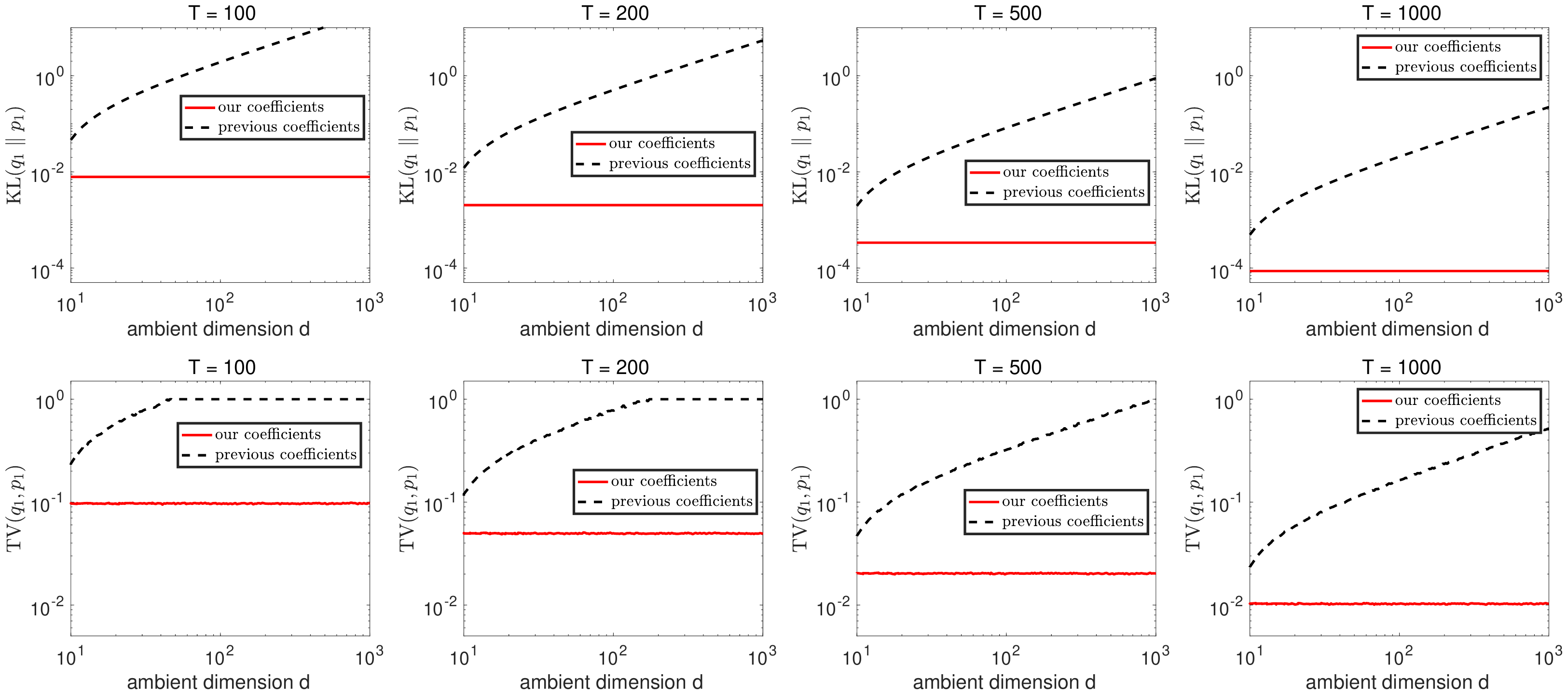} 
	\end{center}
	\caption{
		The KL divergence between $q_1$ and $p_1$ for $T \in \{100,200,500,1000\}$, when $p_\mathsf{data}=\mathcal{N}(0,I_k)$.
		We fix the low intrinsic dimension $k=8$, and let the ambient dimension $d$ grow from $10$ to $1000$.
	}
	\label{fig}
\end{figure}

We conducted a simple simulation to compare our coefficient design \eqref{eq:defn-step-size} with another design 
\begin{equation}\label{eq:other-design}
	\eta_t = \sigma_t^2 = 1-\alpha_t \qquad \text{for} \qquad 1\leq t \leq T,
\end{equation} 
which has been widely adopted in theoretical analysis of diffusion model (see e.g., \citet{li2023towards,li2024d}). We consider the degenerated Gaussian distribution $p_\mathsf{data}=\mathcal{N}(0,I_k)$ in Theorem~\ref{thm:uniqueness} as a tractable example, and run the DDPM sampler with exact score functions (so that the error only comes from discretization). We fix the intrinsic dimension $k=8$, and let the ambient dimension $d$ grow from $10$ to $10^3$. We implement the experiment for four different number of steps $T \in \{100,200,500,1000\}$. Instead of using the learning rate schedule \eqref{eq:learning-rate}, which is chosen mainly to facilitate analysis, we use the schedule in \citet{ho2020denoising} that is commonly used in practice. Figure~\ref{fig} displays the error, in terms of both the TV distance $\mathsf{TV}(q_1,p_1)$ and KL divergence $\mathsf{KL}(q_1 \Vert p_1)$, as the ambient dimension $d$ varies. As we can see, our design \eqref{eq:defn-step-size} leads to dimension-independent error while the other design \eqref{eq:other-design} incures an error that grows as $d$ increases. This provides empirical evidence that \eqref{eq:defn-step-size} represents a unique coefficient design for DDPM in achieving dimension-independent error.

\section{Discussion}
\label{sec:discuss}

The present paper investigates the DDPM sampler when the target distribution
is concentrated on or near low-dimensional manifolds. We identify
a particular coefficient design that enables the adaptivity of the
DDPM sampler to unknown low-dimensional structures and establish a
dimension-free convergence rate at the order of $k^{2}/\sqrt{T}$
(up to logarithmic factors). We conclude this paper by pointing out several
directions worthy of future investigation. To begin with, our theory
yields an iteration complexity that scales quartically in the intrinsic
dimension $k$, which is likely sub-optimal. Improving this dependency
calls for more refined analysis tools. Recent work \citep{li2024d} achieved a convergence rate of order $O(d/T)$, suggesting the potential for enhancing the dependence on $T$. Furthermore, as we have discussed
in the end of Section~\ref{subsec:uniqueness}, it is not clear whether
our coefficient design (\ref{eq:defn-step-size}) is unique in terms
of achieving dimension-independent error $\mathsf{TV}(q_{1},p_{1})$.
Finally, the analysis ideas and tools developed for the DDPM sampler
might be extended to study another popular DDIM sampler.

\section*{Acknowledgements}

Gen Li is supported in part by the Chinese University of Hong Kong Direct Grant for Research.
Yuling Yan was supported in part by a Norbert Wiener Postdoctoral Fellowship from MIT.

\appendix

\section{Proof of auxiliary lemmas for the DDPM sampler}

\subsection{Preliminaries \label{subsec:auxiliary-prelim}}

Fix any $x_{t}\in\mathcal{T}_{t}$, there exists an index $i(x_{t})\in\mathcal{I}$,
two points $x_{0}(x_{t})\in\mathcal{B}_{i(x_{t})}$ and $\omega\in\mathcal{G}$
such that 
\begin{equation}
x_{t}=\sqrt{\overline{\alpha}_{t}}x_{0}(x_{t})+\sqrt{1-\overline{\alpha}_{t}}\omega.\label{eq:proof-lemma-set-decom}
\end{equation}
For any $r>0$, define a set
\begin{equation}
\mathcal{I}\left(x_{t};r\right)\coloneqq\left\{ 1\leq i\leq N_{\varepsilon}:\overline{\alpha}_{t}\Vert x_{i}^{\star}-x_{i(x_{t})}^{\star}\Vert_{2}^{2}\leq r\cdot k(1-\overline{\alpha}_{t})\log T\right\} .\label{eq:defn-I-r}
\end{equation}
For some sufficiently large constant $C_{3}>0$, define
\begin{align*}
\mathcal{X}_{t}\left(x_{t}\right) & \coloneqq\bigcup_{i\in\mathcal{I}(x_{t};C_{3})}\mathcal{B}_{i}\qquad\text{and}\qquad\mathcal{Y}_{t}\left(x_{t}\right)\coloneqq\bigcup_{i\notin\mathcal{I}(x_{t};C_{3})}\mathcal{B}_{i}.
\end{align*}
Namely, $\mathcal{X}_{t}(x_{t})$ (resp.~$\mathcal{Y}_{t}(x_{t})$)
contains the indices of the $\varepsilon$-covering that are close
(resp.~far) from $\mathcal{B}_{i(x_{t})}$. We require that
\begin{equation}
\varepsilon\ll\sqrt{\frac{1-\overline{\alpha}_{t}}{\overline{\alpha}_{t}}}\min\left\{ 1,\sqrt{\frac{k\log T}{d}}\right\} ,\label{eq:eps-condition}
\end{equation}
which is guaranteed by our assumption that $\varepsilon=T^{-c_{\varepsilon}}$
for some sufficiently large constant $c_{\varepsilon}>0$. Under this
condition, for any $x,x'\in\mathcal{X}_{t}(x_{t})$ we have
\[
\Vert x-x'\Vert_{2}\leq\Vert x-x_{i(x_{t})}^{\star}\Vert_{2}+\Vert x'-x_{i(x_{t})}^{\star}\Vert_{2}\leq2\sqrt{\frac{C_{3}k(1-\overline{\alpha}_{t})\log T}{\overline{\alpha}_{t}}}+2\varepsilon\leq3\sqrt{\frac{C_{3}k(1-\overline{\alpha}_{t})\log T}{\overline{\alpha}_{t}}}
\]
Hence for any $x,x'\in\mathcal{X}_{t}(x_{t})$ we have
\begin{equation}
\overline{\alpha}_{t}\Vert x-x'\Vert_{2}^{2}\leq9C_{3}k\left(1-\overline{\alpha}_{t}\right)\log T.\label{eq:proof-lemma-set-Xt-dist}
\end{equation}
In addition, for any $x,x'\in\mathcal{X}$, suppose that $x\in\mathcal{B}_{i}$
and $x'\in\mathcal{B}_{j}$. For any $\omega\in\mathcal{G}$, we have
\begin{align}
\left|\omega^{\top}\left(x-x'\right)\right| & =\left|\omega^{\top}\left(x_{i}^{\star}-x_{j}^{\star}\right)\right|+\left|\omega^{\top}\left(x-x_{i}^{\star}\right)\right|+\left|\omega^{\top}\left(x_{j}^{\star}-x'\right)\right|\nonumber \\
 & \overset{\text{(i)}}{\leq}\sqrt{C_{1}k\log T}\Vert x_{i}^{\star}-x_{j}^{\star}\Vert_{2}+\left\Vert x-x_{i}^{\star}\right\Vert _{2}\left\Vert \omega\right\Vert _{2}+\left\Vert x'-x_{j}^{\star}\right\Vert _{2}\left\Vert \omega\right\Vert _{2}\nonumber \\
 & \overset{\text{(ii)}}{\leq}\sqrt{C_{1}k\log T}\Vert x_{i}^{\star}-x_{j}^{\star}\Vert_{2}+2\left(2\sqrt{d}+\sqrt{C_{1}k\log T}\right)\varepsilon\nonumber \\
 & \overset{\text{(iii)}}{\leq}\sqrt{C_{1}k\log T}\Vert x-x'\Vert_{2}+2\sqrt{C_{1}k\log T}\varepsilon+2\left(2\sqrt{d}+\sqrt{C_{1}k\log T}\right)\varepsilon\nonumber \\
 & \leq\sqrt{C_{1}k\log T}\Vert x-x'\Vert_{2}+\big(4\sqrt{d}+4\sqrt{C_{1}k\log T}\big)\varepsilon.\label{eq:proof-lemma-omega-inner}
\end{align}
Here step (i) follows from $\omega\in\mathcal{G}$ and the Cauchy-Schwarz
inequality; steps (ii) and (iii) follows from$\Vert x-x_{i}^{\star}\Vert_{2}\leq\varepsilon$
and $\Vert x'-x_{j}^{\star}\Vert_{2}\leq\varepsilon$, as well as
$\Vert\omega\Vert_{2}\leq\sqrt{d}+\sqrt{C_{1}k\log T}$, which is
a property for $\omega\in\mathcal{G}$.

\subsection{Understanding the conditional density $p_{X_{t}|X_{0}}(\cdot\mymid x_{0})$}

Conditional on $X_{t}=x_{t}$, for any $1\leq i\leq N_{\varepsilon}$
we have
\begin{align}
\mathbb{P}\left(X_{0}\in\mathcal{B}_{i}\mymid X_{t}=x_{t}\right) & =\frac{\mathbb{P}\left(X_{0}\in\mathcal{B}_{i},X_{t}=x_{t}\right)}{p_{X_{t}}\left(x_{t}\right)}=\frac{\mathbb{P}\left(X_{0}\in\mathcal{B}_{i},X_{t}=x_{t}\right)}{\sum_{1\leq j\leq N_{\varepsilon}}\mathbb{P}\left(X_{0}\in\mathcal{B}_{j},X_{t}=x_{t}\right)}\nonumber \\
 & \leq\frac{\mathbb{P}\left(X_{0}\in\mathcal{B}_{i},X_{t}=x_{t}\right)}{\mathbb{P}\left(X_{0}\in\mathcal{B}_{i(x_{t})},X_{t}=x_{t}\right)}=\frac{\mathbb{P}\left(X_{0}\in\mathcal{B}_{i}\right)\mathbb{P}\left(X_{t}=x_{t}\mymid X_{0}\in\mathcal{B}_{i}\right)}{\mathbb{P}\left(X_{0}\in\mathcal{B}_{i(x_{t})}\right)\mathbb{P}\left(X_{t}=x_{t}\mymid X_{0}\in\mathcal{B}_{i(x_{t})}\right)}\nonumber \\
 & \leq\exp\left(C_{1}k\log T\right)\cdot\frac{\mathbb{P}\left(X_{t}=x_{t}\mymid X_{0}\in\mathcal{B}_{i}\right)}{\mathbb{P}\left(X_{t}=x_{t}\mymid X_{0}\in\mathcal{B}_{i(x_{t})}\right)}\cdot\mathbb{P}\left(X_{0}\in\mathcal{B}_{i}\right).\label{eq:proof-lemma-set-1}
\end{align}
Here the last relation follows from $\mathbb{P}(X_{0}\in\mathcal{B}_{i(x_{t})})\geq\exp(-C_{1}k\log T)$
due to $i(x_{t})\in\mathcal{I}$. We have
\begin{align}
\mathbb{P}\left(X_{t}=x_{t}\mymid X_{0}\in\mathcal{B}_{i}\right) & =\frac{\mathbb{P}\left(X_{t}=x_{t},X_{0}\in\mathcal{B}_{i}\right)}{\mathbb{P}\left(X_{0}\in\mathcal{B}_{i}\right)}=\frac{1}{\mathbb{P}\left(X_{0}\in\mathcal{B}_{i}\right)}\int_{\tilde{x}\in\mathcal{B}_{i}}\mathbb{P}\left(X_{t}=x_{t},X_{0}=\tilde{x}\right)\mathrm{d}\tilde{x}\nonumber \\
 & =\frac{1}{\mathbb{P}\left(X_{0}\in\mathcal{B}_{i}\right)}\int_{\tilde{x}\in\mathcal{B}_{i}}\mathbb{P}\left(X_{t}=x_{t}\mymid X_{0}=\tilde{x}\right)\mathbb{P}\left(X_{0}=\tilde{x}\right)\mathrm{d}\tilde{x}\nonumber \\
 & \leq\max_{\tilde{x}\in\mathcal{B}_{i}}\,\mathbb{P}\left(X_{t}=x_{t}\mymid X_{0}=\tilde{x}\right).\label{eq:proof-lemma-set-2}
\end{align}
For any $\tilde{x}\in\mathcal{B}_{i}$, since $X_{t}\mymid X_{0}=\widetilde{x}\sim\mathcal{N}(\sqrt{\overline{\alpha}_{t}}\tilde{x},(1-\overline{\alpha}_{t})I_{d})$,
we have
\begin{align}
\mathbb{P}\left(X_{t}=x_{t}\mymid X_{0}=\tilde{x}\right) & =\left[2\pi\left(1-\overline{\alpha}_{t}\right)\right]^{-d/2}\exp\left(-\frac{\Vert x_{t}-\sqrt{\overline{\alpha}_{t}}\tilde{x}\Vert_{2}^{2}}{2\left(1-\overline{\alpha}_{t}\right)}\right)\nonumber \\
 & \leq\left[2\pi\left(1-\overline{\alpha}_{t}\right)\right]^{-d/2}\exp\left(-\frac{(\Vert x_{t}-\sqrt{\overline{\alpha}_{t}}x_{i}^{\star}\Vert_{2}-\sqrt{\overline{\alpha}_{t}}\varepsilon)^{2}}{2\left(1-\overline{\alpha}_{t}\right)}\right).\label{eq:proof-lemma-set-3}
\end{align}
Taking (\ref{eq:proof-lemma-set-2}) and (\ref{eq:proof-lemma-set-3})
collectively to achieve
\begin{align}
\mathbb{P}\left(X_{t}=x_{t}\mymid X_{0}\in\mathcal{B}_{i}\right) & \leq\left[2\pi\left(1-\overline{\alpha}_{t}\right)\right]^{-d/2}\exp\left(-\frac{(\Vert x_{t}-\sqrt{\overline{\alpha}_{t}}x_{i}^{\star}\Vert_{2}-\sqrt{\overline{\alpha}_{t}}\varepsilon)^{2}}{2\left(1-\overline{\alpha}_{t}\right)}\right).\label{eq:proof-lemma-set-4}
\end{align}
By similar argument in (\ref{eq:proof-lemma-set-2}), (\ref{eq:proof-lemma-set-3})
and (\ref{eq:proof-lemma-set-4}), we can show that
\begin{equation}
\mathbb{P}\left(X_{t}=x_{t}\mymid X_{0}\in\mathcal{B}_{i(x_{t})}\right)\geq\left[2\pi\left(1-\overline{\alpha}_{t}\right)\right]^{-d/2}\exp\Bigg(-\frac{(\Vert x_{t}-\sqrt{\overline{\alpha}_{t}}x_{i(x_{t})}^{\star}\Vert_{2}+\sqrt{\overline{\alpha}_{t}}\varepsilon)^{2}}{2\left(1-\overline{\alpha}_{t}\right)}\Bigg).\label{eq:proof-lemma-set-5}
\end{equation}
Combine (\ref{eq:proof-lemma-set-4}) and (\ref{eq:proof-lemma-set-5})
to achieve
\begin{align}
 & \frac{\mathbb{P}\left(X_{t}=x_{t}\mymid X_{0}\in\mathcal{B}_{i}\right)}{\mathbb{P}\left(X_{t}=x_{t}\mymid X_{0}\in\mathcal{B}_{i(x_{t})}\right)}\leq\exp\left[-\frac{(\Vert x_{t}-\sqrt{\overline{\alpha}_{t}}x_{i}^{\star}\Vert_{2}-\sqrt{\overline{\alpha}_{t}}\varepsilon)^{2}}{2\left(1-\overline{\alpha}_{t}\right)}+\frac{(\Vert x_{t}-\sqrt{\overline{\alpha}_{t}}x_{i(x_{t})}^{\star}\Vert_{2}+\sqrt{\overline{\alpha}_{t}}\varepsilon)^{2}}{2\left(1-\overline{\alpha}_{t}\right)}\right]\nonumber \\
 & \quad\leq\exp\left[-\frac{\Vert x_{t}-\sqrt{\overline{\alpha}_{t}}x_{i}^{\star}\Vert_{2}^{2}-\Vert x_{t}-\sqrt{\overline{\alpha}_{t}}x_{i(x_{t})}^{\star}\Vert_{2}^{2}-2\sqrt{\overline{\alpha}_{t}}\varepsilon(\Vert x_{t}-\sqrt{\overline{\alpha}_{t}}x_{i}^{\star}\Vert_{2}+\Vert x_{t}-\sqrt{\overline{\alpha}_{t}}x_{i(x_{t})}^{\star}\Vert_{2})}{2\left(1-\overline{\alpha}_{t}\right)}\right].\label{eq:proof-lemma-set-6}
\end{align}
Next, we will discuss the implication of the above analysis for any $i\notin\mathcal{I}(x_{t};C_{3})$, i.e., $\mathcal{B}_{i}\subseteq\mathcal{Y}_{t}(x_{t})$.

For any $i\notin\mathcal{I}(x_{t};C_{3})$, we have
\begin{align}
 & \Vert x_{t}-\sqrt{\overline{\alpha}_{t}}x_{i}^{\star}\Vert_{2}^{2}-\Vert x_{t}-\sqrt{\overline{\alpha}_{t}}x_{i(x_{t})}^{\star}\Vert_{2}^{2}=\overline{\alpha}_{t}\Vert x_{i(x_{t})}^{\star}-x_{i}^{\star}\Vert_{2}^{2}+2\sqrt{\overline{\alpha}_{t}}(x_{i(x_{t})}^{\star}-x_{i}^{\star})^{\top}(x_{t}-\sqrt{\overline{\alpha}_{t}}x_{i(x_{t})}^{\star})\nonumber \\
 & \quad\overset{\text{(i)}}{=}\overline{\alpha}_{t}\Vert x_{i(x_{t})}^{\star}-x_{i}^{\star}\Vert_{2}^{2}+2\sqrt{\overline{\alpha}_{t}\left(1-\overline{\alpha}_{t}\right)}(x_{i(x_{t})}^{\star}-x_{i}^{\star})^{\top}\omega+2\overline{\alpha}_{t}\sqrt{1-\overline{\alpha}_{t}}(x_{i(x_{t})}^{\star}-x_{i}^{\star})^{\top}(x_{0}(x_{t})-x_{i(x_{t})}^{\star})\nonumber \\
 & \quad\overset{\text{(ii)}}{\geq}\overline{\alpha}_{t}\Vert x_{i(x_{t})}^{\star}-x_{i}^{\star}\Vert_{2}^{2}-2\sqrt{C_{1}\overline{\alpha}_{t}\left(1-\overline{\alpha}_{t}\right)}\sqrt{k\log T}\Vert x_{i(x_{t})}^{\star}-x_{i}^{\star}\Vert_{2}-2\overline{\alpha}_{t}\sqrt{1-\overline{\alpha}_{t}}\varepsilon\Vert x_{i(x_{t})}^{\star}-x_{i}^{\star}\Vert_{2}\nonumber \\
 & \quad\overset{\text{(iii)}}{\geq}\frac{1}{2}\overline{\alpha}_{t}\Vert x_{i(x_{t})}^{\star}-x_{i}^{\star}\Vert_{2}^{2}.\label{eq:proof-lemma-set-7'}
\end{align}
Here step (i) uses the decomposition (\ref{eq:proof-lemma-set-decom});
step (ii) follows from $\omega\in\mathcal{G}$, Cauchy-Schwarz inequality
and the fact that $\Vert x_{0}(x_{t})-x_{i(x_{t})}^{\star}\Vert_{2}\leq\varepsilon$;
while the correctness of step (iii) is equivalent to
\[
\sqrt{\overline{\alpha}_{t}}\Vert x_{i(x_{t})}^{\star}-x_{i}^{\star}\Vert_{2}\geq4\sqrt{C_{1}\left(1-\overline{\alpha}_{t}\right)}\sqrt{k\log T}+4\sqrt{\overline{\alpha}_{t}\left(1-\overline{\alpha}_{t}\right)}\varepsilon,
\]
which follows from $i\notin\mathcal{I}(x_{t};C_{3})$, the assumptions
that $C_{3}\gg C_{1}$, and (\ref{eq:eps-condition}). In addition,
we also have
\begin{align}
 & \Vert x_{t}-\sqrt{\overline{\alpha}_{t}}x_{i}^{\star}\Vert_{2}+\Vert x_{t}-\sqrt{\overline{\alpha}_{t}}x_{i(x_{0})}^{\star}\Vert_{2}\overset{\text{(i)}}{\leq}\sqrt{\overline{\alpha}_{t}}\Vert x_{0}(x_{t})-x_{i}^{\star}\Vert_{2}+\sqrt{\overline{\alpha}_{t}}\Vert x_{0}(x_{t})-x_{i(x_{t})}^{\star}\Vert_{2}+2\sqrt{1-\overline{\alpha}_{t}}\Vert\omega\Vert_{2}\nonumber \\
 & \qquad\overset{\text{(ii)}}{\leq}\sqrt{\overline{\alpha}_{t}}\Vert x_{i(x_{t})}^{\star}-x_{i}^{\star}\Vert_{2}+2\sqrt{\overline{\alpha}_{t}}\Vert x_{0}(x_{t})-x_{i(x_{t})}^{\star}\Vert_{2}+2\sqrt{1-\overline{\alpha}_{t}}\left(2\sqrt{d}+\sqrt{C_{1}k\log T}\right)\nonumber \\
 & \qquad\overset{\text{(iii)}}{\leq}\sqrt{\overline{\alpha}_{t}}\Vert x_{i(x_{t})}^{\star}-x_{i}^{\star}\Vert_{2}+3\sqrt{1-\overline{\alpha}_{t}} (2\sqrt{d}+\sqrt{C_{1}k\log T} ).\label{eq:proof-lemma-set-8'}
\end{align}
Here step (i) follows from the decomposition (\ref{eq:proof-lemma-set-decom});
step (ii) utilizes the triangle inequality; whereas step (iii) follows
from $\Vert x_{0}(x_{t})-x_{i(x_{t})}^{\star}\Vert_{2}\leq\varepsilon$
and the condition (\ref{eq:eps-condition}). We can substitute the bounds (\ref{eq:proof-lemma-set-7'}) and (\ref{eq:proof-lemma-set-8'})
into (\ref{eq:proof-lemma-set-6}) to get
\begin{align}
\frac{\mathbb{P}(X_{t}=x_{t}\mymid X_{0}\in\mathcal{B}_{i})}{\mathbb{P}(X_{t}=x_{t}\mymid X_{0}\in\mathcal{B}_{i(x_{t})})} & \leq\exp\left(-\frac{\overline{\alpha}_{t}}{8\left(1-\overline{\alpha}_{t}\right)}\Vert x_{i(x_{t})}^{\star}-x_{i}^{\star}\Vert_{2}^{2}\right),\label{eq:proof-lemma-set-9'}
\end{align}
provided that (\ref{eq:eps-condition}) holds and $C_{3}$ is sufficiently
large.
Since $i\notin\mathcal{I}(x_{t};C_{3})$, we know that $\overline{\alpha}_{t}\Vert x_{i(x_{t})}^{\star}-x_{i}^{\star}\Vert_{2}^{2}>C_{3}k(1-\overline{\alpha}_{t})\log T$,
hence when $C_{3}\gg C_{1}$, we learn from (\ref{eq:proof-lemma-set-1})
and (\ref{eq:proof-lemma-set-9'}) that
\begin{align}
\mathbb{P}\left(X_{0}\in\mathcal{B}_{i}\mymid X_{t}=x_{t}\right) & \leq\exp\left(C_{1}k\log T-\frac{\overline{\alpha}_{t}}{8\left(1-\overline{\alpha}_{t}\right)}\Vert x_{i(x_{t})}^{\star}-x_{i}^{\star}\Vert_{2}^{2}\right)\mathbb{P}\left(X_{0}\in\mathcal{B}_{i}\right)\nonumber \\
 & \leq\exp\left(-\frac{\overline{\alpha}_{t}}{16\left(1-\overline{\alpha}_{t}\right)}\Vert x_{i(x_{t})}^{\star}-x_{i}^{\star}\Vert_{2}^{2}\right)\mathbb{P}\left(X_{0}\in\mathcal{B}_{i}\right).\label{eq:proof-lemma-set-10'}
\end{align}

\subsection{Proof of Lemma \ref{lemma:At-SDE} \label{subsec:proof-lemma-At-SDE}}

It is straightforward to check that
\[
\mathbb{P}\left(\left(X_{t},X_{t-1}\right)\in\mathcal{A}_{t}\right)\overset{\text{(i)}}{=}\mathbb{P}\left(X_{t}\in\mathcal{T}_{t},W_{t}\in\mathcal{G}\right)\overset{\text{(ii)}}{\leq}\mathbb{P}\left(X_{0}\in\cup_{i\in\mathcal{I}}\mathcal{B}_{i},W_{t}\in\mathcal{G},\overline{W}_{t}\in\mathcal{G}\right),
\]
where step (i) follows from the update rule (\ref{eq:forward-update}),
and step (ii) follows from the relation (\ref{eq:forward-formula}).
Therefore we have
\begin{equation}
\mathbb{P}\left(\left(X_{t},X_{t-1}\right)\notin\mathcal{A}_{t}\right)\leq\mathbb{P}\left(X_{0}\notin\cup_{i\in\mathcal{I}}\mathcal{B}_{i}\right)+\mathbb{P}\left(W_{t}\notin\mathcal{G}\right)+\mathbb{P}\left(\overline{W}_{t}\notin\mathcal{G}\right).\label{eq:proof-At-1}
\end{equation}
By definition of the set $\mathcal{I}$, we have
\begin{equation}
\mathbb{P}\left(X_{0}\notin\cup_{i\in\mathcal{I}}\mathcal{B}_{i}\right)\leq N_{\varepsilon}\exp\left(-C_{1}k\log T\right)\leq\exp\left(C_{\mathsf{cover}}k\log T-C_{1}k\log T\right)\leq\frac{1}{3}\exp\left(-\frac{C_{1}}{4}k\log T\right)\label{eq:proof-At-2}
\end{equation}
as long as $C_{1}\gg C_{\mathsf{cover}}$. In addition, since $W_{t},\overline{W}_{t}\sim\mathcal{N}(0,I_{d})$,
by the definition of $\mathcal{G}$ we know that
\begin{align}
\mathbb{P}\left(W_{t}\notin\mathcal{G}\right) & \leq\mathbb{P}\left(\Vert W_{t}\Vert_{2}>\sqrt{d}+\sqrt{C_{1}k\log T}\right)+\sum_{i=1}^{N_{\varepsilon}}\sum_{j=1}^{N_{\varepsilon}}\mathbb{P}\left(\vert(x_{i}^{\star}-x_{j}^{\star})^{\top}W_{t}\vert>\sqrt{C_{1}k\log T}\Vert x_{i}^{\star}-x_{j}^{\star}\Vert_{2}\right)\nonumber \\
 & \overset{\text{(i)}}{\leq}\left(N_{\varepsilon}^{2}+1\right)\exp\left(-\frac{C_{1}}{2}k\log T\right)\leq\left(\exp\left(2C_{\mathsf{cover}}k\log T\right)+1\right)\exp\left(-\frac{C_{1}}{2}k\log T\right)\nonumber \\
 & \overset{\text{(ii)}}{\leq}\frac{1}{3}\exp\left(-\frac{C_{1}}{4}k\log T\right)\label{eq:proof-At-3}
\end{align}
Here step (i) follows from concentration bounds for Gaussian and chi-square
variables (see Lemma~\ref{lemma:concentration}); while step (ii)
holds as long as $C_{1}\gg C_{\mathsf{cover}}$. The same bound also
holds for $\mathbb{P}(\overline{W}_{t}\notin\mathcal{G})$. Taking
(\ref{eq:proof-At-1}), (\ref{eq:proof-At-2}) and (\ref{eq:proof-At-3})
collectively yields
\[
\mathbb{P}\left(\left(X_{t},X_{t-1}\right)\notin\mathcal{A}_{t}\right)\leq\exp\left(-\frac{C_{1}}{4}k\log T\right)
\]
as claimed. 

\subsection{Proof of Lemma \ref{lemma:cond-density} \label{subsec:proof-lemma-cond-density}}

For any deterministic pairs $(x_{t},x_{t-1})$, we have
\begin{align}
p_{X_{t-1}|X_{t}}\left(x_{t-1}\mymid x_{t}\right) & =\frac{1}{p_{X_{t}}\left(x_{t}\right)}p_{X_{t-1},X_{t}}\left(x_{t-1},x_{t}\right)=\frac{p_{X_{t-1}}\left(x_{t-1}\right)}{p_{X_{t}}\left(x_{t}\right)}p_{X_{t}|X_{t-1}}\left(x_{t}\mymid x_{t-1}\right).\label{eq:proof-sde-1}
\end{align}
Recall that $X_{t}\mymid X_{t-1}=x_{t-1}\sim\mathcal{N}(\sqrt{\alpha_{t}}x_{t-1},(1-\alpha_{t})I_{d})$,
therefore we have
\begin{align}
p_{X_{t}|X_{t-1}}\left(x_{t}\mymid x_{t-1}\right) & =\left[2\pi\left(1-\alpha_{t}\right)\right]^{-d/2}\exp\left(-\frac{1}{2\left(1-\alpha_{t}\right)}\Vert x_{t}-\sqrt{\alpha_{t}}x_{t-1}\Vert_{2}^{2}\right).\label{eq:proof-sde-2}
\end{align}
Next, we analyze the density ratio $p_{X_{t-1}}(x_{t-1})/p_{X_{t}}(x_{t})$.
It would be easier to do a change of variable 
\begin{equation}
p_{X_{t-1}}\left(x_{t-1}\right)=\alpha_{t}^{d/2}p_{\sqrt{\alpha_{t}}X_{t-1}}\left(\sqrt{\alpha_{t}}x_{t-1}\right).\label{eq:proof-sde-3}
\end{equation}
Since $\sqrt{\alpha_{t}}X_{t-1}\mymid X_{0}=x_{0}\sim\mathcal{N}(\sqrt{\overline{\alpha}_{t}}x_{0},(\alpha_{t}-\overline{\alpha}_{t})I_{d})$,
we can write
\begin{align}
\frac{p_{\sqrt{\alpha_{t}}X_{t-1}}\left(\sqrt{\alpha_{t}}x_{t-1}\right)}{p_{X_{t}}\left(x_{t}\right)} & =\frac{1}{p_{X_{t}}\left(x_{t}\right)}\int_{x_{0}}p_{X_{0}}\left(x_{0}\right)p_{\sqrt{\alpha_{t}}X_{t-1}|X_{0}}\left(\sqrt{\alpha_{t}}x_{t-1}\mymid x_{0}\right)\mathrm{d}x_{0}\nonumber \\
 & =\frac{1}{p_{X_{t}}\left(x_{t}\right)}\int_{x_{0}}p_{X_{0}}\left(x_{0}\right)\left[2\pi\left(\alpha_{t}-\overline{\alpha}_{t}\right)\right]^{-d/2}\exp\left(-\frac{\Vert\sqrt{\alpha_{t}}x_{t-1}-\sqrt{\overline{\alpha}_{t}}x_{0}\Vert_{2}^{2}}{2(\alpha_{t}-\overline{\alpha}_{t})}\right)\mathrm{d}x_{0}.\label{eq:proof-sde-4}
\end{align}
We hope to connect the above quantity with the conditional density
\begin{align}
p_{X_{0}|X_{t}}\left(x_{0}\mymid x_{t}\right) & =\frac{p_{X_{0},X_{t}}\left(x_{0},x_{t}\right)}{p_{X_{t}}\left(x_{t}\right)}=\frac{p_{X_{0}}\left(x_{0}\right)}{p_{X_{t}}\left(x_{t}\right)}p_{X_{t}|X_{0}}\left(x_{t}\mymid x_{0}\right)\nonumber \\
 & =\frac{p_{X_{0}}\left(x_{0}\right)}{p_{X_{t}}\left(x_{t}\right)}\frac{1}{\left[2\pi(1-\overline{\alpha}_{t})\right]^{d/2}}\exp\left(-\frac{\Vert x_{t}-\sqrt{\overline{\alpha}_{t}}x_{0}\Vert_{2}^{2}}{2\left(1-\overline{\alpha}_{t}\right)}\right).\label{eq:proof-sde-5}
\end{align}
Towards this, we can deduce that
\begin{align}
 & \frac{p_{\sqrt{\alpha_{t}}X_{t-1}}\left(\sqrt{\alpha_{t}}x_{t-1}\right)}{p_{X_{t}}\left(x_{t}\right)}\overset{\text{(i)}}{=}\left(\frac{1-\overline{\alpha}_{t}}{\alpha_{t}-\overline{\alpha}_{t}}\right)^{d/2}\int_{x_{0}}\frac{p_{X_{0}}\left(x_{0}\right)}{p_{X_{t}}\left(x_{t}\right)}\left[2\pi(1-\overline{\alpha}_{t})\right]^{-d/2}\exp\left(-\frac{\Vert\sqrt{\alpha_{t}}x_{t-1}-\sqrt{\overline{\alpha}_{t}}x_{0}\Vert_{2}^{2}}{2(\alpha_{t}-\overline{\alpha}_{t})}\right)\mathrm{d}x_{0}\nonumber \\
 & \quad\overset{\text{(ii)}}{=}\left(\frac{1-\overline{\alpha}_{t}}{\alpha_{t}-\overline{\alpha}_{t}}\right)^{d/2}\int_{x_{0}}p_{X_{0}|X_{t}}\left(x_{0}\mymid x_{t}\right)\exp\left(\frac{\Vert x_{t}-\sqrt{\overline{\alpha}_{t}}x_{0}\Vert_{2}^{2}}{2\left(1-\overline{\alpha}_{t}\right)}-\frac{\Vert\sqrt{\alpha_{t}}x_{t-1}-\sqrt{\overline{\alpha}_{t}}x_{0}\Vert_{2}^{2}}{2(\alpha_{t}-\overline{\alpha}_{t})}\right)\mathrm{d}x_{0},\label{eq:proof-sde-5.5}
\end{align}
where step (i) follows from (\ref{eq:proof-sde-4}) and step (ii)
utilizes (\ref{eq:proof-sde-5}). The terms in the exponent can be
rearranged into
\begin{align*}
 & \frac{\Vert x_{t}-\sqrt{\overline{\alpha}_{t}}x_{0}\Vert_{2}^{2}}{2\left(1-\overline{\alpha}_{t}\right)}-\frac{\Vert\sqrt{\alpha_{t}}x_{t-1}-\sqrt{\overline{\alpha}_{t}}x_{0}\Vert_{2}^{2}}{2\left(\alpha_{t}-\overline{\alpha}_{t}\right)}=\frac{\Vert x_{t}-\sqrt{\overline{\alpha}_{t}}x_{0}\Vert_{2}^{2}-\Vert\sqrt{\alpha_{t}}x_{t-1}-\sqrt{\overline{\alpha}_{t}}x_{0}\Vert_{2}^{2}}{2\left(\alpha_{t}-\overline{\alpha}_{t}\right)}-\frac{\left(1-\alpha_{t}\right)\Vert x_{t}-\sqrt{\overline{\alpha}_{t}}x_{0}\Vert_{2}^{2}}{2\left(\alpha_{t}-\overline{\alpha}_{t}\right)\left(1-\overline{\alpha}_{t}\right)}\\
 & \quad=-\frac{\Vert\sqrt{\alpha_{t}}x_{t-1}-x_{t}\Vert_{2}^{2}+2\left(\sqrt{\alpha_{t}}x_{t-1}-x_{t}\right)^{\top}\left(x_{t}-\sqrt{\overline{\alpha}_{t}}x_{0}\right)}{2\left(\alpha_{t}-\overline{\alpha}_{t}\right)}-\frac{\left(1-\alpha_{t}\right)\Vert x_{t}-\sqrt{\overline{\alpha}_{t}}x_{0}\Vert_{2}^{2}}{2\left(\alpha_{t}-\overline{\alpha}_{t}\right)\left(1-\overline{\alpha}_{t}\right)}\\
 & \quad=-\frac{\Vert\sqrt{\alpha_{t}}x_{t-1}-x_{t}\Vert_{2}^{2}+2\left(\sqrt{\alpha_{t}}x_{t-1}-x_{t}\right)^{\top}\left(x_{t}-\sqrt{\overline{\alpha}_{t}}\widehat{x}_{0}\right)}{2\left(\alpha_{t}-\overline{\alpha}_{t}\right)}-\frac{\left(1-\alpha_{t}\right)\Vert x_{t}-\sqrt{\overline{\alpha}_{t}}\widehat{x}_{0}\Vert_{2}^{2}}{2\left(\alpha_{t}-\overline{\alpha}_{t}\right)\left(1-\overline{\alpha}_{t}\right)}+\Delta_{x_{t},x_{t-1}}\left(x_{0}\right)
\end{align*}
where we define
\[
\widehat{x}_{0}\coloneqq\mathbb{E}\left[X_{0}\mymid X_{t}=x_{t}\right]=\int_{x_{0}}x_{0}p_{X_{0}|X_{t}}\left(x_{0}\mymid x_{t}\right)\mathrm{d}x_{0},
\]
and
\begin{align*}
\Delta_{x_{t},x_{t-1}}\left(x_{0}\right) & \coloneqq-\frac{\sqrt{\overline{\alpha}_{t}}}{\alpha_{t}-\overline{\alpha}_{t}}\left(\sqrt{\alpha_{t}}x_{t-1}-x_{t}\right)^{\top}\left(\widehat{x}_{0}-x_{0}\right)-\frac{\left(1-\alpha_{t}\right)\sqrt{\overline{\alpha}_{t}}}{\left(\alpha_{t}-\overline{\alpha}_{t}\right)\left(1-\overline{\alpha}_{t}\right)}\left(x_{t}-\sqrt{\overline{\alpha}_{t}}\widehat{x}_{0}\right)^{\top}\left(\widehat{x}_{0}-x_{0}\right)\\
 & \qquad-\frac{\left(1-\alpha_{t}\right)\overline{\alpha}_{t}}{2\left(\alpha_{t}-\overline{\alpha}_{t}\right)\left(1-\overline{\alpha}_{t}\right)}\Vert\widehat{x}_{0}-x_{0}\Vert_{2}^{2}.
\end{align*}
Substituting the above relation into (\ref{eq:proof-sde-5.5}) yields
\begin{align}
\frac{p_{\sqrt{\alpha_{t}}X_{t-1}}\left(\sqrt{\alpha_{t}}x_{t-1}\right)}{p_{X_{t}}\left(x_{t}\right)} & =\left(\frac{1-\overline{\alpha}_{t}}{\alpha_{t}-\overline{\alpha}_{t}}\right)^{d/2}\exp\left[-\frac{\Vert\sqrt{\alpha_{t}}x_{t-1}-x_{t}\Vert_{2}^{2}}{2\left(\alpha_{t}-\overline{\alpha}_{t}\right)}\right]\nonumber \\
 & \qquad\quad\cdot\exp\left[-\frac{\left(\sqrt{\alpha_{t}}x_{t-1}-x_{t}\right)^{\top}\left(x_{t}-\sqrt{\overline{\alpha}_{t}}\widehat{x}_{0}\right)}{\alpha_{t}-\overline{\alpha}_{t}}-\frac{\left(1-\alpha_{t}\right)\Vert x_{t}-\sqrt{\overline{\alpha}_{t}}\widehat{x}_{0}\Vert_{2}^{2}}{2\left(\alpha_{t}-\overline{\alpha}_{t}\right)\left(1-\overline{\alpha}_{t}\right)}\right]\nonumber \\
 & \qquad\quad\cdot\int_{x_{0}}p_{X_{0}|X_{t}}\left(x_{0}\mymid x_{t}\right)\exp\left(\Delta_{x_{t},x_{t-1}}\left(x_{0}\right)\right)\mathrm{d}x_{0}.\label{eq:proof-sde-6}
\end{align}
Therefore we have
\begin{align}
 & p_{X_{t-1}|X_{t}}\left(x_{t-1}\mymid x_{t}\right)\overset{\text{(i)}}{=}\alpha_{t}^{d/2}\frac{p_{\sqrt{\alpha_{t}}X_{t-1}}\left(\sqrt{\alpha_{t}}x_{t-1}\right)}{p_{X_{t}}\left(x_{t}\right)}p_{X_{t}|X_{t-1}}\left(x_{t}\mymid x_{t-1}\right)\label{eq:proof-sde-7}\\
 & \quad\overset{\text{(ii)}}{=}\alpha_{t}^{d/2}\left(\frac{1-\overline{\alpha}_{t}}{\alpha_{t}-\overline{\alpha}_{t}}\right)^{d/2}\exp\left(-\frac{\left(\sqrt{\alpha_{t}}x_{t-1}-x_{t}\right)^{\top}\left(x_{t}-\sqrt{\overline{\alpha}_{t}}\widehat{x}_{0}\right)}{\alpha_{t}-\overline{\alpha}_{t}}-\frac{\left(1-\alpha_{t}\right)\Vert x_{t}-\sqrt{\overline{\alpha}_{t}}\widehat{x}_{0}\Vert_{2}^{2}}{2\left(\alpha_{t}-\overline{\alpha}_{t}\right)\left(1-\overline{\alpha}_{t}\right)}\right)\nonumber \\
 & \quad\quad\cdot\left[2\pi\left(1-\alpha_{t}\right)\right]^{-d/2}\exp\left(-\frac{\left(1-\overline{\alpha}_{t}\right)\left\Vert x_{t}-\sqrt{\alpha_{t}}x_{t-1}\right\Vert _{2}^{2}}{2\left(1-\alpha_{t}\right)\left(\alpha_{t}-\overline{\alpha}_{t}\right)}\right)\cdot\int_{x_{0}}p_{X_{0}|X_{t}}\left(x_{0}\mymid x_{t}\right)\exp\left(\Delta_{x_{t},x_{t-1}}\left(x_{0}\right)\right)\mathrm{d}x_{0}\nonumber \\
 & \quad\overset{\text{(iii)}}{=}\frac{\alpha_{t}^{d/2}}{\left(2\pi\sigma_{t}^{\star2}\right)^{d/2}}\exp\left(-\frac{\Vert\sqrt{\alpha_{t}}x_{t-1}-x_{t}-\eta_{t}^{\star}s_{t}^{\star}\left(x_{t}\right)\Vert_{2}^{2}}{2\sigma_{t}^{\star2}}\right)\cdot\int_{x_{0}}p_{X_{0}|X_{t}}\left(x_{0}\mymid x_{t}\right)\exp\left(\Delta_{x_{t},x_{t-1}}\left(x_{0}\right)\right)\mathrm{d}x_{0}.\nonumber 
\end{align}
Here step (i) follows from (\ref{eq:proof-sde-1}) and (\ref{eq:proof-sde-3});
step (ii) follows from (\ref{eq:proof-sde-2}) and (\ref{eq:proof-sde-6});
whereas step (iii) follows from the definition of $\eta_{t}^{\star}$
and $\sigma_{t}^{\star}$ (cf.~(\ref{eq:defn-step-size})) as well
as the fact that
\[
s_{t}^{\star}\left(x_{t}\right)=-\frac{1}{1-\overline{\alpha}_{t}}\int_{x_{0}}p_{X_{0}|X_{t}}\left(x_{0}\mymid x_{t}\right)\left(x-\sqrt{\overline{\alpha}_{t}}x_{0}\right)\mathrm{d}x_{0}=-\frac{1}{1-\overline{\alpha}_{t}}\left(x_{t}-\sqrt{\overline{\alpha}_{t}}\widehat{x}_{0}\right).
\]

\subsection{Proof of Lemma \ref{lemma:Delta-SDE} \label{subsec:proof-lemma-Delta-SDE}}

For any $(x_{t},x_{t-1})\in\mathcal{A}_{t}$, we know that $\omega'\coloneqq(x_{t}-\sqrt{\alpha_{t}}x_{t-1})/\sqrt{1-\alpha_{t}}\in\mathcal{G}$.
We will upper bound the integral with two terms
\begin{align*}
\int_{x_{0}}p_{X_{0}|X_{t}}\left(x_{0}\mymid x_{t}\right)\exp\left(\Delta\left(x_{t},x_{t-1},x_{0}\right)\right)\mathrm{d}x_{0} & =\underbrace{\int_{\mathcal{X}_{t}(x_{t})}p_{X_{0}|X_{t}}\left(x_{0}\mymid x_{t}\right)\exp\left(\Delta\left(x_{t},x_{t-1},x_{0}\right)\right)\mathrm{d}x_{0}}_{\eqqcolon I_{1}}\\
 & \quad+\underbrace{\int_{\mathcal{Y}_{t}(x_{t})}p_{X_{0}|X_{t}}\left(x_{0}\mymid x_{t}\right)\exp\left(\Delta\left(x_{t},x_{t-1},x_{0}\right)\right)\mathrm{d}x_{0}}_{\eqqcolon I_{2}},
\end{align*}
where we recall that
\begin{align*}
\Delta\left(x_{t},x_{t-1},x_{0}\right) & =\underbrace{\frac{\sqrt{\overline{\alpha}_{t}\left(1-\alpha_{t}\right)}}{\alpha_{t}-\overline{\alpha}_{t}}\left(\widehat{x}_{0}-x_{0}\right)^{\top}\omega'}_{\eqqcolon\Delta_{1}(x_{0})}-\underbrace{\frac{\left(1-\alpha_{t}\right)\sqrt{\overline{\alpha}_{t}}}{\left(\alpha_{t}-\overline{\alpha}_{t}\right)\sqrt{1-\overline{\alpha}_{t}}}\left(\widehat{x}_{0}-x_{0}\right)^{\top}\omega}_{\eqqcolon\Delta_{2}(x_{0})}\\
 & \qquad-\underbrace{\frac{\left(1-\alpha_{t}\right)\overline{\alpha}_{t}}{\left(\alpha_{t}-\overline{\alpha}_{t}\right)\left(1-\overline{\alpha}_{t}\right)}\left(x_{0}(x_{t})-\widehat{x}_{0}\right)^{\top}\left(\widehat{x}_{0}-x_{0}\right)}_{\eqqcolon\Delta_{3}(x_{0})}-\underbrace{\frac{\left(1-\alpha_{t}\right)\overline{\alpha}_{t}}{2\left(\alpha_{t}-\overline{\alpha}_{t}\right)\left(1-\overline{\alpha}_{t}\right)}\left\Vert \widehat{x}_{0}-x_{0}\right\Vert _{2}^{2}}_{\eqqcolon\Delta_{4}(x_{0})}.
\end{align*}
In what follows, we will use $\Delta(x_{0})$ instead of $\Delta(x_{t},x_{t-1},x_{0})$
when there is no confusion. Since $(x_{t},x_{t-1})\in\mathcal{A}_{t}$,
we know that $\omega'\coloneqq(x_{t}-\sqrt{\alpha_{t}}x_{t-1})/\sqrt{1-\alpha_{t}}\in\mathcal{G}$.
We decompose $\widehat{x}_{0}$ into
\begin{align}
\widehat{x}_{0} & =\int_{x_{0}}x_{0}'p_{X_{0}|X_{t}}\left(x_{0}'\mymid x_{t}\right)\mathrm{d}x_{0}'\nonumber \\
 & =\underbrace{x_{i(x_{t})}^{\star}+\int_{\mathcal{X}_{t}(x_{t})}(x_{0}'-x_{i(x_{t})}^{\star})p_{X_{0}|X_{t}}\left(x_{0}'\mymid x_{t}\right)\mathrm{d}x_{0}'}_{\eqqcolon\overline{x}_{0}}+\underbrace{\int_{\mathcal{Y}_{t}(x_{t})}(x_{0}'-x_{i(x_{t})}^{\star})p_{X_{0}|X_{t}}\left(x_{0}'\mymid x_{t}\right)\mathrm{d}x_{0}'}_{\eqqcolon\delta}.\label{eq:x0-hat-decom}
\end{align}
Since $\mathcal{X}_{t}(x_{t})$ is a ball in $\mathbb{R}^{d}$ centered
at $x_{i(x_{t})}^{\star}$, it is straightforward to check that $\overline{x}_{0}\in\mathcal{X}_{t}(x_{t})$.
We also have
\begin{align*}
\Vert\delta\Vert_{2} & \leq\int_{\mathcal{Y}_{t}(x_{t})}\Vert x_{0}'-x_{i(x_{t})}^{\star}\Vert_{2}p_{X_{0}|X_{t}}\left(x_{0}\mymid x_{t}\right)\mathrm{d}x_{0}'\\
 & \overset{\text{(i)}}{\leq}\sum_{i\notin\mathcal{I}(x_{t};C_{3})}\left(\Vert x_{i}^{\star}-x_{i(x_{t})}^{\star}\Vert_{2}+\varepsilon\right)\mathbb{P}\left(X_{0}\in\mathcal{B}_{i}\mymid X_{t}=x_{t}\right)\\
 & \overset{\text{(ii)}}{\leq}\sum_{i\notin\mathcal{I}(x_{t};C_{3})}\left(\Vert x_{i}^{\star}-x_{i(x_{t})}^{\star}\Vert_{2}+\varepsilon\right)\exp\left(-\frac{\overline{\alpha}_{t}}{16\left(1-\overline{\alpha}_{t}\right)}\Vert x_{i(x_{t})}^{\star}-x_{i}^{\star}\Vert_{2}^{2}\right)\mathbb{P}\left(X_{0}\in\mathcal{B}_{i}\right).
\end{align*}
Here step (i) holds since for any $i\notin\mathcal{I}(x_{t};C_{3})$
and $x_{0}'\in\mathcal{B}_{i}$, 
\[
\Vert x_{0}'-x_{i(x_{t})}^{\star}\Vert_{2}\leq\Vert x_{i}^{\star}-x_{i(x_{t})}^{\star}\Vert_{2}+\Vert x_{0}'-x_{i}^{\star}\Vert_{2}\leq\Vert x_{i}^{\star}-x_{i(x_{t})}^{\star}\Vert_{2}+\varepsilon;
\]
while step (ii) follows from (\ref{eq:proof-lemma-set-10'}). For
any $i\notin\mathcal{I}(x_{t};C_{3})$, we know that $\overline{\alpha}_{t}\Vert x_{i(x_{t})}^{\star}-x_{i}^{\star}\Vert_{2}^{2}>C_{3}k(1-\overline{\alpha}_{t})\log T$,
hence we can check that 
\begin{align*}
\big(\Vert x_{i}^{\star}-x_{i(x_{t})}^{\star}\Vert_{2}+\varepsilon\big)\exp\left(-\frac{\overline{\alpha}_{t}}{16\left(1-\overline{\alpha}_{t}\right)}\Vert x_{i(x_{t})}^{\star}-x_{i}^{\star}\Vert_{2}^{2}\right) & \leq\Bigg(\sqrt{\frac{C_{3}k(1-\overline{\alpha}_{t})\log T}{\overline{\alpha}_{t}}}+\varepsilon\Bigg)\exp\left(-\frac{C_{3}k\log T}{16}\right)\\
 & \leq\sqrt{\frac{1-\overline{\alpha}_{t}}{\overline{\alpha}_{t}}}\exp\left(-\frac{C_{3}k\log T}{32}\right)
\end{align*}
as long as $C_{3}$ is sufficiently large and the condition (\ref{eq:eps-condition})
holds. Therefore
\begin{align}
\Vert\delta\Vert_{2} & \leq\sum_{i\notin\mathcal{I}(x_{t};C_{3})}\sqrt{\frac{1-\overline{\alpha}_{t}}{\overline{\alpha}_{t}}}\exp\left(-\frac{C_{3}k\log T}{32}\right)\mathbb{P}\left(X_{0}\in\mathcal{B}_{i}\right)\leq\sqrt{\frac{1-\overline{\alpha}_{t}}{\overline{\alpha}_{t}}}\exp\left(-\frac{C_{3}k\log T}{32}\right).\label{eq:proof-delta-bound}
\end{align}

\subsubsection{Step 1: deriving an upper bound for $\Delta(x_{0})$}

Suppose that $x_{0}\in\mathcal{B}_{i}$ for some $1\leq i\leq N_{\varepsilon}$
(notice that here we are not requiring that $i\in\mathcal{I}$). We
will bound each of $|\Delta_{i}(x_{0})$ for $i=1,2,3,4$. We first
record two basic facts about the step sizes, which are immediate consequences
of Lemma~\ref{lemma:step-size}:
\begin{align}
\frac{\sqrt{\overline{\alpha}_{t}\left(1-\alpha_{t}\right)}}{\alpha_{t}-\overline{\alpha}_{t}} & =\sqrt{\frac{\overline{\alpha}_{t}}{1-\overline{\alpha}_{t}}}\sqrt{1+\frac{1-\alpha_{t}}{\alpha_{t}-\overline{\alpha}_{t}}}\sqrt{\frac{1-\alpha_{t}}{\alpha_{t}-\overline{\alpha}_{t}}}\leq\sqrt{\frac{\overline{\alpha}_{t}}{1-\overline{\alpha}_{t}}}\sqrt{1+\frac{8c_{1}\log T}{T}}\sqrt{\frac{8c_{1}\log T}{T}}\nonumber \\
 & \leq3\sqrt{\frac{c_{1}\log T}{T}}\sqrt{\frac{\overline{\alpha}_{t}}{1-\overline{\alpha}_{t}}},\label{eq:proof-lemma-set-step-size-1}
\end{align}
as long as $T$ is sufficiently large, and
\begin{align}
\frac{\left(1-\alpha_{t}\right)\sqrt{\overline{\alpha}_{t}}}{\left(\alpha_{t}-\overline{\alpha}_{t}\right)\sqrt{1-\overline{\alpha}_{t}}} & \leq\frac{8c_{1}\log T}{T}\sqrt{\frac{\overline{\alpha}_{t}}{1-\overline{\alpha}_{t}}}.\label{eq:proof-lemma-set-step-size-2}
\end{align}
We learn from (\ref{eq:proof-lemma-omega-inner}) that
\begin{equation}
\max\left\{ \left|(\widehat{x}_{0}-x_{0})^{\top}\omega\right|,\left|(\widehat{x}_{0}-x_{0})^{\top}\omega'\right|\right\} \leq\sqrt{C_{1}k\log T}\Vert\widehat{x}_{0}-x_{0}\Vert_{2}+\big(4\sqrt{d}+4\sqrt{C_{1}k\log T}\big)\varepsilon.\label{eq:proof-lemma-set-11}
\end{equation}
We also have
\begin{align}
\Vert\widehat{x}_{0}-x_{0}\Vert_{2} & \leq\Vert\overline{x}_{0}-x_{0}\Vert_{2}+\Vert\delta\Vert_{2}\overset{\text{(i)}}{\leq}\Vert x_{i(x_{t})}^{\star}-x_{i}^{\star}\Vert_{2}+\Vert x_{i(x_{t})}^{\star}-\overline{x}_{0}\Vert_{2}+\varepsilon+\Vert\delta\Vert_{2}\nonumber \\
 & \overset{\text{(ii)}}{\leq}\Vert x_{i(x_{t})}^{\star}-x_{i}^{\star}\Vert_{2}+3\sqrt{\frac{C_{3}k(1-\overline{\alpha}_{t})\log T}{\overline{\alpha}_{t}}}+\varepsilon+\sqrt{\frac{1-\overline{\alpha}_{t}}{\overline{\alpha}_{t}}}\exp\left(-\frac{C_{3}k\log T}{32}\right)\nonumber \\
 & \overset{\text{(iii)}}{\leq}\Vert x_{i(x_{t})}^{\star}-x_{i}^{\star}\Vert_{2}+4\sqrt{\frac{C_{3}k(1-\overline{\alpha}_{t})\log T}{\overline{\alpha}_{t}}}.\label{eq:proof-lemma-set-12}
\end{align}
Here step (i) holds since $x_{0}\in\mathcal{B}_{i}$, hence $\Vert x_{0}-x_{i}^{\star}\Vert_{2}\leq\varepsilon$;
step (ii) follows from (\ref{eq:proof-lemma-set-Xt-dist}) and the
fact that $x_{i(x_{t})}^{\star},\overline{x}_{0}\in\mathcal{X}_{t}(x_{t})$;
while step (iii) follows from (\ref{eq:eps-condition}) and holds
provided that $C_{3}$ is sufficiently large. Then we have\begin{subequations}\label{eq:proof-lemma-set-Delta-all}
\begin{align}
\left|\Delta_{1}(x_{0})\right| & \leq\frac{\sqrt{\overline{\alpha}_{t}\left(1-\alpha_{t}\right)}}{\alpha_{t}-\overline{\alpha}_{t}}\left|(\widehat{x}_{0}-x_{0})^{\top}\omega\right|\nonumber \\
 & \overset{\text{(a)}}{\leq}3\sqrt{\frac{c_{1}\log T}{T}}\sqrt{\frac{\overline{\alpha}_{t}}{1-\overline{\alpha}_{t}}}\left(\sqrt{C_{1}k\log T}\Vert\widehat{x}_{0}-x_{0}\Vert_{2}+\big(4\sqrt{d}+4\sqrt{C_{1}k\log T}\big)\varepsilon\right)\nonumber \\
 & \overset{\text{(b)}}{\leq}4\sqrt{\frac{c_{1}\log T}{T}}\sqrt{\frac{\overline{\alpha}_{t}}{1-\overline{\alpha}_{t}}}\left(\sqrt{C_{1}k\log T}\Vert x_{i(x_{t})}^{\star}-x_{i}^{\star}\Vert_{2}+4\sqrt{C_{1}C_{3}}k\log T\sqrt{\frac{1-\overline{\alpha}_{t}}{\overline{\alpha}_{t}}}\right).\label{eq:proof-lemma-set-Delta-1}
\end{align}
Here step (a) follows from (\ref{eq:proof-lemma-set-step-size-1})
and (\ref{eq:proof-lemma-set-11}); while step (b) utilizes (\ref{eq:proof-lemma-set-12}),
(\ref{eq:eps-condition}). Similarly we can use (\ref{eq:proof-lemma-set-step-size-2})
to show that
\begin{align}
\left|\Delta_{2}(x_{0})\right| & \leq\frac{9c_{1}\log T}{T}\sqrt{\frac{\overline{\alpha}_{t}}{1-\overline{\alpha}_{t}}}\left(\sqrt{C_{1}k\log T}\Vert x_{i(x_{t})}^{\star}-x_{i}^{\star}\Vert_{2}+4\sqrt{C_{1}C_{3}}k\log T\sqrt{\frac{1-\overline{\alpha}_{t}}{\overline{\alpha}_{t}}}\right).\label{eq:proof-lemma-set-Delta-2}
\end{align}
Notice that
\begin{align*}
\left|(x_{0}(x_{t})-\widehat{x}_{0})^{\top}(\widehat{x}_{0}-x_{0})\right| & \overset{\text{(i)}}{\leq}\left\Vert x_{0}(x_{t})-\widehat{x}_{0}\right\Vert _{2}\left\Vert \widehat{x}_{0}-x_{0}\right\Vert _{2}\leq\left(\left\Vert x_{0}(x_{t})-\overline{x}_{0}\right\Vert _{2}+\left\Vert \delta\right\Vert _{2}\right)\left\Vert \widehat{x}_{0}-x_{0}\right\Vert _{2}\\
 & \overset{\text{(ii)}}{\leq}\Bigg[3\sqrt{\frac{C_{3}k(1-\overline{\alpha}_{t})\log T}{\overline{\alpha}_{t}}}+\sqrt{\frac{1-\overline{\alpha}_{t}}{\overline{\alpha}_{t}}}\exp\left(-\frac{C_{3}k\log T}{32}\right)\Bigg]\left\Vert \widehat{x}_{0}-x_{0}\right\Vert _{2}\\
 & \overset{\text{(iii)}}{\leq}4\sqrt{\frac{C_{3}k(1-\overline{\alpha}_{t})\log T}{\overline{\alpha}_{t}}}\left\Vert \widehat{x}_{0}-x_{0}\right\Vert _{2},
\end{align*}
where step (i) utilizes the Cauchy-Schwarz inequality; step (ii) follows
from (\ref{eq:proof-lemma-set-Xt-dist}), (\ref{eq:proof-delta-bound})
and the fact that $x_{0}(x_{t}),\overline{x}_{0}\in\mathcal{X}_{t}(x_{t})$;
step (iii) holds provided that $C_{3}$ is sufficiently large. Therefore
we have
\begin{align}
\left|\Delta_{3}(x_{0})\right| & \leq\frac{\left(1-\alpha_{t}\right)\overline{\alpha}_{t}}{\left(\alpha_{t}-\overline{\alpha}_{t}\right)\left(1-\overline{\alpha}_{t}\right)}\left|(x_{0}(x_{t})-\widehat{x}_{0})^{\top}(\widehat{x}_{0}-x_{0})\right|\nonumber \\
 & \leq\frac{\left(1-\alpha_{t}\right)\overline{\alpha}_{t}}{\left(\alpha_{t}-\overline{\alpha}_{t}\right)\left(1-\overline{\alpha}_{t}\right)}\cdot4\sqrt{\frac{C_{3}k(1-\overline{\alpha}_{t})\log T}{\overline{\alpha}_{t}}}\left\Vert \widehat{x}_{0}-x_{0}\right\Vert _{2}\nonumber \\
 & \leq32c_{1}\sqrt{C_{3}}\frac{\log T}{T}\sqrt{\frac{\overline{\alpha}_{t}}{1-\overline{\alpha}_{t}}}\sqrt{k\log T}\left(\Vert x_{i(x_{t})}^{\star}-x_{i}^{\star}\Vert_{2}+4\sqrt{\frac{C_{3}k(1-\overline{\alpha}_{t})\log T}{\overline{\alpha}_{t}}}\right),\label{eq:proof-lemma-set-Delta-3}
\end{align}
where the last relation follows from Lemma~\ref{lemma:step-size}
and (\ref{eq:proof-lemma-set-12}). Finally we have
\begin{align}
\left|\Delta_{4}(x_{0})\right| & \leq\frac{\left(1-\alpha_{t}\right)\overline{\alpha}_{t}}{2\left(\alpha_{t}-\overline{\alpha}_{t}\right)\left(1-\overline{\alpha}_{t}\right)}\left\Vert \widehat{x}_{0}-x_{0}\right\Vert _{2}^{2}\nonumber \\
 & \leq\frac{8c_{1}\log T}{T}\frac{\overline{\alpha}_{t}}{1-\overline{\alpha}_{t}}\left(\Vert x_{i(x_{t})}^{\star}-x_{i}^{\star}\Vert_{2}^{2}+16\frac{C_{3}k(1-\overline{\alpha}_{t})\log T}{\overline{\alpha}_{t}}\right).\label{eq:proof-lemma-set-Delta-4}
\end{align}
\end{subequations}Here step (a) follows from (\ref{eq:proof-lemma-set-12}),
step (b) follows from (\ref{eq:proof-lemma-set-Xt-dist}) and the
fact that $x_{i(x_{t})}^{\star},x_{i}^{\star}\in\mathcal{X}_{t}(x_{t})$Taking
the bounds in (\ref{eq:proof-lemma-set-Delta-all}) collectively leads
to 
\begin{align}
\left|\Delta\left(x_{0}\right)\right| & \leq5\sqrt{c_{1}C_{1}}\sqrt{\frac{k}{T}}\log T\left(\sqrt{\frac{\overline{\alpha}_{t}}{1-\overline{\alpha}_{t}}}\Vert x_{i(x_{t})}^{\star}-x_{i}^{\star}\Vert_{2}+4\sqrt{C_{3}k\log T}\right)\nonumber \\
 & \qquad+\frac{8c_{1}\log T}{T}\frac{\overline{\alpha}_{t}}{1-\overline{\alpha}_{t}}\Vert x_{i(x_{t})}^{\star}-x_{i}^{\star}\Vert_{2}^{2}.\label{eq:proof-lemma-set-13}
\end{align}
provided that $T$ is sufficiently large.

\subsubsection{Step 2: bounding $I_{1}$}

For $x_{0}\in\mathcal{X}_{t}(x_{t})$, we know that $x_{0}\in\mathcal{B}_{i}$
for some $i\in\mathcal{I}(x_{t};C_{3})$, hence $\overline{\alpha}_{t}\Vert x_{i}^{\star}-x_{i(x_{t})}^{\star}\Vert_{2}^{2}\leq C_{3}k(1-\overline{\alpha}_{t})\log T$.
This combined with (\ref{eq:proof-lemma-set-13}) gives
\begin{align}
\left|\Delta\left(x_{0}\right)\right| & \leq25\sqrt{c_{1}C_{1}C_{3}}\sqrt{\frac{k^{2}\log^{3}T}{T}}+\frac{8c_{1}C_{3}k\log^{2}T}{T}\leq26\sqrt{c_{1}C_{1}C_{3}}\sqrt{\frac{k^{2}\log^{3}T}{T}}\label{eq:proof-lemma-set-14}
\end{align}
provided that $T\gg k^{2}\log^{3}T$. Similarly we can check that
for each $1\leq i\leq4$, $|\Delta_{i}(x_{0})|\leq1$. Then we know
that for $x_{0}\in\mathcal{X}_{t}(x_{t})$, we have $\exp(\Delta(x_{0}))\leq1+\Delta(x_{0})+\Delta^{2}(x_{0})$
as long as $T\gg k^{2}\log^{3}T$. Hence
\begin{align*}
I_{1} & \leq1+\int_{\mathcal{X}_{t}(x_{t})}p_{X_{0}|X_{t}}\left(x_{0}\mymid x_{t}\right)\Delta\left(x_{0}\right)\mathrm{d}x_{0}+\int_{\mathcal{X}_{t}(x_{t})}p_{X_{0}|X_{t}}\left(x_{0}\mymid x_{t}\right)\Delta^{2}\left(x_{0}\right)\mathrm{d}x_{0}\\
 & =1+\int p_{X_{0}|X_{t}}\left(x_{0}\mymid x_{t}\right)\Delta_{1}\left(x_{0}\right)\mathrm{d}x_{0}-\int_{\mathcal{Y}_{t}(x_{t})}p_{X_{0}|X_{t}}\left(x_{0}\mymid x_{t}\right)\Delta_{1}\left(x_{0}\right)\mathrm{d}x_{0}\\
 & \qquad+\int_{\mathcal{X}_{t}(x_{t})}p_{X_{0}|X_{t}}\left(x_{0}\mymid x_{t}\right)\left[\Delta_{2}\left(x_{0}\right)+\Delta_{3}\left(x_{0}\right)+\Delta_{4}\left(x_{0}\right)\right]\mathrm{d}x_{0}+\int_{\mathcal{X}_{t}(x_{t})}p_{X_{0}|X_{t}}\left(x_{0}\mymid x_{t}\right)\Delta^{2}\left(x_{0}\right)\mathrm{d}x_{0}\\
 & \leq1+\underbrace{\int_{\mathcal{Y}_{t}(x_{t})}p_{X_{0}|X_{t}}\left(x_{0}\mymid x_{t}\right)\left|\Delta_{1}\left(x_{0}\right)\right|\mathrm{d}x_{0}}_{\eqqcolon I_{1,1}}\\
 & \qquad+\underbrace{\int_{\mathcal{X}_{t}(x_{t})}p_{X_{0}|X_{t}}\left(x_{0}\mymid x_{t}\right)\left[\left|\Delta_{2}\left(x_{0}\right)+\Delta_{3}\left(x_{0}\right)+\Delta_{4}\left(x_{0}\right)\right|+\Delta^{2}\left(x_{0}\right)\right]\mathrm{d}x_{0}}_{\eqqcolon I_{1,2}}.
\end{align*}
Here the last step follows from the fact that $\int p_{X_{0}|X_{t}}(x_{0}\mymid x_{t})\Delta_{1}(x_{0})\mathrm{d}x_{0}=0$.
The integral $I_{1,1}$ can be upper bounded similar to $I_{2}$,
hence we defer its analysis to the next section. For the integral
$I_{1,2}$, we have
\begin{align}
I_{1,2} & \leq\max_{x_{0}\in\mathcal{X}_{t}(x_{t})}\left\{ \left|\Delta_{2}\left(x_{0}\right)+\Delta_{3}\left(x_{0}\right)+\Delta_{4}\left(x_{0}\right)\right|+\Delta^{2}\left(x_{0}\right)\right\} \nonumber \\
 & \overset{\text{(i)}}{\leq}\max_{x_{0}\in\mathcal{X}_{t}(x_{t})}\left\{ 4\Delta_{1}^{2}\left(x_{0}\right)+5\left|\Delta_{2}\left(x_{0}\right)\right|+5\left|\Delta_{3}\left(x_{0}\right)\right|+5\left|\Delta_{4}\left(x_{0}\right)\right|\right\} \nonumber \\
 & \overset{\text{(ii)}}{\leq}128\frac{c_{1}\log T}{T}\frac{\overline{\alpha}_{t}}{1-\overline{\alpha}_{t}}\left(C_{1}k\log T\frac{C_{3}k\left(1-\overline{\alpha}_{t}\right)\log T}{\overline{\alpha}_{t}}+16C_{1}C_{3}k^{2}\log^{2}T\frac{1-\overline{\alpha}_{t}}{\overline{\alpha}_{t}}\right)\nonumber \\
 & \qquad+\frac{45c_{1}\log T}{T}\sqrt{\frac{\overline{\alpha}_{t}}{1-\overline{\alpha}_{t}}}\left(\sqrt{C_{1}k\log T}\sqrt{\frac{C_{3}k\left(1-\overline{\alpha}_{t}\right)\log T}{\overline{\alpha}_{t}}}+4\sqrt{C_{1}C_{3}}k\log T\sqrt{\frac{1-\overline{\alpha}_{t}}{\overline{\alpha}_{t}}}\right)\nonumber \\
 & \qquad+160c_{1}\sqrt{C_{3}}\frac{\log T}{T}\sqrt{\frac{\overline{\alpha}_{t}}{1-\overline{\alpha}_{t}}}\sqrt{k\log T}\left(\sqrt{\frac{C_{3}k\left(1-\overline{\alpha}_{t}\right)\log T}{\overline{\alpha}_{t}}}+4\sqrt{\frac{C_{3}k(1-\overline{\alpha}_{t})\log T}{\overline{\alpha}_{t}}}\right)\nonumber \\
 & \qquad+\frac{40c_{1}\log T}{T}\frac{\overline{\alpha}_{t}}{1-\overline{\alpha}_{t}}\left(\frac{C_{3}k\left(1-\overline{\alpha}_{t}\right)\log T}{\overline{\alpha}_{t}}+16\frac{C_{3}k(1-\overline{\alpha}_{t})\log T}{\overline{\alpha}_{t}}\right)\nonumber \\
 & \leq2181c_{1}C_{1}C_{3}\frac{k^{2}\log^{3}T}{T}.\label{eq:proof-lemma-set-15}
\end{align}
Here step (i) follows from the Cauchy-Schwarz inequality and the facts
that $|\Delta_{i}(x_{0})|\leq1$ for $i=2,3,4$, while step (ii) follows
from the bounds (\ref{eq:proof-lemma-set-Delta-all}) and the fact
that $\overline{\alpha}_{t}\Vert x_{i}^{\star}-x_{i(x_{t})}^{\star}\Vert_{2}^{2}\leq C_{3}k(1-\overline{\alpha}_{t})\log T$.

\subsubsection{Step 3: bounding $I_{2}$. }

For $x_{0}\in\mathcal{Y}_{t}(x_{t})$, we know that $x_{0}\in\mathcal{B}_{i}$
for some $i\notin\mathcal{I}(x_{t};C_{3})$, hence $\overline{\alpha}_{t}\Vert x_{i}^{\star}-x_{i(x_{t})}^{\star}\Vert_{2}^{2}>C_{3}k(1-\overline{\alpha}_{t})\log T$.
This combined with (\ref{eq:proof-lemma-set-13}) gives
\begin{align}
\left|\Delta\left(x_{0}\right)\right| & \leq\left(5\sqrt{\frac{c_{1}C_{1}}{C_{3}}}\sqrt{\frac{\log T}{T}}+20\sqrt{\frac{c_{1}C_{1}}{C_{3}}}\sqrt{\frac{\log T}{T}}+\frac{8c_{1}\log T}{T}\right)\frac{\overline{\alpha}_{t}}{1-\overline{\alpha}_{t}}\Vert x_{i(x_{t})}^{\star}-x_{i}^{\star}\Vert_{2}^{2}\nonumber \\
 & \leq25\sqrt{\frac{c_{1}C_{1}}{C_{3}}}\sqrt{\frac{\log T}{T}}\frac{\overline{\alpha}_{t}}{1-\overline{\alpha}_{t}}\Vert x_{i(x_{t})}^{\star}-x_{i}^{\star}\Vert_{2}^{2}\label{eq:proof-lemma-set-16}
\end{align}
as long as $T$ is sufficiently large. Therefore we have
\begin{align}
I_{2} & =\int_{\mathcal{Y}_{t}(x_{t})}p_{X_{0}|X_{t}}\left(x_{0}\mymid x_{t}\right)\exp\left(\Delta\left(x_{0}\right)\right)\mathrm{d}x_{0}\leq\sum_{i\notin\mathcal{I}(x_{t};C_{3})}\mathbb{P}\left(X_{0}\in\mathcal{B}_{i}\mymid X_{t}=x_{t}\right)\max_{x_{0}\in\mathcal{B}_{i}}\exp\left(\Delta\left(x_{0}\right)\right)\nonumber \\
 & \overset{\text{(i)}}{\leq}\sum_{i\notin\mathcal{I}(x_{t};C_{3})}\exp\left(-\frac{\overline{\alpha}_{t}}{16\left(1-\overline{\alpha}_{t}\right)}\Vert x_{i(x_{t})}^{\star}-x_{i}^{\star}\Vert_{2}^{2}+25\sqrt{\frac{c_{1}C_{1}}{C_{3}}}\sqrt{\frac{\log T}{T}}\frac{\overline{\alpha}_{t}}{1-\overline{\alpha}_{t}}\Vert x_{i(x_{t})}^{\star}-x_{i}^{\star}\Vert_{2}^{2}\right)\mathbb{P}\left(X_{0}\in\mathcal{B}_{i}\right)\nonumber \\
 & \overset{\text{(ii)}}{\leq}\sum_{i\notin\mathcal{I}(x_{t};C_{3})}\exp\left(-\frac{\overline{\alpha}_{t}}{32\left(1-\overline{\alpha}_{t}\right)}\Vert x_{i(x_{t})}^{\star}-x_{i}^{\star}\Vert_{2}^{2}\right)\mathbb{P}\left(X_{0}\in\mathcal{B}_{i}\right)\nonumber \\
 & \overset{\text{(iii)}}{\leq}\exp\left(-\frac{C_{3}}{32}k\log T\right).\label{eq:proof-lemma-set-17}
\end{align}
Here step (i) follows from (\ref{eq:proof-lemma-set-10'}) and (\ref{eq:proof-lemma-set-16});
step (ii) holds as long as $T$ is sufficiently large; while step
(iii) uses the fact that $\overline{\alpha}_{t}\Vert x_{i}^{\star}-x_{i(x_{t})}^{\star}\Vert_{2}^{2}>C_{3}k(1-\overline{\alpha}_{t})\log T$
for $i\notin\mathcal{I}(x_{t};C_{3})$. By similar analysis, we can
show that
\begin{align}
I_{1,1} & =\int_{\mathcal{Y}_{t}(x_{t})}p_{X_{0}|X_{t}}\left(x_{0}\mymid x_{t}\right)\left|\Delta_{1}\left(x_{0}\right)\right|\mathrm{d}x_{0}\leq\sum_{i\notin\mathcal{I}(x_{t};C_{3})}\mathbb{P}\left(X_{0}\in\mathcal{B}_{i}\mymid X_{t}=x_{t}\right)\max_{x_{0}\in\mathcal{B}_{i}}\left|\Delta_{1}\left(x_{0}\right)\right|\nonumber \\
 & \overset{\text{(a)}}{\leq}20\sqrt{\frac{c_{1}C_{1}k\log^{2}T}{T}}\sqrt{\frac{\overline{\alpha}_{t}}{1-\overline{\alpha}_{t}}}\sum_{i\notin\mathcal{I}(x_{t};C_{3})}\mathbb{P}\left(X_{0}\in\mathcal{B}_{i}\mymid X_{t}=x_{t}\right)\Vert x_{i(x_{t})}^{\star}-x_{i}^{\star}\Vert_{2}\nonumber \\
 & \overset{\text{(b)}}{\leq}20\sqrt{\frac{c_{1}C_{1}k\log^{2}T}{T}}\sqrt{\frac{\overline{\alpha}_{t}}{1-\overline{\alpha}_{t}}}\sum_{i\notin\mathcal{I}(x_{t};C_{3})}\exp\left(-\frac{\overline{\alpha}_{t}}{16\left(1-\overline{\alpha}_{t}\right)}\Vert x_{i(x_{t})}^{\star}-x_{i}^{\star}\Vert_{2}^{2}\right)\mathbb{P}\left(X_{0}\in\mathcal{B}_{i}\right)\Vert x_{i(x_{t})}^{\star}-x_{i}^{\star}\Vert_{2}\nonumber \\
 & \overset{\text{(c)}}{\leq}20\sqrt{\frac{c_{1}C_{1}k\log^{2}T}{T}}\sum_{i\notin\mathcal{I}(x_{t};C_{3})}\exp\left(-\frac{\overline{\alpha}_{t}}{32\left(1-\overline{\alpha}_{t}\right)}\Vert x_{i(x_{t})}^{\star}-x_{i}^{\star}\Vert_{2}^{2}\right)\mathbb{P}\left(X_{0}\in\mathcal{B}_{i}\right)\nonumber \\
 & \overset{\text{(d)}}{\leq}20\sqrt{\frac{c_{1}C_{1}k\log^{2}T}{T}}\exp\left(-\frac{C_{3}}{32}k\log T\right).\label{eq:proof-lemma-set-18}
\end{align}
Here step (a) follows from (\ref{eq:proof-lemma-set-Delta-1}) and
the fact that $\overline{\alpha}_{t}\Vert x_{i}^{\star}-x_{i(x_{t})}^{\star}\Vert_{2}^{2}>C_{3}k(1-\overline{\alpha}_{t})\log T$
for $i\notin\mathcal{I}(x_{t};C_{3})$; step (b) follows from (\ref{eq:proof-lemma-set-10'});
step (c) holds provided that $C_{3}$ is sufficiently large; step
(d) follows again from the fact that $\overline{\alpha}_{t}\Vert x_{i}^{\star}-x_{i(x_{t})}^{\star}\Vert_{2}^{2}>C_{3}k(1-\overline{\alpha}_{t})\log T$
for $i\notin\mathcal{I}(x_{t};C_{3})$.

\subsubsection{Step 4: putting everything together}

Taking (\ref{eq:proof-lemma-set-15}), (\ref{eq:proof-lemma-set-17})
and (\ref{eq:proof-lemma-set-18}) collectively, we have
\begin{align*}
\int_{x_{0}}p_{X_{0}|X_{t}}\left(x_{0}\mymid x_{t}\right)\exp\left(\Delta\left(x_{t},x_{t-1},x_{0}\right)\right)\mathrm{d}x_{0} & =I_{1}+I_{2}\leq1+I_{1,1}+I_{1,2}+I_{2}\\
 & \leq1+2182c_{1}C_{1}C_{3}\frac{k^{2}\log^{3}T}{T},
\end{align*}
provided that $T$ is sufficiently large. By similar argument, i.e.,
using the lower bounding $\exp(\Delta(x_{0}))\geq1+\Delta(x_{0})-\Delta^{2}(x_{0})$
in Step 2 and repeat the same analysis, we can show that 
\begin{align*}
\int_{x_{0}}p_{X_{0}|X_{t}}\left(x_{0}\mymid x_{t}\right)\exp\left(\Delta\left(x_{0}\right)\right)\mathrm{d}x_{0} & \geq1-2182c_{1}C_{1}C_{3}\frac{k^{2}\log^{3}T}{T}.
\end{align*}
This gives the desired result.

\subsection{Proof of Lemma \ref{lemma:sde-coarse} \label{subsec:proof-lemma-sde-coarse}}

Recall that
\begin{align*}
\log\frac{p_{X_{t-1}|X_{t}}\left(x_{t-1}\mymid x_{t}\right)}{p_{Y_{t-1}^{\star}|Y_{t}}\left(x_{t-1}\mymid x_{t}\right)} & =\log\left[\int_{x_{0}}p_{X_{0}|X_{t}}\left(x_{0}\mymid x_{t}\right)\exp\left(\Delta\left(x_{t},x_{t-1},x_{0}\right)\right)\mathrm{d}x_{0}\right].
\end{align*}
For any $x_{0}\in\mathcal{X}$, by the definition of $\Delta\left(x_{t},x_{t-1},x_{0}\right)$
in (\ref{eq:Delta-defn}), we have
\begin{align*}
\left|\Delta\left(x_{t},x_{t-1},x_{0}\right)\right| & \leq\frac{\sqrt{\overline{\alpha}_{t}}}{\alpha_{t}-\overline{\alpha}_{t}}\Vert\sqrt{\alpha_{t}}x_{t-1}-x_{t}\Vert_{2}\Vert\widehat{x}_{0}-x_{0}\Vert_{2}+\frac{\left(1-\alpha_{t}\right)\sqrt{\overline{\alpha}_{t}}}{\left(\alpha_{t}-\overline{\alpha}_{t}\right)\left(1-\overline{\alpha}_{t}\right)}\Vert x_{t}-\sqrt{\overline{\alpha}_{t}}\widehat{x}_{0}\Vert_{2}\Vert\widehat{x}_{0}-x_{0}\Vert_{2}\\
 & \qquad+\frac{\left(1-\alpha_{t}\right)\overline{\alpha}_{t}}{2\left(\alpha_{t}-\overline{\alpha}_{t}\right)\left(1-\overline{\alpha}_{t}\right)}\Vert\widehat{x}_{0}-x_{0}\Vert_{2}^{2}\\
 & \overset{\text{(i)}}{\leq}2R\frac{\sqrt{\overline{\alpha}_{t}}}{\alpha_{t}-\overline{\alpha}_{t}}\Vert\sqrt{\alpha_{t}}x_{t-1}-x_{t}\Vert_{2}+2R\frac{\left(1-\alpha_{t}\right)\sqrt{\overline{\alpha}_{t}}}{\left(\alpha_{t}-\overline{\alpha}_{t}\right)\left(1-\overline{\alpha}_{t}\right)}\Vert x_{t}\Vert_{2}+\frac{\left(1-\alpha_{t}\right)\overline{\alpha}_{t}}{\left(\alpha_{t}-\overline{\alpha}_{t}\right)\left(1-\overline{\alpha}_{t}\right)}4R^{2}\\
 & \overset{\text{(ii)}}{\leq}4RT^{c_{0}}\Vert\sqrt{\alpha_{t}}x_{t-1}-x_{t}\Vert_{2}+16c_{1}RT^{c_{0}-1}\log T\Vert x_{t}\Vert_{2}+32c_{1}R^{2}T^{c_{0}-1}\log T.
\end{align*}
Here step (i) follows from $\widehat{x}_{0},x_{0}\in\mathcal{X}$,
hence $\max\{\Vert\widehat{x}_{0}\Vert_{2},\Vert x_{0}\Vert_{2}\}\leq R$;
while step (ii) follows from the facts that, for $2\leq t\leq T$,
\[
\frac{\sqrt{\overline{\alpha}_{t}}}{\alpha_{t}-\overline{\alpha}_{t}}\leq\frac{1}{\alpha_{t}-\prod_{i=1}^{t}\alpha_{i}}=\frac{1}{\alpha_{t}\left(1-\prod_{i=1}^{t-1}\alpha_{i}\right)}\leq\frac{2}{1-\alpha_{1}}\leq2T^{c_{0}},
\]
and in view of Lemma~\ref{lemma:step-size}, 
\[
\frac{\left(1-\alpha_{t}\right)\overline{\alpha}_{t}}{\left(\alpha_{t}-\overline{\alpha}_{t}\right)\left(1-\overline{\alpha}_{t}\right)}\leq\frac{\left(1-\alpha_{t}\right)\sqrt{\overline{\alpha}_{t}}}{\left(\alpha_{t}-\overline{\alpha}_{t}\right)\left(1-\overline{\alpha}_{t}\right)}\leq\frac{8c_{1}\log T}{T}\frac{1}{1-\overline{\alpha}_{t}}\leq8c_{1}T^{c_{0}-1}\log T.
\]
Hence we have
\begin{align*}
\left|\log\frac{p_{X_{t-1}|X_{t}}\left(x_{t-1}\mymid x_{t}\right)}{p_{Y_{t-1}^{\star}|Y_{t}}\left(x_{t-1}\mymid x_{t}\right)}\right| & =\left|\log\left[\int_{x_{0}}p_{X_{0}|X_{t}}\left(x_{0}\mymid x_{t}\right)\exp\left(\Delta\left(x_{t},x_{t-1},x_{0}\right)\right)\mathrm{d}x_{0}\right]\right|\leq\sup_{x_{0}\in\mathcal{X}}\left|\Delta\left(x_{t},x_{t-1},x_{0}\right)\right|\\
 & \leq4RT^{c_{0}}\Vert\sqrt{\alpha_{t}}x_{t-1}-x_{t}\Vert_{2}+16c_{1}RT^{c_{0}-1}\log T\Vert x_{t}\Vert_{2}+32c_{1}R^{2}T^{c_{0}-1}\log T\\
 & \leq T^{c_{0}+2c_{R}}\left(\Vert\sqrt{\alpha_{t}}x_{t-1}-x_{t}\Vert_{2}+\Vert x_{t}\Vert_{2}+1\right)
\end{align*}
as long as $T$ is sufficiently large.

\subsection{Proof of Lemma \ref{lemma:SDE-Delta-t-1}\label{subsec:proof-lemma-sde-delta-t-1}}

Regarding $\Delta_{t,1}$, we first utilize Lemma~\ref{lemma:Delta-SDE}
to show that for any $(x_{t},x_{t-1})\in\mathcal{A}_{t}$, 
\[
\left|1-\frac{p_{Y_{t-1}^{\star}|Y_{t}}\left(x_{t-1}\mymid x_{t}\right)}{p_{X_{t-1}|X_{t}}\left(x_{t-1}\mymid x_{t}\right)}\right|\leq C_{5}\frac{k^{2}\log^{3}T}{T}.
\]
Since $\log(1-x)\geq-x-x^{2}$ holds for any $x\in[-1/2,1/2]$, we
know that when $T\gg k^{2}\log^{3}T$, we have
\begin{align*}
 & p_{X_{t-1}|X_{t}}\left(x_{t-1}\mymid x_{t}\right)\log\frac{p_{X_{t-1}|X_{t}}\left(x_{t-1}\mymid x_{t}\right)}{p_{Y_{t-1}^{\star}|Y_{t}}\left(x_{t-1}\mymid x_{t}\right)}=-p_{X_{t-1}|X_{t}}\left(x_{t-1}\mymid x_{t}\right)\log\left[1-\left(1-\frac{p_{Y_{t-1}^{\star}|Y_{t}}\left(x_{t-1}\mymid x_{t}\right)}{p_{X_{t-1}|X_{t}}\left(x_{t-1}\mymid x_{t}\right)}\right)\right]\\
 & \qquad\leq p_{X_{t-1}|X_{t}}\left(x_{t-1}\mymid x_{t}\right)\left[1-\frac{p_{Y_{t-1}^{\star}|Y_{t}}\left(x_{t-1}\mymid x_{t}\right)}{p_{X_{t-1}|X_{t}}\left(x_{t-1}\mymid x_{t}\right)}+\left(1-\frac{p_{Y_{t-1}^{\star}|Y_{t}}\left(x_{t-1}\mymid x_{t}\right)}{p_{X_{t-1}|X_{t}}\left(x_{t-1}\mymid x_{t}\right)}\right)^{2}\right]\\
 & \qquad=p_{X_{t-1}|X_{t}}\left(x_{t-1}\mymid x_{t}\right)-p_{Y_{t-1}^{\star}|Y_{t}}\left(x_{t-1}\mymid x_{t}\right)+p_{X_{t-1}|X_{t}}\left(x_{t-1}\mymid x_{t}\right)C_{5}^{2}\frac{k^{4}\log^{6}T}{T^{2}}
\end{align*}
Hence we have
\begin{align*}
\Delta_{t,1} & \leq\int_{\left(x_{t},x_{t-1}\right)\in\mathcal{A}_{t}}\left[-p_{Y_{t-1}^{\star}|Y_{t}}\left(x_{t-1}\mymid x_{t}\right)+p_{X_{t-1}|X_{t}}\left(x_{t-1}\mymid x_{t}\right)\right]p_{X_{t}}\left(x_{t}\right)\mathrm{d}x_{t-1}\mathrm{d}x_{t}+C_{5}^{2}\frac{k^{4}\log^{6}T}{T^{2}}\\
 & =\int_{\left(x_{t},x_{t-1}\right)\in\mathcal{A}_{t}^{\mathrm{c}}}\left[p_{Y_{t-1}^{\star}|Y_{t}}\left(x_{t-1}\mymid x_{t}\right)-p_{X_{t-1}|X_{t}}\left(x_{t-1}\mymid x_{t}\right)\right]p_{X_{t}}\left(x_{t}\right)\mathrm{d}x_{t-1}\mathrm{d}x_{t}+C_{5}^{2}\frac{k^{4}\log^{6}T}{T^{2}}\\
 & \leq\underbrace{\int_{\left(x_{t},x_{t-1}\right)\in\mathcal{A}_{t}^{\mathrm{c}}}p_{Y_{t-1}^{\star}|Y_{t}}\left(x_{t-1}\mymid x_{t}\right)p_{X_{t}}\left(x_{t}\right)\mathrm{d}x_{t-1}\mathrm{d}x_{t}}_{\eqqcolon\Delta_{t,3}}+C_{5}^{2}\frac{k^{4}\log^{6}T}{T^{2}}.
\end{align*}
Here the penultimate step follows from the fact that
\[
\int p_{Y_{t-1}^{\star}|Y_{t}}\left(x_{t-1}\mymid x_{t}\right)p_{X_{t}}\left(x_{t}\right)\mathrm{d}x_{t-1}\mathrm{d}x_{t}=\int p_{X_{t-1}|X_{t}}\left(x_{t-1}\mymid x_{t}\right)p_{X_{t}}\left(x_{t}\right)\mathrm{d}x_{t-1}\mathrm{d}x_{t}=1.
\]
It boils down to bounding $\Delta_{t,3}$. In view of (\ref{eq:proof-sde-7}),
we know that 
\begin{align*}
\Delta_{t,3} & =\int p_{Y_{t-1}^{\star}|Y_{t}}\left(x_{t-1}\mymid x_{t}\right)p_{X_{t}}\left(x_{t}\right)\ind\left\{ x_{t}\notin\mathcal{T}_{t}\right\} \mathrm{d}x_{t-1}\mathrm{d}x_{t}\\
 & \qquad+\int p_{Y_{t-1}^{\star}|Y_{t}}\left(x_{t-1}\mymid x_{t}\right)p_{X_{t}}\left(x_{t}\right)\ind\left\{ x_{t}\in\mathcal{T}_{t},\frac{x_{t}-\sqrt{\alpha_{t}}x_{t-1}}{\sqrt{1-\alpha_{t}}}\notin\mathcal{G}\right\} \mathrm{d}x_{t-1}\mathrm{d}x_{t}\\
 & =\mathbb{P}\left(X_{t}\notin\mathcal{T}_{t}\right)+\mathbb{P}\left(X_{t}\in\mathcal{T}_{t},\frac{X_{t}-\left(X_{t}+\eta_{t}^{\star}s_{t}^{\star}(X_{t})+\sigma_{t}^{\star}Z\right)}{\sqrt{1-\alpha_{t}}}\notin\mathcal{G}\right)\quad\text{where}\quad Z\sim\mathcal{N}(0,I_{d})\\
 & =\mathbb{P}\left(X_{t}\notin\mathcal{T}_{t}\right)+\mathbb{P}\left(X_{t}\in\mathcal{T}_{t},-\frac{\eta_{t}^{\star}s_{t}^{\star}(X_{t})+\sigma_{t}^{\star}Z}{\sqrt{1-\alpha_{t}}}\notin\mathcal{G}\right)
\end{align*}
Here we use the fact that $Y_{t-1}^{\star}\mymid Y_{t}=x_{t}\sim\mathcal{N}\big((x_{t}+\eta_{t}^{\star}s_{t}^{\star}(x_{t}))/\sqrt{\alpha_{t}},(\sigma_{t}^{\star2}/\alpha_{t})I_{d}\big)$.
Notice that
\[
-\frac{\eta_{t}^{\star}s_{t}^{\star}(X_{t})+\sigma_{t}^{\star}Z}{\sqrt{1-\alpha_{t}}}=-\sqrt{1-\alpha_{t}}s_{t}^{\star}\left(X_{t}\right)-\sqrt{\frac{\alpha_{t}-\overline{\alpha}_{t}}{1-\overline{\alpha}_{t}}}Z.
\]
The following claim is cricial for understanding this random variable.

\begin{claim} \label{claim:score}For any $x_{t}\in\mathcal{T}_{t}$,
we have
\[
\left\Vert \sqrt{1-\alpha_{t}}s_{t}^{\star}\left(x_{t}\right)\right\Vert _{2}\leq\frac{1}{2}\big(\sqrt{d}+\sqrt{C_{1}k\log T}\big),
\]
and for any $1\leq i\leq j\leq N_{\varepsilon}$, 
\[
\sqrt{1-\alpha_{t}}\vert(x_{i}^{\star}-x_{j}^{\star})^{\top}s_{t}^{\star}\left(x_{t}\right)\vert\leq\frac{1}{2}\sqrt{C_{1}k\log T}\Vert x_{i}^{\star}-x_{j}^{\star}\Vert_{2}.
\]
\end{claim}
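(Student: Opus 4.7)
The plan is to use Tweedie's formula to rewrite the score, decompose $x_t$ using the structure of the typical set $\mathcal{T}_t$, and then use the high-probability properties of $\mathcal{G}$ together with the analysis of $\hat{x}_0$ already carried out in the proof of Lemma~\ref{lemma:Delta-SDE}.

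First, I would invoke the identity $s_t^\star(x_t) = -\frac{1}{1-\overline{\alpha}_t}(x_t - \sqrt{\overline{\alpha}_t}\hat{x}_0)$ (already derived at the end of Appendix~\ref{subsec:proof-lemma-cond-density}). For any $x_t \in \mathcal{T}_t$, write $x_t = \sqrt{\overline{\alpha}_t}x_0(x_t) + \sqrt{1-\overline{\alpha}_t}\omega$ as in (\ref{eq:proof-lemma-set-decom}), with $\omega \in \mathcal{G}$. This yields the decomposition
\[
\sqrt{1-\alpha_t}\,s_t^\star(x_t) = -\sqrt{\tfrac{1-\alpha_t}{1-\overline{\alpha}_t}}\,\omega \;-\; \tfrac{\sqrt{(1-\alpha_t)\overline{\alpha}_t}}{1-\overline{\alpha}_t}\bigl(x_0(x_t) - \hat{x}_0\bigr).
\]
The first term is already aligned with $\omega \in \mathcal{G}$, so its norm and projections onto $x_i^\star - x_j^\star$ inherit the bounds defining $\mathcal{G}$; the second term will be controlled by the bound $\sqrt{\overline{\alpha}_t}\,\Vert x_0(x_t) - \hat{x}_0\Vert_2 \leq 4\sqrt{C_3 k(1-\overline{\alpha}_t)\log T}$, which follows by combining the decomposition $\hat{x}_0 = \overline{x}_0 + \delta$ from (\ref{eq:x0-hat-decom}), the diameter estimate (\ref{eq:proof-lemma-set-Xt-dist}) for $\mathcal{X}_t(x_t)$, and the tail bound (\ref{eq:proof-delta-bound}) on $\Vert\delta\Vert_2$.

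For the first inequality, applying the triangle inequality together with $\Vert \omega\Vert_2 \leq 2\sqrt{d} + \sqrt{C_1 k\log T}$ gives
\[
\bigl\Vert\sqrt{1-\alpha_t}\,s_t^\star(x_t)\bigr\Vert_2 \leq \sqrt{\tfrac{1-\alpha_t}{1-\overline{\alpha}_t}}\Bigl(2\sqrt{d} + \sqrt{C_1 k\log T} + 4\sqrt{C_3 k\log T}\Bigr).
\]
For the second inequality, applying Cauchy--Schwarz to the second term and the $\mathcal{G}$-property $\vert(x_i^\star - x_j^\star)^\top\omega\vert \leq \sqrt{C_1 k\log T}\,\Vert x_i^\star - x_j^\star\Vert_2$ to the first term gives
\[
\sqrt{1-\alpha_t}\,\bigl\vert(x_i^\star - x_j^\star)^\top s_t^\star(x_t)\bigr\vert \leq \sqrt{\tfrac{1-\alpha_t}{1-\overline{\alpha}_t}}\Bigl(\sqrt{C_1 k\log T} + 4\sqrt{C_3 k\log T}\Bigr)\Vert x_i^\star - x_j^\star\Vert_2.
\]

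The finishing step is to absorb the constants using $(1-\alpha_t)/(1-\overline{\alpha}_t) \leq 8c_1\log T/T$ from Lemma~\ref{lemma:step-size}, so that the common prefactor $\sqrt{(1-\alpha_t)/(1-\overline{\alpha}_t)}\,\sqrt{\log T}$ is of order $\sqrt{\log^2 T/T}$, which is easily smaller than $1/2$ when $T$ is sufficiently large. I do not expect a real obstacle here; the only mild subtlety is ensuring that the contribution from $\Vert x_0(x_t) - \hat{x}_0\Vert_2$ (which involves the absolute constant $C_3 \gg C_1$) does not overwhelm the $\sqrt{C_1 k\log T}$ bound needed for the second inequality, but the prefactor $\sqrt{(1-\alpha_t)/(1-\overline{\alpha}_t)}$ provides plenty of slack for $T$ large.
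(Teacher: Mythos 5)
Your proposal is correct and takes essentially the same route as the paper: Tweedie's identity for $s_t^\star$, the decomposition $x_t=\sqrt{\overline\alpha_t}x_0(x_t)+\sqrt{1-\overline\alpha_t}\omega$ with $\omega\in\mathcal{G}$, the $\mathcal{G}$-properties for both the $\ell_2$ norm and the projections onto $x_i^\star-x_j^\star$, and the prefactor $\sqrt{(1-\alpha_t)/(1-\overline\alpha_t)}\lesssim\sqrt{\log T/T}$ from Lemma~\ref{lemma:step-size} to close. The only cosmetic difference is that the paper splits $\widehat{x}_0=\overline{x}_0+\delta$ and treats the three resulting pieces separately, whereas you bundle $x_0(x_t)-\widehat{x}_0$ into one term controlled by the combined estimate $\sqrt{\overline\alpha_t}\Vert x_0(x_t)-\widehat{x}_0\Vert_2\leq 4\sqrt{C_3k(1-\overline\alpha_t)\log T}$; this is the same arithmetic reorganized.
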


\begin{proof}See Appendix~\ref{subsec:proof-claim-score}.\end{proof}

Since $Z\sim\mathcal{N}(0,I_{d})$, in view of Lemma~\ref{lemma:concentration},
with probability exceeding $1-\exp\left(-(C_{1}/64)k\log T\right)$,
\[
\sqrt{\frac{\alpha_{t}-\overline{\alpha}_{t}}{1-\overline{\alpha}_{t}}}\left\Vert Z\right\Vert _{2}\leq\left\Vert Z\right\Vert _{2}\leq\sqrt{d}+\frac{1}{2}\sqrt{C_{1}k\log T}
\]
and for any $1\leq i\leq j\leq N_{\varepsilon}$, 
\[
\sqrt{\frac{\alpha_{t}-\overline{\alpha}_{t}}{1-\overline{\alpha}_{t}}}\vert(x_{i}^{\star}-x_{j}^{\star})^{\top}Z\vert\leq\vert(x_{i}^{\star}-x_{j}^{\star})^{\top}Z\vert\leq\frac{1}{2}\sqrt{C_{1}k\log T}\Vert x_{i}^{\star}-x_{j}^{\star}\Vert_{2}.
\]
These combined with Claim~\ref{claim:score} allow us to show that
\[
\mathbb{P}\left(X_{t}\in\mathcal{T}_{t},-\frac{\eta_{t}^{\star}s_{t}^{\star}(X_{t})+\sigma_{t}^{\star}Z}{\sqrt{1-\alpha_{t}}}\notin\mathcal{G}\right)\leq\exp\left(-\frac{C_{1}}{64}k\log T\right).
\]
Taking the above inequality collectively with Lemma~\ref{lemma:At-SDE}
gives
\[
\Delta_{t,3}\leq\exp\left(-\frac{C_{1}}{4}k\log T\right)+\exp\left(-\frac{C_{1}}{64}k\log T\right)\leq2\exp\left(-\frac{C_{1}}{64}k\log T\right).
\]
Hence we have
\[
\Delta_{t,1}\leq\Delta_{t,3}+C_{5}^{2}\frac{k^{4}\log^{6}T}{T^{2}}\leq2\exp\left(-\frac{C_{1}}{64}k\log T\right)+C_{5}^{2}\frac{k^{4}\log^{6}T}{T^{2}}\leq2C_{5}^{2}\frac{k^{4}\log^{6}T}{T^{2}}
\]
as long as $T$ is sufficiently large.

\subsubsection{Proof of Claim \ref{claim:score} \label{subsec:proof-claim-score}}

Consider the decomposition $x_{t}=\sqrt{\overline{\alpha}_{t}}x_{0}(x_{t})+\sqrt{1-\overline{\alpha}_{t}}\omega$
as in Appendix~\ref{subsec:auxiliary-prelim}, where $x_{0}(x_{t})\in\mathcal{B}_{i(x_{t})}$
for some $i(x_{t})\in\mathcal{I}$ and $\omega\in\mathcal{G}$. Notice
that 
\begin{align*}
s_{t}^{\star}\left(x_{t}\right) & =-\frac{1}{1-\overline{\alpha}_{t}}\left(x_{t}-\sqrt{\overline{\alpha}_{t}}\widehat{x}_{0}\right)=-\frac{1}{1-\overline{\alpha}_{t}}\left[\sqrt{\overline{\alpha}_{t}}x_{0}(x_{t})+\sqrt{1-\overline{\alpha}_{t}}\omega-\sqrt{\overline{\alpha}_{t}}(\overline{x}_{0}+\delta)\right]\\
 & =-\frac{\sqrt{\overline{\alpha}_{t}}}{1-\overline{\alpha}_{t}}(x_{0}(x_{t})-\overline{x}_{0})-\frac{1}{\sqrt{1-\overline{\alpha}_{t}}}\omega+\frac{\sqrt{\overline{\alpha}_{t}}}{1-\overline{\alpha}_{t}}\delta.
\end{align*}
where $\widehat{x}_{0}\coloneqq\mathbb{E}[X_{0}\mymid X_{t}=x_{t}]$
is defined in (\ref{eq:x0-hat-defn}), whereas $\overline{x}_{0}\in\mathcal{X}_{t}(x_{t})$
and $\delta$ are defined in (\ref{eq:x0-hat-decom}). Therefore we
can check that
\begin{align*}
 & \left\Vert \sqrt{1-\alpha_{t}}s_{t}^{\star}\left(x_{t}\right)\right\Vert _{2}\leq\frac{\sqrt{\overline{\alpha}_{t}(1-\alpha_{t})}}{1-\overline{\alpha}_{t}}\Vert x_{0}(x_{t})-\overline{x}_{0}\Vert_{2}+\sqrt{\frac{1-\alpha_{t}}{1-\overline{\alpha}_{t}}}\Vert\omega\Vert_{2}+\frac{\sqrt{\overline{\alpha}_{t}(1-\alpha_{t})}}{1-\overline{\alpha}_{t}}\Vert\delta\Vert_{2}\\
 & \qquad\overset{\text{(i)}}{\leq}\frac{\sqrt{\overline{\alpha}_{t}(1-\alpha_{t})}}{1-\overline{\alpha}_{t}}3\sqrt{\frac{C_{3}k(1-\overline{\alpha}_{t})\log T}{\overline{\alpha}_{t}}}+\sqrt{\frac{1-\alpha_{t}}{1-\overline{\alpha}_{t}}}\big(\sqrt{d}+\sqrt{C_{1}k\log T}\big)+\sqrt{\frac{1-\alpha_{t}}{1-\overline{\alpha}_{t}}}\exp\left(-\frac{C_{3}k\log T}{32}\right)\\
 & \qquad\overset{\text{(ii)}}{\leq}\sqrt{\frac{8c_{1}\log T}{T}}\left[3\sqrt{C_{3}k\log T}+\sqrt{d}+\sqrt{C_{1}k\log T}+\exp\left(-\frac{C_{3}k\log T}{32}\right)\right]\\
 & \qquad\overset{\text{(iii)}}{\leq}\frac{1}{2}\left(\sqrt{d}+\sqrt{C_{1}k\log T}\right).
\end{align*}
Here step (i) follows from (\ref{eq:proof-lemma-set-Xt-dist}), the
fact that $\omega\in\mathcal{G}$, and (\ref{eq:proof-delta-bound});
step (ii) follows from Lemma~\ref{lemma:step-size}; while step (iii)
holds provided that $T$ is sufficiently large. In addition, for any
$1\leq i\leq j\leq N_{\varepsilon}$ we have
\begin{align*}
\sqrt{1-\alpha_{t}}\vert(x_{i}^{\star}-x_{j}^{\star})^{\top}s_{t}^{\star}\left(x_{t}\right)\vert & \overset{\text{(a)}}{\leq}\frac{\sqrt{\overline{\alpha}_{t}(1-\alpha_{t})}}{1-\overline{\alpha}_{t}}\Vert x_{0}(x_{t})-\overline{x}_{0}\Vert_{2}\Vert x_{i}^{\star}-x_{j}^{\star}\Vert_{2}+\sqrt{\frac{1-\alpha_{t}}{1-\overline{\alpha}_{t}}}\left|\omega^{\top}(x_{i}^{\star}-x_{j}^{\star})^{\top}\right|\\
 & \qquad+\frac{\sqrt{\overline{\alpha}_{t}(1-\alpha_{t})}}{1-\overline{\alpha}_{t}}\Vert\delta\Vert_{2}\Vert x_{i}^{\star}-x_{j}^{\star}\Vert_{2}\\
 & \overset{\text{(b)}}{\leq}\frac{\sqrt{\overline{\alpha}_{t}(1-\alpha_{t})}}{1-\overline{\alpha}_{t}}3\sqrt{\frac{C_{3}k(1-\overline{\alpha}_{t})\log T}{\overline{\alpha}_{t}}}\Vert x_{i}^{\star}-x_{j}^{\star}\Vert_{2}+\sqrt{\frac{1-\alpha_{t}}{1-\overline{\alpha}_{t}}}\sqrt{C_{1}k\log T}\Vert x_{i}^{\star}-x_{j}^{\star}\Vert_{2}\\
 & \qquad+\frac{\sqrt{\overline{\alpha}_{t}(1-\alpha_{t})}}{1-\overline{\alpha}_{t}}\sqrt{\frac{1-\overline{\alpha}_{t}}{\overline{\alpha}_{t}}}\exp\left(-\frac{C_{3}k\log T}{32}\right)\Vert x_{i}^{\star}-x_{j}^{\star}\Vert_{2}\\
 & \overset{\text{(c)}}{\leq}\sqrt{\frac{8c_{1}\log T}{T}}\left[3\sqrt{C_{3}k\log T}+\sqrt{C_{1}k\log T}+\exp\left(-\frac{C_{3}k\log T}{32}\right)\right]\Vert x_{i}^{\star}-x_{j}^{\star}\Vert_{2}\\
 & \overset{\text{(d)}}{\leq}\frac{1}{2}\sqrt{C_{1}k\log T}\Vert x_{i}^{\star}-x_{j}^{\star}\Vert_{2}.
\end{align*}
Here step (a) utilizes the Cauchy-Schwarz inequality; step (b) follows
from (\ref{eq:proof-lemma-set-Xt-dist}), the fact that $\omega\in\mathcal{G}$,
and (\ref{eq:proof-delta-bound}); step (c) follows from Lemma~\ref{lemma:step-size};
while step (d) holds when $T$ is sufficiently large. 

\subsection{Proof of Lemma \ref{lemma:SDE-Delta-t-2}\label{subsec:proof-lemma-sde-delta-t-2}}

We can upper bound $|\Delta_{t,2}|$ by
\begin{align}
\left|\Delta_{t,2}\right| & \overset{\text{(i)}}{\leq}T^{c_{0}+2c_{R}}\int\left(\Vert\sqrt{\alpha_{t}}x_{t-1}-x_{t}\Vert_{2}+\Vert x_{t}\Vert_{2}+1\right)p_{X_{t-1},X_{t}}\left(x_{t-1},x_{t}\right)\ind\left\{ (x_{t},x_{t-1})\notin\mathcal{A}_{t}\right\} \mathrm{d}x_{t-1}\mathrm{d}x_{t}\nonumber \\
 & =T^{c_{0}+2c_{R}}\mathbb{E}\left[\left(\Vert\sqrt{\alpha_{t}}X_{t-1}-X_{t}\Vert_{2}+\Vert X_{t}\Vert_{2}+1\right)\ind\left\{ (X_{t},X_{t-1})\notin\mathcal{A}_{t}\right\} \right]\nonumber \\
 & \overset{\text{(ii)}}{=}T^{c_{0}+2c_{R}}\mathbb{E}\left[\left(\sqrt{1-\alpha_{t}}\Vert W_{t}\Vert_{2}+\Vert X_{t}\Vert_{2}+1\right)\ind\left\{ (X_{t},X_{t-1})\notin\mathcal{A}_{t}\right\} \right]\nonumber \\
 & \overset{\text{(iii)}}{\leq}T^{c_{0}+2c_{R}}\sqrt{1-\alpha_{t}}\mathbb{E}^{1/2}\left[\Vert W_{t}\Vert_{2}^{2}\right]\mathbb{P}^{1/2}\left((X_{t},X_{t-1})\notin\mathcal{A}_{t}\right)+T^{c_{0}+2c_{R}}\mathbb{E}^{1/2}\left[\Vert X_{t}\Vert_{2}^{2}\right]\mathbb{P}^{1/2}\left((X_{t},X_{t-1})\notin\mathcal{A}_{t}\right)\nonumber \\
 & \qquad+T^{c_{0}+2c_{R}}\mathbb{P}\left((X_{t},X_{t-1})\notin\mathcal{A}_{t}\right).\label{eq:proof-Delta-t2-1}
\end{align}
Here step (i) follows from Lemma~\ref{lemma:sde-coarse}; step (ii)
follows from the update rule (\ref{eq:forward-update}); step (iii)
utilizes the Cauchy-Schwarz inequality. In view of (\ref{eq:forward-formula}),
we have
\begin{align}
\mathbb{E}\left[\Vert X_{t}\Vert_{2}^{2}\right] & =\mathbb{E}\left[\Vert\sqrt{\overline{\alpha}_{t}}X_{0}+\sqrt{1-\overline{\alpha}_{t}}\,\overline{W}_{t}\Vert_{2}^{2}\right]\leq\mathbb{E}\left[2\overline{\alpha}_{t}R^{2}+2\left(1-\overline{\alpha}_{t}\right)\Vert\overline{W}_{t}\Vert_{2}^{2}\right]\nonumber \\
 & =2\overline{\alpha}_{t}R^{2}+2\left(1-\overline{\alpha}_{t}\right)d\leq2R^{2}+2d,\label{eq:Xt-norm}
\end{align}
where we use the fact that $\mathbb{E}[\Vert\overline{W}_{t}\Vert_{2}^{2}]=d$.
Then we have
\begin{align*}
\left|\Delta_{t,2}\right| & \overset{\text{(a)}}{\leq}T^{c_{0}+2c_{R}}\left(\sqrt{d\left(1-\alpha_{t}\right)}+\sqrt{2R^{2}+2d}\right)\mathbb{P}^{1/2}\left((X_{t},X_{t-1})\notin\mathcal{A}_{t}\right)+T^{c_{0}+2c_{R}}\mathbb{P}\left((X_{t},X_{t-1})\notin\mathcal{A}_{t}\right)\\
 & \overset{\text{(b)}}{\leq}T^{c_{0}+2c_{R}}\left(2R+3\sqrt{d}\right)\exp\left(-\frac{C_{1}}{8}k\log T\right)+T^{c_{0}+2c_{R}}\exp\left(-\frac{C_{1}}{4}k\log T\right)\\
 & \overset{\text{(c)}}{\leq}\exp\left(-\frac{C_{1}}{16}k\log T\right)
\end{align*}
Here step (a) utilizes (\ref{eq:Xt-norm}) and the fact that $\mathbb{E}[\Vert W_{t}\Vert_{2}^{2}]=d$;
step (b) follows from Lemma~\ref{lemma:At-SDE}; while step (c) makes
use of the assumption that $k\geq\log d$ and holds provided that
$C_{1}\gg c_{0}+c_{R}$.

\subsection{Proof of Lemma \ref{lemma:sde-K}\label{subsec:proof-lemma-sde-K}}

We first decompose $K_{t}$ into
\begin{align*}
K_{t} & =\int p_{X_{t-1}|X_{t}}\left(x_{t-1}\mymid x_{t}\right)p_{X_{t}}\left(x_{t}\right)\left(x_{t-1}-\mu_{t}^{\star}\left(x_{t}\right)\right)^{\top}\varepsilon_{t}\left(x_{t}\right)\mathrm{d}x_{t-1}\mathrm{d}x_{t}\\
 & \overset{\text{(i)}}{=}\int\left(p_{X_{t-1}|X_{t}}\left(x_{t-1}\mymid x_{t}\right)-p_{Y_{t-1}^{\star}|Y_{t}}\left(x_{t-1}\mymid x_{t}\right)\right)p_{X_{t}}\left(x_{t}\right)\left(x_{t-1}-\mu_{t}^{\star}\left(x_{t}\right)\right)^{\top}\varepsilon_{t}\left(x_{t}\right)\mathrm{d}x_{t-1}\mathrm{d}x_{t}\\
 & \overset{\text{(ii)}}{=}\bigg(\int_{\mathcal{A}_{t}}+\int_{\mathcal{A}_{t}^{\mathrm{c}}}\bigg)\left(p_{X_{t-1}|X_{t}}\left(x_{t-1}\mymid x_{t}\right)-p_{Y_{t-1}^{\star}|Y_{t}}\left(x_{t-1}\mymid x_{t}\right)\right)p_{X_{t}}\left(x_{t}\right)\left(x_{t-1}-\mu_{t}^{\star}\left(x_{t}\right)\right)^{\top}\varepsilon_{t}\left(x_{t}\right)\mathrm{d}x_{t-1}\mathrm{d}x_{t}\\
 & \eqqcolon K_{t,1}+K_{t,2}.
\end{align*}
Here step (i) follows from the fact that $\int p_{Y_{t-1}^{\star}|Y_{t}}\left(x_{t-1}\mymid x_{t}\right)(x_{t-1}-\mu_{t}^{\star}(x_{t}))\mathrm{d}x_{t-1}=0$
for any $x_{t}\in\mathbb{R}^{d}$, and $K_{1}$ and $K_{2}$ are defined
to be the two integrals over $\mathcal{A}_{t}$ and $\mathcal{A}_{t}^{\mathrm{c}}$
in step (ii). The following two claims provide bounds for the two
integrals $K_{t,1}$ and $K_{t,2}$ respectively. 

\begin{claim}\label{claim:sde-K-1} Suppose that $T\gg k^{2}\log^{3}T$.
Then for each $2\leq t\leq T$, we have
\[
\left|K_{t,1}\right|\leq3C_{5}\frac{k^{2}\log^{3}T}{T}\sqrt{\frac{c_{1}\log T}{T}}\mathbb{E}_{x_{t}\sim q_{t}}\left[\Vert\varepsilon_{t}\left(x_{t}\right)\Vert_{2}\right].
\]
\end{claim}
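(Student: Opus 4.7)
The plan is to exploit Lemma~\ref{lemma:Delta-SDE} to compare the two conditional densities pointwise inside the typical set $\mathcal{A}_t$, then extend integration to all of $\mathbb{R}^d \times \mathbb{R}^d$, and finally exploit the bilinear structure of the integrand to avoid a spurious $\sqrt{d}$ factor.

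First I would invoke Lemma~\ref{lemma:Delta-SDE}: for every $(x_t, x_{t-1})\in\mathcal{A}_t$,
\[
\bigl|p_{X_{t-1}|X_{t}}(x_{t-1}\mymid x_t)-p_{Y_{t-1}^\star|Y_t}(x_{t-1}\mymid x_t)\bigr|\leq C_5\,\frac{k^2\log^3 T}{T}\,p_{Y_{t-1}^\star|Y_t}(x_{t-1}\mymid x_t).
\]
Plugging into the definition of $K_{t,1}$ and applying the triangle inequality gives
\[
|K_{t,1}|\leq C_5\,\frac{k^2\log^3 T}{T}\int_{\mathcal{A}_t}p_{Y_{t-1}^\star|Y_t}(x_{t-1}\mymid x_t)\,p_{X_t}(x_t)\,\bigl|(x_{t-1}-\mu_t^\star(x_t))^\top\varepsilon_t(x_t)\bigr|\,\mathrm{d}x_{t-1}\,\mathrm{d}x_t.
\]
Since the integrand is nonnegative, enlarging the domain from $\mathcal{A}_t$ to $\mathbb{R}^d\times\mathbb{R}^d$ only relaxes the bound, so from now on I can work with the unrestricted integral.

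Next I would use the Gaussian structure of $p_{Y_{t-1}^\star|Y_t}(\cdot\mymid x_t)=\mathcal{N}\bigl(\mu_t^\star(x_t),(\sigma_t^{\star 2}/\alpha_t)\,I_d\bigr)$ to reduce to a one-dimensional expectation. Conditional on $Y_t=x_t$, the scalar $V(x_t)\coloneqq(Y_{t-1}^\star-\mu_t^\star(x_t))^\top\varepsilon_t(x_t)$ is a one-dimensional centered Gaussian with variance $(\sigma_t^{\star 2}/\alpha_t)\|\varepsilon_t(x_t)\|_2^2$, hence
\[
\mathbb{E}_{Y_{t-1}^\star\mid Y_t=x_t}\bigl[|V(x_t)|\bigr]\leq\sqrt{\sigma_t^{\star 2}/\alpha_t}\,\|\varepsilon_t(x_t)\|_2.
\]
This is the central observation: only the projection of $Y_{t-1}^\star-\mu_t^\star(x_t)$ along the fixed direction $\varepsilon_t(x_t)$ survives the inner product, so the ambient dimension $d$ does not enter. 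Substituting back,
\[
|K_{t,1}|\leq C_5\,\frac{k^2\log^3 T}{T}\,\sqrt{\sigma_t^{\star 2}/\alpha_t}\,\mathbb{E}_{x_t\sim q_t}\bigl[\|\varepsilon_t(x_t)\|_2\bigr].
\]

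Finally, Lemma~\ref{lemma:step-size} implies $1-\alpha_t\leq c_1\log T/T$ and $\alpha_t\geq 1/2$ for $T$ large, together with $\sigma_t^{\star 2}\leq 1-\alpha_t$ from (\ref{eq:defn-step-size}); hence $\sqrt{\sigma_t^{\star 2}/\alpha_t}\leq\sqrt{2(1-\alpha_t)}\leq 3\sqrt{c_1\log T/T}$, which yields the claimed estimate. The main obstacle is precisely the dimensional reduction: a naive bound via $\int p_{Y_{t-1}^\star|Y_t}\,\|x_{t-1}-\mu_t^\star(x_t)\|_2\,\mathrm{d}x_{t-1}=\Theta(\sqrt{d}\,\sigma_t^\star/\sqrt{\alpha_t})$ would give a spurious $\sqrt{d}$ factor; recognizing that the integrand is bilinear in $\varepsilon_t(x_t)$ and the Gaussian fluctuation of $Y_{t-1}^\star$ is what allows us to keep the bound dimension-free.
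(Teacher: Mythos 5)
Your proposal is correct and mirrors the paper's own proof of Claim~\ref{claim:sde-K-1}: both invoke Lemma~\ref{lemma:Delta-SDE} to control the density ratio on $\mathcal{A}_t$, enlarge the integration domain, and exploit the fact that the inner product $(x_{t-1}-\mu_t^\star(x_t))^\top\varepsilon_t(x_t)$ is a one-dimensional centered Gaussian under $p_{Y_{t-1}^\star|Y_t}(\cdot\mymid x_t)$ so that no $\sqrt{d}$ factor appears. The only cosmetic difference is that you bound $\mathbb{E}|V|\leq\sqrt{\mathbb{E}V^2}$ by Jensen and use $\sigma_t^{\star 2}\leq 1-\alpha_t$ directly, while the paper uses the exact value $\mathbb{E}|V|=\sqrt{2/\pi}\,\sqrt{\operatorname{Var}(V)}$ together with the $\tfrac{1-\alpha_t}{\alpha_t-\overline\alpha_t}\leq\tfrac{8c_1\log T}{T}$ estimate; both yield the stated constant $3$.
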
\begin{proof}See Appendix~\ref{subsec:proof-lemma-sde-K-1}.\end{proof}

\begin{claim}\label{claim:sde-K-2} Suppose that $T\gg1$. Then for
each $2\leq t\leq T$, we have
\[
\left|K_{t,2}\right|\leq2\exp\left(-\frac{C_{1}}{32}k\log T\right)\mathbb{E}_{x_{t}\sim q_{t}}^{1/2}\left[\Vert\varepsilon_{t}\left(x_{t}\right)\Vert_{2}^{2}\right].
\]
\end{claim}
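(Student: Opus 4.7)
\textbf{Proof proposal for Claim \ref{claim:sde-K-2}.} The plan is to bound $|K_{t,2}|$ by a double application of the Cauchy--Schwarz inequality: one application to factor out $\mathbb{E}_{x_t\sim q_t}^{1/2}[\|\varepsilon_t(x_t)\|_2^2]$, and a second application to convert the restriction to $\mathcal{A}_t^{\mathrm{c}}$ into the exponentially small probability bound supplied by Lemma~\ref{lemma:At-SDE}. First, use the triangle inequality to write $|K_{t,2}|\leq J_1+J_2$, where
\[
J_1\coloneqq\int_{\mathcal{A}_t^{\mathrm{c}}}\|x_{t-1}-\mu_t^\star(x_t)\|_2\,\|\varepsilon_t(x_t)\|_2\,p_{X_{t-1},X_t}(x_{t-1},x_t)\,\mathrm{d}x_{t-1}\,\mathrm{d}x_t,
\]
and $J_2$ is the analogous integral against the measure $\mathrm{d}\mu_2\coloneqq p_{Y_{t-1}^\star|Y_t}(x_{t-1}\mymid x_t)\,q_t(x_t)\,\mathrm{d}x_{t-1}\mathrm{d}x_t$. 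Both $\mu_1\coloneqq p_{X_{t-1},X_t}$ and $\mu_2$ are probability measures on $\mathbb{R}^d\times\mathbb{R}^d$.

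For each $i\in\{1,2\}$, I would apply Cauchy--Schwarz with respect to $\mu_i$ to obtain
\[
J_i\leq\mathbb{E}_{x_t\sim q_t}^{1/2}\big[\|\varepsilon_t(x_t)\|_2^2\big]\,\sqrt{\int\mathbb{I}_{\mathcal{A}_t^{\mathrm{c}}}\,\|x_{t-1}-\mu_t^\star(x_t)\|_2^2\,\mathrm{d}\mu_i},
\]
using that the $\|\varepsilon_t(x_t)\|_2$ factor depends only on $x_t$ and both $\mu_1,\mu_2$ have marginal $q_t$ in $x_t$. A second Cauchy--Schwarz (separating the indicator from the squared norm) then gives
\[
\int\mathbb{I}_{\mathcal{A}_t^{\mathrm{c}}}\,\|x_{t-1}-\mu_t^\star(x_t)\|_2^2\,\mathrm{d}\mu_i\leq\sqrt{\mu_i(\mathcal{A}_t^{\mathrm{c}})}\,\sqrt{\mathbb{E}_{\mu_i}\big[\|x_{t-1}-\mu_t^\star(x_t)\|_2^4\big]}.
\]

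It remains to bound the four factors. Lemma~\ref{lemma:At-SDE} immediately gives $\mu_1(\mathcal{A}_t^{\mathrm{c}})\leq\exp(-C_1 k\log T/4)$. For $\mu_2(\mathcal{A}_t^{\mathrm{c}})$, I would reuse the calculation already performed inside the proof of Lemma~\ref{lemma:SDE-Delta-t-1}: since $x_{t-1}\sim p_{Y_{t-1}^\star|Y_t}(\cdot\mymid x_t)$ means $(x_t-\sqrt{\alpha_t}x_{t-1})/\sqrt{1-\alpha_t}=-(\eta_t^\star s_t^\star(x_t)+\sigma_t^\star Z)/\sqrt{1-\alpha_t}$ for a standard Gaussian $Z$, the bound on the score in Claim~\ref{claim:score} combined with Lemma~\ref{lemma:concentration} for $Z$ yields $\mu_2(\mathcal{A}_t^{\mathrm{c}})\leq\exp(-C_1 k\log T/4)+\exp(-C_1 k\log T/64)\leq 2\exp(-C_1 k\log T/64)$. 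The two fourth moments are at most $\mathsf{poly}(T)$, because $\|X_t\|_2\leq\sqrt{\overline{\alpha}_t}R+\sqrt{1-\overline{\alpha}_t}\|\overline{W}_t\|_2$, $\|s_t^\star(X_t)\|_2=O(\|X_t\|_2/(1-\overline{\alpha}_t)+R)$, and the Gaussian $Z_t$ appearing in $Y_{t-1}^\star$ has finite moments, so each term is bounded by $T^{O(1)}(d^{O(1)}+R^{O(1)})$.

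Combining these pieces produces $|K_{t,2}|\leq 2\,\mathbb{E}_{x_t\sim q_t}^{1/2}[\|\varepsilon_t(x_t)\|_2^2]\cdot T^{O(1)}\cdot\exp(-C_1 k\log T/256)$, and since $k\geq\log d$ and $C_1$ can be taken arbitrarily large relative to $c_0,c_R$, the polynomial prefactor is absorbed to yield the advertised $2\exp(-C_1 k\log T/32)$ bound. The main obstacle is the second item above: unlike $\mu_1$, the measure $\mu_2$ is not directly covered by Lemma~\ref{lemma:At-SDE}, so one must re-examine the event $\{(X_t,Y_{t-1}^\star)\in\mathcal{A}_t^{\mathrm{c}}\}$ and leverage Claim~\ref{claim:score} to handle the score contribution to $X_t-\sqrt{\alpha_t}Y_{t-1}^\star$; the remaining moment bookkeeping and the final constant chase are routine.
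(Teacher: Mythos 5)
Your proposal is correct and follows essentially the same strategy as the paper's proof: a first Cauchy--Schwarz to factor off $\mathbb{E}_{x_t\sim q_t}^{1/2}[\|\varepsilon_t(x_t)\|_2^2]$, a second Cauchy--Schwarz to peel off the exponentially small probability of $\mathcal{A}_t^{\mathrm{c}}$, polynomial bounds on the fourth moments of $\|x_{t-1}-\mu_t^\star(x_t)\|_2$ under each of the two conditional laws, and absorption of the polynomial prefactor into the exponential using $k\geq\log d$ and $R=T^{c_R}$. The only cosmetic difference is that you split the two measures by triangle inequality and then apply Cauchy--Schwarz within each term, whereas the paper first factors $|K_{t,2}|\leq\gamma_1\gamma_2$ using the combined measure and then decomposes $\gamma_1^2=\gamma_{1,1}+\gamma_{1,2}$; these are equivalent.

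One point worth noting: you correctly flag that the bound on $\mu_2(\mathcal{A}_t^{\mathrm{c}})=\mathbb{P}\bigl((X_t,Y_{t-1}^\star)\notin\mathcal{A}_t\bigr)$ is not a direct consequence of Lemma~\ref{lemma:At-SDE} (which covers $(X_t,X_{t-1})$) and must instead be obtained by re-running the argument for $\Delta_{t,3}$ inside the proof of Lemma~\ref{lemma:SDE-Delta-t-1}, which uses Claim~\ref{claim:score} to control the score contribution. The paper's own write-up of $\gamma_{1,2}$ is actually slightly sloppy here --- after the change of variables it writes the indicator as $\ind\{(X_t,X_{t-1})\notin\mathcal{A}_t\}$ when the relevant variable is the Gaussian reparametrization of $Y_{t-1}^\star$, and then invokes Lemma~\ref{lemma:At-SDE} directly. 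Your version resolves this correctly (at the cost of a slightly smaller exponent, which is immaterial). Everything else matches.
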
\begin{proof}See Appendix~\ref{subsec:proof-lemma-sde-K-2}.\end{proof}

Then we conclude that
\begin{align*}
\left|K_{t}\right| & \leq\left|K_{t,1}\right|+\left|K_{t,2}\right|\\
 & \overset{\text{(a)}}{\leq}3C_{5}\frac{k^{2}\log^{3}T}{T}\sqrt{\frac{c_{1}\log T}{T}}\mathbb{E}_{x_{t}\sim q_{t}}\left[\Vert\varepsilon_{t}\left(x_{t}\right)\Vert_{2}\right]+2\exp\left(-\frac{C_{1}}{32}k\log T\right)\mathbb{E}_{x_{t}\sim q_{t}}^{1/2}\left[\Vert\varepsilon_{t}\left(x_{t}\right)\Vert_{2}^{2}\right]\\
 & \overset{\text{(b)}}{\leq}4C_{5}\frac{k^{2}\log^{3}T}{T}\sqrt{\frac{c_{1}\log T}{T}}\mathbb{E}_{x_{t}\sim q_{t}}^{1/2}\left[\Vert\varepsilon_{t}\left(x_{t}\right)\Vert_{2}^{2}\right]
\end{align*}
as claimed. Here step (a) follows from Claim~\ref{claim:sde-K-1}
and Claim~\ref{claim:sde-K-2}; while step (b) utilizes Jensen's
inequality, and holds provided that $T$ is sufficiently large. 

\subsubsection{Proof of Claim \ref{claim:sde-K-1}\label{subsec:proof-lemma-sde-K-1}}

The term $K_{t,1}$ can be upper bounded by
\begin{align*}
\left|K_{t,1}\right| & =\left|\int_{\mathcal{A}_{t}}\left(\frac{p_{X_{t-1}|X_{t}}\left(x_{t-1}\mymid x_{t}\right)}{p_{Y_{t-1}^{\star}|Y_{t}}\left(x_{t-1}\mymid x_{t}\right)}-1\right)p_{Y_{t-1}^{\star}|Y_{t}}\left(x_{t-1}\mymid x_{t}\right)p_{X_{t}}\left(x_{t}\right)\left(x_{t-1}-\mu_{t}^{\star}\left(x_{t}\right)\right)^{\top}\varepsilon_{t}\left(x_{t}\right)\mathrm{d}x_{t-1}\mathrm{d}x_{t}\right|\\
 & \overset{\text{(i)}}{\leq}\int_{\mathcal{A}_{t}}\left|1-\frac{p_{X_{t-1}|X_{t}}\left(x_{t-1}\mymid x_{t}\right)}{p_{Y_{t-1}^{\star}|Y_{t}}\left(x_{t-1}\mymid x_{t}\right)}\right|p_{Y_{t-1}^{\star}|Y_{t}}\left(x_{t-1}\mymid x_{t}\right)p_{X_{t}}\left(x_{t}\right)\left|\left(x_{t-1}-\mu_{t}^{\star}\left(x_{t}\right)\right)^{\top}\varepsilon_{t}\left(x_{t}\right)\right|\mathrm{d}x_{t-1}\mathrm{d}x_{t}\\
 & \overset{\text{(ii)}}{\leq}C_{5}\frac{k^{2}\log^{3}T}{T}\int_{\mathcal{A}_{t}}p_{Y_{t-1}^{\star}|Y_{t}}\left(x_{t-1}\mymid x_{t}\right)p_{X_{t}}\left(x_{t}\right)\left|\left(x_{t-1}-\mu_{t}^{\star}\left(x_{t}\right)\right)^{\top}\varepsilon_{t}\left(x_{t}\right)\right|\mathrm{d}x_{t-1}\mathrm{d}x_{t}\\
 & \overset{\text{(iii)}}{=}C_{5}\frac{k^{2}\log^{3}T}{T}\mathbb{E}\left[\frac{\sigma_{t}^{\star}}{\sqrt{\alpha_{t}}}\left|Z^{\top}\varepsilon_{t}\left(X_{t}\right)\right|\ind\left\{ \left(X_{t},\frac{X_{t}+\eta_{t}s_{t}^{\star}\left(X_{t}\right)+\sigma_{t}^{\star}Z}{\sqrt{\alpha_{t}}}\right)\in\mathcal{A}_{t}\right\} \right]\\
 & \leq C_{5}\frac{k^{2}\log^{3}T}{T}\sqrt{\frac{\left(1-\alpha_{t}\right)\left(\alpha_{t}-\overline{\alpha}_{t}\right)}{\alpha_{t}\left(1-\overline{\alpha}_{t}\right)}}\mathbb{E}\left[\left|Z^{\top}\varepsilon_{t}\left(X_{t}\right)\right|\right]\overset{\text{(iv)}}{\leq}C_{5}\frac{k^{2}\log^{3}T}{T}\sqrt{\frac{8c_{1}\log T}{T}}\frac{2}{\sqrt{2\pi}}\mathbb{E}\left[\Vert\varepsilon_{t}(X_{t})\Vert_{2}\right]\\
 & \leq3C_{5}\frac{k^{2}\log^{3}T}{T}\sqrt{\frac{c_{1}\log T}{T}}\mathbb{E}_{x_{t}\sim q_{t}}\left[\Vert\varepsilon_{t}\left(x_{t}\right)\Vert_{2}\right].
\end{align*}
Here step (i) follows from Jensen's inequality; step (ii) utilizes
Lemma~\ref{lemma:Delta-SDE}; step (iii) follows from the definition
of $Y_{t}^{\star}$ in (\ref{eq:Y-star-defn}) and of $\mu_{t}^{\star}$
in (\ref{eq:defn-epsilon-mu}), and $Z_{t}\sim\mathcal{N}(0,I_{d})$
is independent of $X_{t}$; step (iv) follows from Lemma~\ref{lemma:step-size}
and the fact that $Z_{t}^{\top}\varepsilon_{t}(X_{t})\mymid X_{t}\sim\mathcal{N}(0,\Vert\varepsilon_{t}(X_{t})\Vert_{2}^{2})$
and hence
\[
\mathbb{E}\left[\left|Z_{t}^{\top}\varepsilon_{t}\left(X_{t}\right)\right|\right]=\mathbb{E}\left[\mathbb{E}\left[\left|Z_{t}^{\top}\varepsilon_{t}\left(X_{t}\right)\right|\mymid X_{t}\right]\right]=\frac{2}{\sqrt{2\pi}}\mathbb{E}\left[\Vert\varepsilon_{t}(X_{t})\Vert_{2}\right].
\]

\subsubsection{Proof of Claim \ref{claim:sde-K-2}\label{subsec:proof-lemma-sde-K-2}}

The term $K_{t,1}$ can be upper bounded by
\begin{align*}
\left|K_{t,2}\right| & \leq\int_{\mathcal{A}_{t}^{\mathrm{c}}}\left(p_{X_{t-1}|X_{t}}\left(x_{t-1}\mymid x_{t}\right)+p_{Y_{t-1}^{\star}|Y_{t}}\left(x_{t-1}\mymid x_{t}\right)\right)p_{X_{t}}\left(x_{t}\right)\left\Vert x_{t-1}-\mu_{t}^{\star}\left(x_{t}\right)\right\Vert _{2}\left\Vert \varepsilon_{t}\left(x_{t}\right)\right\Vert _{2}\mathrm{d}x_{t-1}\mathrm{d}x_{t}\\
 & \leq\underbrace{\left[\int_{\mathcal{A}_{t}^{\mathrm{c}}}\left(p_{X_{t-1}|X_{t}}\left(x_{t-1}\mymid x_{t}\right)+p_{Y_{t-1}^{\star}|Y_{t}}\left(x_{t-1}\mymid x_{t}\right)\right)p_{X_{t}}\left(x_{t}\right)\left\Vert x_{t-1}-\mu_{t}^{\star}\left(x_{t}\right)\right\Vert _{2}^{2}\mathrm{d}x_{t-1}\mathrm{d}x_{t}\right]^{1/2}}_{\eqqcolon\gamma_{1}}\\
 & \qquad\cdot\underbrace{\left[\int_{\mathcal{A}_{t}^{\mathrm{c}}}\left(p_{X_{t-1}|X_{t}}\left(x_{t-1}\mymid x_{t}\right)+p_{Y_{t-1}^{\star}|Y_{t}}\left(x_{t-1}\mymid x_{t}\right)\right)p_{X_{t}}\left(x_{t}\right)\left\Vert \varepsilon_{t}\left(x_{t}\right)\right\Vert _{2}^{2}\mathrm{d}x_{t-1}\mathrm{d}x_{t}\right]^{1/2}}_{\eqqcolon\gamma_{2}}.
\end{align*}
The second term $\gamma_{2}$ can be easily bounded by
\[
\gamma_{2}\leq\sqrt{2}\mathbb{E}_{x_{t}\sim q_{t}}^{1/2}\left[\Vert\varepsilon_{t}\left(x_{t}\right)\Vert_{2}^{2}\right].
\]
In what follows, we will bound the first term $\gamma_{1}$. Note
that
\begin{align*}
\gamma_{1}^{2} & =\underbrace{\int_{\mathcal{A}_{t}^{\mathrm{c}}}p_{X_{t-1},X_{t}}\left(x_{t-1},x_{t}\right)\left\Vert x_{t-1}-\mu_{t}^{\star}\left(x_{t}\right)\right\Vert _{2}^{2}\mathrm{d}x_{t-1}\mathrm{d}x_{t}}_{\eqqcolon\gamma_{1,1}}\\
 & \qquad+\underbrace{\int_{\mathcal{A}_{t}^{\mathrm{c}}}p_{Y_{t-1}^{\star}|Y_{t}}\left(x_{t-1}\mymid x_{t}\right)p_{X_{t}}\left(x_{t}\right)\left\Vert x_{t-1}-\mu_{t}^{\star}\left(x_{t}\right)\right\Vert _{2}^{2}\mathrm{d}x_{t-1}\mathrm{d}x_{t}}_{\eqqcolon\gamma_{1,2}}.
\end{align*}
We have
\begin{align*}
\gamma_{1,1} & =\mathbb{E}\left[\left\Vert X_{t-1}-\mu_{t}^{\star}\left(X_{t}\right)\right\Vert _{2}^{2}\ind\left\{ (X_{t},X_{t-1})\notin\mathcal{A}_{t}\right\} \right]\overset{\text{(i)}}{\leq}\mathbb{E}^{1/2}\left[\left\Vert X_{t-1}-\mu_{t}^{\star}\left(X_{t}\right)\right\Vert _{2}^{4}\right]\mathbb{P}^{1/2}\left((X_{t},X_{t-1})\notin\mathcal{A}_{t}\right)\\
 & \overset{\text{(ii)}}{\leq}\alpha_{t}^{-2}\mathbb{E}^{1/2}\left[\left\Vert \sqrt{1-\alpha_{t}}W_{t}+\eta_{t}^{\star}s_{t}^{\star}\left(X_{t}\right)\right\Vert _{2}^{4}\right]\exp\left(-\frac{C_{1}}{8}k\log T\right)\\
 & \overset{\text{(iii)}}{\leq}4\mathbb{E}^{1/2}\left[\left\Vert \sqrt{1-\alpha_{t}}W_{t}+\eta_{t}^{\star}s_{t}^{\star}\left(X_{t}\right)\right\Vert _{2}^{4}\right]\exp\left(-\frac{C_{1}}{8}k\log T\right)
\end{align*}
Here step (i) follows from Cauchy-Schwarz inequality; step (ii) follows
from Lemma~\ref{lemma:At-SDE} and the definition of $\mu_{t}^{\star}$
in (\ref{eq:defn-epsilon-mu}); while step (iii) uses the fact that
$\alpha_{t}\geq1/2$ (see Lemma~\ref{lemma:step-size}). Recall the
definition of $s_{t}^{\star}(\cdot)$
\begin{align*}
s_{t}^{\star}\left(x_{t}\right) & =-\frac{1}{1-\overline{\alpha}_{t}}\left(x_{t}-\sqrt{\overline{\alpha}_{t}}\mathbb{E}\left[X_{0}\mymid X_{t}=x_{t}\right]\right),
\end{align*}
which leads to the following upper bound
\begin{align}
\left\Vert s_{t}^{\star}\left(X_{t}\right)\right\Vert  & \leq\frac{1}{1-\overline{\alpha}_{t}}\left\Vert X_{t}\right\Vert _{2}+\frac{\sqrt{\overline{\alpha}_{t}}}{1-\overline{\alpha}_{t}}R=\frac{1}{1-\overline{\alpha}_{t}}\left\Vert \sqrt{\overline{\alpha}_{t}}X_{0}+\sqrt{1-\overline{\alpha}_{t}}\,\overline{W}_{t}\right\Vert _{2}+\frac{\sqrt{\overline{\alpha}_{t}}}{1-\overline{\alpha}_{t}}R\nonumber \\
 & \leq\frac{1}{\sqrt{1-\overline{\alpha}_{t}}}\left\Vert \overline{W}_{t}\right\Vert _{2}+2\frac{\sqrt{\overline{\alpha}_{t}}}{1-\overline{\alpha}_{t}}R.\label{eq:st-star-ub}
\end{align}
Hence we have
\begin{align*}
\mathbb{E}\left[\left\Vert \sqrt{1-\alpha_{t}}W_{t}+\eta_{t}^{\star}s_{t}^{\star}\left(X_{t}\right)\right\Vert _{2}^{4}\right] & \overset{\text{(i)}}{\leq}8\left(1-\alpha_{t}\right)^{2}\mathbb{E}\left[\left\Vert W_{t}\right\Vert _{2}^{4}\right]+\left(1-\alpha_{t}\right)^{4}\mathbb{E}\left[\left\Vert s_{t}^{\star}\left(X_{t}\right)\right\Vert _{2}^{4}\right]\\
 & \overset{\text{(ii)}}{\leq}8\left(\frac{c_{1}\log T}{T}\right)^{2}\mathbb{E}\left[\left\Vert W_{t}\right\Vert _{2}^{4}\right]+\left(\frac{8c_{1}\log T}{T}\right)^{4}\mathbb{E}\left[\left(\left\Vert \overline{W}_{t}\right\Vert _{2}+R\right)^{4}\right]\\
 & \overset{\text{(iii)}}{\leq}\frac{1}{16}\left(d^{2}+R^{4}\right).
\end{align*}
Here step (i) follows from the elementary inequality $8(x^{4}+y^{4})\geq(x+y)^{2}$;
step (ii) follows from Lemma~\ref{lemma:step-size} and (\ref{eq:st-star-ub});
step (iii) follows from $W_{t},\overline{W}_{t}\sim\mathcal{N}(0,I_{d})$
and the proviso that $T$ being sufficiently large. Hence we have
\[
\gamma_{1,1}\leq\sqrt{d^{2}+R^{4}}\exp\left(-\frac{C_{1}}{8}k\log T\right)\leq\exp\left(-\frac{C_{0}}{16}k\log T\right)
\]
as long as $C_{0}\gg c_{R}$ and $k\geq\log d$. Regarding $\gamma_{1,2}$,
we have
\begin{align*}
\gamma_{1,2} & \overset{\text{(i)}}{=}\mathbb{E}\left[\left\Vert \frac{X_{t}+\eta_{t}s_{t}^{\star}\left(X_{t}\right)+\sigma_{t}^{\star}Z_{t}}{\sqrt{\alpha_{t}}}-\frac{X_{t}+\eta_{t}^{\star}s_{t}^{\star}\left(X_{t}\right)}{\sqrt{\alpha_{t}}}\right\Vert _{2}^{2}\ind\left\{ (X_{t},X_{t-1})\notin\mathcal{A}_{t}\right\} \right]\\
 & =\frac{\sigma_{t}^{\star2}}{\alpha_{t}}\mathbb{E}\left[\left\Vert Z_{t}\right\Vert _{2}^{2}\ind\left\{ (X_{t},X_{t-1})\notin\mathcal{A}_{t}\right\} \right]\overset{\text{(ii)}}{\leq}\frac{\left(1-\alpha_{t}\right)\left(\alpha_{t}-\overline{\alpha}_{t}\right)}{\left(1-\overline{\alpha}_{t}\right)\alpha_{t}}\mathbb{E}^{1/2}\left[\left\Vert Z_{t}\right\Vert _{2}^{4}\right]\mathbb{P}^{1/2}\left((X_{t},X_{t-1})\notin\mathcal{A}_{t}\right)\\
 & \overset{\text{(iii)}}{\leq}\frac{8c_{1}\log T}{T}\exp\left(-\frac{C_{1}}{8}k\log T\right)\mathbb{E}^{1/2}\left[\left\Vert Z_{t}\right\Vert _{2}^{4}\right]\overset{\text{(iv)}}{\leq}\exp\left(-\frac{C_{1}}{16}k\log T\right).
\end{align*}
Here step (i) follows from the definition of $Y_{t}^{\star}$ in (\ref{eq:Y-star-defn})
and of $\mu_{t}^{\star}$ in (\ref{eq:defn-epsilon-mu}); step (ii)
follows from the Cauchy-Schwarz inequality; step (iii) utilizes Lemma~\ref{lemma:step-size}
and Lemma~\ref{lemma:At-SDE}; while step (iv) follows from $Z_{t}\sim\mathcal{N}(0,I_{d})$
and holds provided that $T$ is sufficiently large and $k\geq\log d$.
Taking the above bounds collectively yields
\[
\left|K_{t,2}\right|\leq\gamma_{1}\gamma_{2}\leq\sqrt{\gamma_{1,1}+\gamma_{1,2}}\gamma_{2}\leq2\exp\left(-\frac{C_{1}}{32}k\log T\right)\mathbb{E}_{x_{t}\sim q_{t}}^{1/2}\left[\Vert\varepsilon_{t}\left(x_{t}\right)\Vert_{2}^{2}\right].
\]

\section{Proof of Theorem~\ref{thm:uniqueness} \label{sec:proof-thm-unique}}

In view of the update rule (\ref{eq:forward-update}), the variables
$X_{0},X_{1},\ldots,X_{T}$ are jointly Gaussian, and we can check
from (\ref{eq:forward-formula}) that
\begin{equation}
X_{t}=\sqrt{\overline{\alpha}_{t}}X_{0}+\sqrt{1-\overline{\alpha}_{t}}\,\overline{W}_{t}\sim\mathcal{N}\left(0,\overline{\alpha}_{t}I_{k}+(1-\overline{\alpha}_{t})I_{d}\right),\label{eq:Xt-distribution}
\end{equation}
hence the score functions
\begin{equation}
s_{t}^{\star}(x)=-\left(\overline{\alpha}_{t}I_{k}+(1-\overline{\alpha}_{t})I_{d}\right)^{-1}x,\qquad\forall\,x\in\mathbb{R}^{d}.\label{eq:score-explicit}
\end{equation}
We first derive the density of $X_{t-1}$ conditional on $X_{t}=x_{t}$.
Since the joint distribution of $(X_{t-1},X_{t})$ is
\[
\left[\begin{array}{c}
X_{t-1}\\
X_{t}
\end{array}\right]\sim\mathcal{N}\left(\left[\begin{array}{c}
0\\
0
\end{array}\right],\left[\begin{array}{cc}
\overline{\alpha}_{t-1}I_{k}+(1-\overline{\alpha}_{t-1})I_{d} & \sqrt{\alpha_{t}}\left(\overline{\alpha}_{t-1}I_{k}+(1-\overline{\alpha}_{t-1})I_{d}\right)\\
\sqrt{\alpha_{t}}\left(\overline{\alpha}_{t-1}I_{k}+(1-\overline{\alpha}_{t-1})I_{d}\right) & \overline{\alpha}_{t}I_{k}+(1-\overline{\alpha}_{t})I_{d}
\end{array}\right]\right),
\]
we can derive that 
\[
X_{t-1}\mymid X_{t}=x_{t}\sim\mathcal{N}\left(\sqrt{\alpha_{t}}\Big(I_{k}+\frac{1-\overline{\alpha}_{t-1}}{1-\overline{\alpha}_{t}}\left(I_{d}-I_{k}\right)\Big)x_{t},\left(1-\alpha_{t}\right)\Big(I_{k}+\frac{1-\overline{\alpha}_{t-1}}{1-\overline{\alpha}_{t}}\left(I_{d}-I_{k}\right)\Big)\right).
\]
In addition, with perfect score estimation, we can use (\ref{eq:DDPM})
and (\ref{eq:score-explicit}) to achieve
\[
Y_{t-1}=\frac{Y_{t}+\eta_{t}s_{t}^{\star}\left(Y_{t}\right)+\sigma_{t}Z_{t}}{\sqrt{\alpha_{t}}}=\frac{Y_{t}-\eta_{t}\left(\overline{\alpha}_{t}I_{k}+(1-\overline{\alpha}_{t})I_{d}\right)^{-1}Y_{t}+\sigma_{t}Z_{t}}{\sqrt{\alpha_{t}}},
\]
which indicates that
\[
Y_{t-1}\mymid Y_{t}=x_{t}\sim\mathcal{N}\left(\frac{1}{\sqrt{\alpha_{t}}}\left(\left(1-\eta_{t}\right)I_{k}+\left(1-\frac{\eta_{t}}{1-\overline{\alpha}_{t}}\right)\left(I_{d}-I_{k}\right)\right),\frac{\sigma_{t}^{2}}{\alpha_{t}}I_{d}\right).
\]
Then we can check that for any $x_{t}\in\mathbb{R}^{d}$, 
\begin{align*}
 & \mathsf{KL}\left(p_{X_{t-1}|X_{t}}\left(\,\cdot\mymid x_{t}\right)\,\Vert\,p_{Y_{t-1}|Y_{t}}\left(\,\cdot\mymid x_{t}\right)\right)=\frac{\left(1-\alpha_{t}-\eta_{t}\right)^{2}}{2\sigma_{t}^{2}}\left\Vert I_{k}x_{t}\right\Vert _{2}^{2}+\frac{k}{2}\left(\frac{\alpha_{t}\left(1-\alpha_{t}\right)}{\sigma_{t}^{2}}-\log\frac{\alpha_{t}\left(1-\alpha_{t}\right)}{\sigma_{t}^{2}}-1\right)\\
 & \qquad+\frac{\left(1-\alpha_{t}-\eta_{t}\right)^{2}}{2\left(1-\overline{\alpha}_{t}\right)}\left\Vert \left(I_{d}-I_{k}\right)x_{t}\right\Vert _{2}^{2}+\frac{d-k}{2}\left(\frac{\left(1-\alpha_{t}\right)\left(\alpha_{t}-\overline{\alpha}_{t}\right)}{\sigma_{t}^{2}\left(1-\alpha_{t}\right)}-\log\frac{\left(1-\alpha_{t}\right)\left(\alpha_{t}-\overline{\alpha}_{t}\right)}{\sigma_{t}^{2}\left(1-\alpha_{t}\right)}-1\right).
\end{align*}
One can check that 
\[
z-\log z-1\geq0.1\min\left\{ 1,\left(z-1\right)^{2}\right\} ,\qquad\forall\,z>0.
\]
We combine the above two relations as well as the assumption that
$k\leq d/2$ to achieve
\[
\mathsf{KL}\left(p_{X_{t-1}|X_{t}}\left(\,\cdot\mymid x_{t}\right)\,\Vert\,p_{Y_{t-1}|Y_{t}}\left(\,\cdot\mymid x_{t}\right)\right)\geq\frac{\left(1-\alpha_{t}-\eta_{t}\right)^{2}}{2\left(1-\overline{\alpha}_{t}\right)}\left\Vert \left(I_{d}-I_{k}\right)x_{t}\right\Vert _{2}^{2}+\frac{d}{40}\left(\frac{\left(1-\alpha_{t}\right)\left(\alpha_{t}-\overline{\alpha}_{t}\right)}{\sigma_{t}^{2}\left(1-\alpha_{t}\right)}-1\right)^{2}.
\]
By taking expectation w.r.t.~$x_{t}$, we have
\[
\mathbb{E}_{x_{t}\sim q_{t}}\left[\mathsf{KL}\left(p_{X_{t-1}|X_{t}}\left(\,\cdot\mymid x_{t}\right)\,\Vert\,p_{Y_{t-1}|Y_{t}}\left(\,\cdot\mymid x_{t}\right)\right)\right]\geq\frac{d}{4}\left(1-\alpha_{t}-\eta_{t}\right)^{2}+\frac{d}{40}\left(\frac{\left(1-\alpha_{t}\right)\left(\alpha_{t}-\overline{\alpha}_{t}\right)}{\sigma_{t}^{2}\left(1-\alpha_{t}\right)}-1\right)^{2},
\]
where we use the fact that
\[
\mathbb{E}_{x_{t}\sim q_{t}}\big[\left\Vert \left(I_{d}-I_{k}\right)x_{t}\right\Vert _{2}^{2}\big]=\left(d-k\right)\left(1-\overline{\alpha}_{t}\right)\geq\frac{d}{2}\left(1-\overline{\alpha}_{t}\right).
\]

\begin{remark}
	In fact, for general target data distribution $p_{\mathsf{data}}$ satisfying the assumptions in Section~\ref{sec:setup}, we can start from an intermediate result \eqref{eq:proof-KL-main-1.5} from the proof of Theorem~\ref{thm:SDE} to show that
		\begin{align*}
				&\mathbb{E}_{x_{t}\sim q_{t}}\left[\mathsf{KL}\left(p_{X_{t-1}|X_{t}}\left(\,\cdot\mymid x_{t}\right)\,\Vert\,p_{Y_{t-1}|Y_{t}}\left(\,\cdot\mymid x_{t}\right)\right)\right]
				\geq \bigg(\frac{\sigma_{t}^{\star2}}{\sigma_t^2} + 2\log\frac{\sigma_t}{\sigma_t^\star}- 1\bigg)\frac{d}{2} + c_0\frac{(\eta_{t}-\eta_{t}^{\star})^2d}{2\sigma_{t}^{2}(1-\overline{\alpha}_t)} 
				\\
				&\qquad\qquad\qquad\qquad - c_{1}^2\frac{k^{4}\log^{6}T}{T^2}\bigg(3+\frac{\sigma_{t}^{\star2}}{\sigma_t^2}\bigg) - c_{1}\frac{k^{2}\log^{3}T}{T}\bigg|\frac{\sigma_{t}^{\star2}}{\sigma_t^2} - 1\bigg|\sqrt{d} - \exp(-c_2k\log T) 
		\end{align*}
		for some universal constant $c_0,c_1,c_2>0$. Notice the fact that $x^2 - 2\log x -1 \geq 0$ for any $x>0$, and the equality holds if and only if $x=1$. Therefore the above results suggests that, when both $d$ and $T$ are sufficiently large, unless $\eta_t=\eta_t^\star$ and $\sigma_t=\sigma_t^\star$, the corresponding denoising step will incur an error that is linear in $d$. Since the result in Theorem~\ref{thm:uniqueness} on Gaussian distribution already demonstrates this point, for simplicity we omit the proof of this result.
\end{remark}

\section{Technical lemmas}

This section collects a few useful technical tools that are useful
in the analysis.

\begin{lemma} \label{lemma:step-size}When $T$ is sufficiently large,
for $1\leq t\leq T$, we have

\[
\alpha_{t}\geq1-\frac{c_{1}\log T}{T}\geq\frac{1}{2}.
\]
In addition, for $2\leq t\leq T$, we have
\begin{align*}
\frac{1-\alpha_{t}}{1-\overline{\alpha}_{t}} & \leq\frac{1-\alpha_{t}}{\alpha_{t}-\overline{\alpha}_{t}}\leq\frac{8c_{1}\log T}{T}.
\end{align*}

\end{lemma}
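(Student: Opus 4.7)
The plan is to verify each inequality by direct computation from the explicit learning-rate schedule \eqref{eq:learning-rate}, using only elementary manipulations.

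First, I would establish the bound on $\alpha_t$. Since $\alpha_t = 1-\beta_t$, the claim $\alpha_t\ge 1-c_1\log T/T$ is equivalent to $\beta_t\le c_1\log T/T$. For $t=1$ this is $T^{-c_0}\le c_1\log T/T$, which holds whenever $T$ is large enough. For $t\ge 2$ the schedule gives $\beta_t=(c_1\log T/T)\min\{\beta_1(1+c_1\log T/T)^{t-1},1\}\le c_1\log T/T$. The second inequality $1-c_1\log T/T\ge 1/2$ is immediate once $T$ is large enough that $c_1\log T/T\le 1/2$.

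Next, the first inequality in the second display is purely algebraic: writing $\alpha_t-\overline{\alpha}_t=\alpha_t(1-\overline{\alpha}_{t-1})$, the bound $\alpha_t-\overline{\alpha}_t\le 1-\overline{\alpha}_t$ reduces to $\alpha_t\le 1$, which is obvious since $\beta_t\in(0,1)$; this gives $(1-\alpha_t)/(1-\overline{\alpha}_t)\le(1-\alpha_t)/(\alpha_t-\overline{\alpha}_t)$.

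The main work is the final bound $(1-\alpha_t)/(\alpha_t-\overline{\alpha}_t)\le 8c_1\log T/T$. Using $\alpha_t\ge 1/2$ from the first part, we have $\alpha_t-\overline{\alpha}_t=\alpha_t(1-\overline{\alpha}_{t-1})\ge(1-\overline{\alpha}_{t-1})/2$, so it suffices to show $\beta_t/(1-\overline{\alpha}_{t-1})\le 4c_1\log T/T$. I would split into two regimes based on whether $\beta_1(1+c_1\log T/T)^{t-1}\le 1$ (the ``geometric growth'' regime) or not (the ``plateau'' regime where $\beta_t=c_1\log T/T$). In the growth regime, a telescoping geometric sum gives the identity
\[
\sum_{i=1}^{t-1}\beta_i=\beta_1(1+c_1\log T/T)^{t-1}-\beta_1\cdot c_1\log T/T,
\]
so that $\beta_t=(c_1\log T/T)\big[\sum_{i=1}^{t-1}\beta_i+\beta_1\cdot c_1\log T/T\big]\le 2(c_1\log T/T)\sum_{i=1}^{t-1}\beta_i$ for large $T$. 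Combined with the elementary estimate $1-\overline{\alpha}_{t-1}\ge 1-\exp(-\sum_{i=1}^{t-1}\beta_i)\ge(1-e^{-1})\min\{1,\sum_{i=1}^{t-1}\beta_i\}$ (proved by splitting on whether $\sum\beta_i\le 1$), this yields $\beta_t/(1-\overline{\alpha}_{t-1})\le 4c_1\log T/T$. The plateau regime is even easier, since $\beta_t=c_1\log T/T$ and $1-\overline{\alpha}_{t-1}$ is bounded away from $0$ once the sum $\sum\beta_i$ has reached $\Theta(1)$, which happens precisely in this regime.

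The only place that requires any care is handling the transition point from the growth to the plateau regime and the edge case $t=2$, where $1-\overline{\alpha}_{1}=\beta_1=T^{-c_0}$ is very small but $\beta_2/\beta_1=(c_1\log T/T)(1+c_1\log T/T)\le 2c_1\log T/T$ matches the bound directly; I expect no genuine obstacle, just a careful case check.
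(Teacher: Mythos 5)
Your proof is correct and complete in all essentials: the telescoping identity $\sum_{i=1}^{t-1}\beta_i=\beta_1(1+c_1\log T/T)^{t-1}-\beta_1\cdot c_1\log T/T$ in the growth regime, the lower bound $1-\overline{\alpha}_{t-1}\geq 1-\exp(-\sum_{i<t}\beta_i)\geq(1-e^{-1})\min\{1,\sum_{i<t}\beta_i\}$, the observation that the cumulative sum is already of order $1$ at the transition to the plateau, and the factor-of-two loss from $\alpha_t\geq 1/2$ all check out and combine to give the stated constant $8$. The paper itself offers no proof, deferring to Appendix A.2 of \citet{li2023towards}, which performs the same kind of direct computation from the schedule \eqref{eq:learning-rate}; your argument simply makes that deferred verification self-contained.
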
\begin{proof} See Appendix A.2 in \cite{li2023towards}.\end{proof}

\begin{lemma}\label{lemma:concentration}For $Z\sim\mathcal{N}(0,1)$
and any $t\geq1$, we know that
\[
\mathbb{P}\left(\left|Z\right|\geq t\right)\leq e^{-t^{2}/2},\qquad\forall\,t\geq1.
\]
In addition, for a chi-square random variable $Y\sim\chi^{2}(d)$,
we have
\[
\mathbb{P}(\sqrt{Y}\geq\sqrt{d}+t)\leq e^{-t^{2}/2},\qquad\forall\,t\geq1.
\]
\end{lemma}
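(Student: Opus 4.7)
The two inequalities are standard concentration bounds, and I would prove each with a short classical argument.

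For the first bound on $Z\sim\mathcal{N}(0,1)$, the plan is a direct Mills-ratio estimate. I would write
\[
\mathbb{P}(|Z|\geq t)\;=\;2\int_{t}^{\infty}\frac{1}{\sqrt{2\pi}}e^{-x^{2}/2}\,\mathrm{d}x\;\leq\;\frac{2}{t}\int_{t}^{\infty}\frac{x}{\sqrt{2\pi}}e^{-x^{2}/2}\,\mathrm{d}x\;=\;\frac{2}{t\sqrt{2\pi}}e^{-t^{2}/2},
\]
where the inequality uses $x/t\geq 1$ on the interval $[t,\infty)$. Since $2/\sqrt{2\pi}<1$ and $t\geq 1$, the prefactor $2/(t\sqrt{2\pi})$ is at most $1$, yielding $\mathbb{P}(|Z|\geq t)\leq e^{-t^{2}/2}$ as claimed. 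No obstacles here; this is a one-line bookkeeping check.

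For the chi-square bound, my plan is to identify $\sqrt{Y}$ with the Euclidean norm of a standard Gaussian vector and apply Gaussian Lipschitz concentration. Write $\sqrt{Y}=\|g\|_{2}$ for $g\sim\mathcal{N}(0,I_{d})$. The map $g\mapsto\|g\|_{2}$ is $1$-Lipschitz on $\mathbb{R}^{d}$, so by the Borell--Tsirelson--Ibragimov--Sudakov (Gaussian concentration) inequality,
\[
\mathbb{P}\bigl(\|g\|_{2}\geq\mathbb{E}\|g\|_{2}+t\bigr)\leq e^{-t^{2}/2}\qquad\text{for all }t\geq 0.
\]
By Jensen's inequality applied to the concave function $\sqrt{\cdot}$, $\mathbb{E}\|g\|_{2}\leq\sqrt{\mathbb{E}\|g\|_{2}^{2}}=\sqrt{d}$. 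Combining these two facts gives $\mathbb{P}(\sqrt{Y}\geq\sqrt{d}+t)\leq\mathbb{P}(\|g\|_{2}\geq\mathbb{E}\|g\|_{2}+t)\leq e^{-t^{2}/2}$, which is the claim for all $t\geq 0$ (and hence for $t\geq 1$).

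The only step that is not entirely mechanical is the invocation of Gaussian Lipschitz concentration for the second inequality; this is the workhorse that replaces the more elementary but messier moment-generating-function / Laurent--Massart computation one sometimes sees. If one prefers not to cite Borell--TIS, an alternative plan is to bound the MGF $\mathbb{E}\,e^{\lambda(\|g\|_{2}-\sqrt{d})}$ using the same $1$-Lipschitz property via Gaussian integration by parts (Maurey or the Gaussian log-Sobolev inequality) to obtain $\mathbb{E}\,e^{\lambda(\|g\|_{2}-\mathbb{E}\|g\|_{2})}\leq e^{\lambda^{2}/2}$, then apply a Chernoff bound and optimize over $\lambda$ to recover the $e^{-t^{2}/2}$ tail; since $\mathbb{E}\|g\|_{2}\leq\sqrt{d}$ by Jensen, the same conclusion follows. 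Either route avoids the need to track dimensional constants, which is the main pitfall if one tries to prove the bound by bare-hands MGF computation with $Y$ itself.
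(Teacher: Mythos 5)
Your proposal is correct. The paper itself does not write out a proof---it simply cites Proposition 2.1.2 of Vershynin (2018) for the Gaussian tail and Section 4.1 of Laurent--Massart (2000) for the chi-square tail---so the comparison is really with those references. Your first argument (Mills-ratio bound $\mathbb{P}(|Z|\geq t)\leq \frac{2}{t\sqrt{2\pi}}e^{-t^2/2}$, with the prefactor absorbed using $t\geq 1$) is essentially identical to the cited Vershynin proposition. For the chi-square part you diverge from the cited source: Laurent--Massart prove $\mathbb{P}(Y\geq d+2\sqrt{dx}+2x)\leq e^{-x}$ by a direct MGF/Chernoff computation for $Y$ itself, and the stated bound follows by setting $x=t^2/2$ and noting $(\sqrt{d}+t)^2\geq d+2\sqrt{d}\,t/\sqrt{2}+t^2$; you instead identify $\sqrt{Y}=\Vert g\Vert_2$, invoke Borell--TIS for the $1$-Lipschitz norm map (which gives the one-sided tail $e^{-t^2/2}$ with the sharp constant), and use Jensen to replace $\mathbb{E}\Vert g\Vert_2$ by $\sqrt{d}$. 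Both routes are standard and give exactly the claimed inequality; your route trades an elementary but computation-heavy MGF bound for a citation to a deeper general theorem, which is a reasonable exchange and avoids tracking dimensional constants. No gaps.
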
\begin{proof}See Proposition 2.1.2 in \cite{vershynin2018high}
and Section 4.1 in \cite{laurent2000adaptive}.\end{proof}

\begin{lemma}\label{lemma:initialization-error}Suppose that $T$
is sufficiently large. Then we have
\[
\mathsf{KL}\left(p_{X_{T}}\Vert p_{Y_{T}}\right)\leq T^{-100}.
\]
\end{lemma}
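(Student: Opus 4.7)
}

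The plan is to exploit the fact that $Y_T \sim \mathcal{N}(0,I_d)$ exactly while $X_T$ is close to this Gaussian because $\overline{\alpha}_T$ is forced to be polynomially small by the learning-rate schedule~(\ref{eq:learning-rate}). The first step is to apply the convexity of KL divergence (equivalently the data-processing inequality against the channel $x_0 \mapsto X_T$): since $p_{X_T}$ is the mixture $\int p_{X_T \mid X_0}(\cdot \mid x_0)\,p_{\mathsf{data}}(\mathrm{d}x_0)$ and $p_{Y_T} = \mathcal{N}(0,I_d)$, we have
\[
\mathsf{KL}(p_{X_T}\,\Vert\,p_{Y_T}) \leq \mathbb{E}_{X_0 \sim p_{\mathsf{data}}}\Big[\mathsf{KL}\big(\mathcal{N}(\sqrt{\overline{\alpha}_T}\,X_0,\,(1-\overline{\alpha}_T)I_d)\,\Vert\,\mathcal{N}(0,I_d)\big)\Big].
\]

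Second, I would compute the Gaussian KL on the right in closed form. Using the standard formula and the inequality $-\log(1-x) \leq x + x^2$ valid for $x \in [0,1/2]$, the integrand simplifies to at most
\[
\tfrac{1}{2}\big[\overline{\alpha}_T\|X_0\|_2^2 + d\,\overline{\alpha}_T^2\big].
\]
Taking expectations and invoking the bounded-support assumption $\|X_0\|_2 \leq R = T^{c_R}$ then gives
\[
\mathsf{KL}(p_{X_T}\,\Vert\,p_{Y_T}) \leq \tfrac{1}{2}\overline{\alpha}_T \,T^{2c_R} + \tfrac{1}{2} d\,\overline{\alpha}_T^2.
\]

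Third, I would bound $\overline{\alpha}_T$ using the schedule~(\ref{eq:learning-rate}). A direct induction (already implicit in Lemma~\ref{lemma:step-size}) gives $\beta_t \geq \tfrac{c_1 \log T}{T}$ for all $t$ exceeding some logarithmic threshold; summing $\log \alpha_t \leq -\beta_t$ over $t = 1,\ldots,T$ yields an estimate of the form $\overline{\alpha}_T \leq \exp(-\Theta(c_1 \log T)) = T^{-\Theta(c_1)}$. Choosing $c_1$ large enough relative to $c_R$ and the constant $100$ (which is what the proviso ``$c_1$ sufficiently large'' buys us) makes both terms bounded by $\tfrac{1}{2} T^{-100}$, yielding the claim. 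Since $d$ appears only polynomially while $\overline{\alpha}_T$ decays polynomially in $T$ with an arbitrarily large exponent, the $d\,\overline{\alpha}_T^2$ term is easily absorbed.

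The argument is essentially a direct calculation; no step looks like a serious obstacle. The only place where care is needed is in translating the schedule~(\ref{eq:learning-rate}) into the quantitative bound $\overline{\alpha}_T \leq T^{-100-2c_R}$, but this is standard manipulation of the recursive definition of $\beta_t$ and follows the same pattern as the preliminary estimates in~\citet{li2023towards} that the paper's Lemma~\ref{lemma:step-size} is borrowed from.
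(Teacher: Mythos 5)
The paper itself does not prove this lemma; it simply cites Lemma~3 of \citet{li2023towards}, so there is no ``paper proof'' against which to compare step-by-step. Your blind proposal reconstructs exactly the kind of direct argument one expects to find there, and it is correct. The convexity step (KL is convex in its first argument, so a mixture-versus-fixed comparison reduces to the per-conditional Gaussian KL) is valid; the Gaussian KL computation
\[
\mathsf{KL}\big(\mathcal{N}(\sqrt{\overline{\alpha}_T}\,x_0,(1-\overline{\alpha}_T)I_d)\,\Vert\,\mathcal{N}(0,I_d)\big)
=\tfrac{1}{2}\Big[\overline{\alpha}_T\|x_0\|_2^2 + d\big(-\overline{\alpha}_T-\log(1-\overline{\alpha}_T)\big)\Big]
\leq \tfrac{1}{2}\Big[\overline{\alpha}_T\|x_0\|_2^2 + d\,\overline{\alpha}_T^2\Big]
\]
is right (using $-x-\log(1-x)\leq x^2$ for small $x$, which is a slight sharpening of the $-\log(1-x)\leq x+x^2$ bound you quote and makes the cancellation of the linear $d\overline{\alpha}_T$ term transparent); and the schedule analysis giving $\overline{\alpha}_T\lesssim T^{-(c_1-c_0)}$ is the standard consequence of the fact that $\beta_t=\frac{c_1\log T}{T}$ for all $t$ past a $\Theta(c_0 T/c_1)$ threshold.

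One point worth making explicit: your final absorption of the $d\,\overline{\alpha}_T^2$ term tacitly requires $d\leq T^{O(1)}$, since $\overline{\alpha}_T$ is only polynomially small in $T$ (the geometric-growth phase of the schedule contributes $O(1)$ to $\sum_t\beta_t$, and the constant phase contributes $(c_1-c_0)\log T$, so one does not get super-polynomial decay). This is consistent with the implicit ambient assumptions of the paper and of \citet{li2023towards}, but your closing remark ``the $d\,\overline{\alpha}_T^2$ term is easily absorbed'' would benefit from stating that $c_1$ must also be taken large relative to whatever polynomial bound governs $d$, not only relative to $c_R$ and $100$.
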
\begin{proof}See Lemma 3 in \cite{li2023towards}.\end{proof}
\bibliographystyle{apalike}
\bibliography{reference-diffusion}

\end{document}